\definecolor{mypink1}{rgb}{0.858, 0.188, 0.478}
\definecolor{mypink2}{RGB}{219, 48, 122}
\definecolor{mypink3}{cmyk}{0, 0.7808, 0.4429, 0.1412}
\definecolor{mygray}{gray}{0.6}
\newcommand{\mR}{\mathbb{R}}
\newcommand{\mI}{\mathbb{I}}
\newcommand{\E}{\mathbb E}
\newcommand{\ip}[1]{\langle #1 \rangle}
\newcommand{\Prob}[1]{\mathbb P\left(#1\right)}
\def\botao{\color{blue}}
\def\red{\color{red}}
\renewcommand{\epsilon}{\varepsilon}
\begin{document}

%

%

\twocolumn[

\aistatstitle{Adaptive Exploration in Linear Contextual Bandit}

\aistatsauthor{ Botao Hao \And Tor Lattimore \And  Csaba Szepesv\'ari}

\aistatsaddress{ Princeton University \And  Deepmind \And Deepmind and University of Alberta} 
]

\begin{abstract}
Contextual bandits serve as a fundamental model for many sequential decision making tasks. 
The most popular theoretically justified approaches are based on the optimism principle.
While these algorithms can
be practical, they are known to be suboptimal asymptotically. 
On the other hand, existing asymptotically optimal algorithms for this problem
do not exploit the linear structure in an optimal way and suffer from lower-order terms that dominate the regret in all practically interesting
regimes. We start to bridge the gap by designing an algorithm that is asymptotically optimal and has good finite-time empirical performance.
At the same time, we make connections to the recent literature on when exploration-free methods are effective. Indeed, if the distribution of contexts
is well behaved, then our algorithm acts mostly greedily and enjoys sub-logarithmic regret. Furthermore, our approach is adaptive in the sense that
it automatically detects the nice case. Numerical results demonstrate significant regret reductions by our method relative to several baselines.
\end{abstract}

\section{INTRODUCTION}\label{sec:intro}

Stochastic contextual linear bandits, the problem we consider, is interesting due to its rich structure and also because of its potential applications, e.g., in online recommendation systems
\citep{agarwal2009online,Li:2010:CAP:1772690.1772758}. 
In this paper we propose a new algorithm for this problem 
that is asymptotically optimal, computationally efficient 
and empirically well-behaved in finite-time regimes. 
As a consequence of asymptotic optimality, the algorithm adapts to easy instances
where it achieves sub-logarithmic regret.

Popular approaches for regret minimisation in contextual bandits 
include $\epsilon$-greedy \citep{langford2007epoch}, explicit optimism-based algorithms \citep{dani2008stochastic,rusmevichientong2010linearly, chu2011contextual, abbasi2011improved}, and implicit ones, such as Thompson sampling \citep{agrawal2013thompson}.
Although these algorithms enjoy near-optimal worst-case guarantees and can be quite practical, they are known to be arbitrarily 
suboptimal in the asymptotic regime, even in the non-contextual linear bandit \citep{lattimore2017end}.

We propose an optimisation-based algorithm that estimates and tracks the optimal allocation for each context/action pair. This technique is most well known
for its effectiveness in pure exploration \citep[and others]{ChanLai06,GK16,DKM19}.
The approach has been used in regret minimisation in linear bandits with fixed action sets \citep{lattimore2017end} and structured bandits \citep{combes2017minimal}.
The last two articles provide algorithms for the non-contextual case and hence cannot be applied directly to our setting.
More importantly, however, the algorithms are not practical. The first algorithm uses a complicated three-phase construction that
barely updates its estimates. The second algorithm is not designed to handle large action spaces and has a `lower-order' term in the regret
that depends linearly on the number of actions and dominates the regret in all practical regimes. This lower-order term is not merely a product of the analysis, but also reflected in the experiments (see Section \ref{sec:large} for details).

The most closely related work is by \citet{ok2018exploration} who study a reinforcement learning setting. 
A stochastic contextual bandit can be viewed as a Markov decision process where the state represents the context and the transition is independent of the action.
The structured nature of the mentioned paper means our setting is covered by their algorithm.
Again, however, the algorithm is too general to exploit the specific structure of the contextual bandit problem. Their algorithm is asymptotically
optimal, but suffers from lower-order terms that are linear in the number of actions and dominate the regret in all practically interesting regimes. In contrast, our algorithm is asymptotically optimal, but also practical in finite-horizon regimes, as will be demonstrated by our experiments.

The contextual linear bandit also serves as an interesting example where the asymptotics of the problem are not indicative of what should
be expected in finite-time (see the second scenario in Section \ref{sec:changing}).
This is in contrast to many other bandit models where the asymptotic regret is also roughly optimal in finite time \citep{lattimore2018bandit}.
There is an important lesson here. Designing algorithms that optimize for the asymptotic regret may make huge sacrifices in finite-time.

Another interesting phenomenon is related to the idea of `natural exploration' that occurs in contextual bandits \citep{bastani2017mostly,kannan2018smoothed}.
A number of authors have started to investigate the striking performance of greedy algorithms in contextual bandits. In most bandit settings the greedy policy
does not explore sufficiently and suffers linear regret. In some contextual bandit problems, however, the changing features ensure the algorithm cannot
help but explore. Our algorithm and analysis highlights this effect (see Section \ref{subsec:sub-log} for details). 
If the context distribution is sufficiently rich, then the algorithm is eventually almost
completely greedy and enjoys sub-logarithmic regret. 
As opposed to the cited previous works, our algorithm achieves this under the cited favourable conditions \emph{while} at the same time it satisfies the standard optimality guarantees when the favourable conditions do not hold.
As another contribution, we prove that algorithms based on optimism, similarly to the new algorithm, also enjoy sub-logarithmic regret in the rich-context distribution setting (\cref{thm:ucb}), and hence differences appear in lower order terms only between these algorithms.

The rest of the paper is organized as follows. 
We first introduce the problem setting (Section \ref{sec:setting}),
which we follow by presenting our asymptotic lower bound (Section \ref{sec:lower_bound}).
Section \ref{sec:algorithm} introduces our new algorithm, which is claimed to match the lower bound. 
A proof sketch of this claim is presented in the same section.
Section \ref{sec:experiments} presents experiments to illuminate the behaviour of the new algorithm in comparison to its strongest competitors.
Section \ref{sec:discussion} discusses remaining notable open questions.

\textbf{Notation} 
Let $[n] = \{1,2, \ldots, n\}$. For a  vector $x$ and positive semidefinite matrix $A$ we let $\|x\|_A=\sqrt{x^{\top}Ax}$. 
The cardinality of a set $\cA$ is denoted by $|\cA|$.

\section{PROBLEM SETTING}\label{sec:setting}

We consider the stochastic $K$-armed contextual linear bandit with a horizon of $n$ rounds and $M$ possible contexts. 
The assumption that the contexts are discrete cannot be dropped but as we shall at least $M$ will not play an important role in the regret bounds.
This assumption would hold for example in a recommender system if users are clustered into finitely many groups.
For each context $m \in [M]$ there is a known feature/action set $\cA^m\subset \mathbb R^d$ with $|\cA^m| = K$.
The interaction protocol is as follows. First the environment samples a sequence of independent contexts $(c_t)_{t=1}^n$ from an unknown distribution $p$ over $[M]$ and each context is assumed to appear with positive probability.
At the start of round $t$ the context $c_t$ is revealed to the learner, who may use their observations to choose an action $X_t \in \cA_t = \cA^{c_t}$.
The reward is
\begin{equation*}
Y_t = \langle X_t,\theta \rangle + \eta_t\,,
\end{equation*}
where $(\eta_t)_{t=1}^n$ is a sequence of independent standard Gaussian random variables and $\theta\in\mathbb R^d$ is an unknown parameter. 
The Gaussian assumption can be relaxed to conditional sub-Gaussian assumption for the regret upper bound, but is necessary for the regret lower bound. 
Throughout, we consider a frequentist setting in the sense that $\theta$ is fixed. 
For simplicity, we assume each $\cA^{m} $ spans $\mathbb R^d$ and $\|x\|_2\leq1$ for all $x\in \cup_{m}\cA^{m}$. 

The performance metric is the cumulative expected regret, which measures the difference between the expected cumulative reward collected by
the omniscient policy that knows $\theta$ and the learner's expected cumulative reward.
The optimal arm associated with context $m$ is $x_m^* = \argmax_{x\in\cA^m}\langle x, \theta\rangle$.
Then the expected cumulative regret of a policy $\pi$ when facing the bandit determined by $\theta$ is
\begin{equation*}
    R_{\theta}^{\pi}(n) = \mathbb E\left[\sum_{t=1}^n\langle x_{c_t}^*, \theta\rangle-\sum_{t=1}^nY_t \right].
\end{equation*}
Note that this cumulative regret also depends on the context distribution $p$ and action sets. They are omitted
from the notation to reduce clutter and because there will never be ambiguity.

\section{ASYMPTOTIC LOWER BOUND}\label{sec:lower_bound}

We investigate the fundamental limit of linear contextual bandit by deriving its instance-dependent asymptotic lower bound.  
First, we define the class of policies that are taken into consideration.

\begin{definition}[Consistent Policy]\label{def:consis_policy}
A policy $\pi$ is called consistent 
if the regret is subpolynomial for any bandit in that class and all context distributions: 
\begin{equation}\label{eqn:consistent_policy}
    R_{\theta}^{\pi}(n) = o(n^\epsilon), \ \text{for all} \ \epsilon>0 \ \text{and all} \ \theta\in\mathbb R^d.
\end{equation}
\end{definition}

The next lemma is the key ingredient in proving the asymptotic lower bound.
Given a context $m$ and $x \in \cA^m$ let $\Delta_{x}^{m} = \langle x_m^*-x, \theta \rangle$ be the suboptimality gap. 
Furthermore, let $\Delta_{\min} = \min_{m\in[M]}\min_{x\in\cA^{m}, \Delta_x^m>0}\Delta_{x}^{m}$.

\begin{lemma}\label{lemma:confidence_width}
Assume that $p(m) > 0$ for all $m \in [M]$ and
that $x_m^*$ is uniquely defined for each context $m$ and let $\pi$ be consistent.
Then for sufficiently large $n$ the expected covariance matrix
\begin{equation}\label{def:covariance_matrix}
    \bar{G}_n = \mathbb E\left[\sum_{t=1}^nX_t X_t^{\top}\right],
\end{equation}
is invertible. Furthermore, for any context $m$ and any arm $x\in\cA^{m}$,
\begin{equation}\label{eqn:weight_norm}
    \limsup_{n\to \infty} \log (n)\big\|x-x^*_m\big\|_{\bar{G}_n^{-1}}^2\leq \frac{(\Delta_{x}^{m})^2}{2} \,.
\end{equation}
\end{lemma}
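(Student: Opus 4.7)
The plan follows the classical Lai-Robbins / Graves-Lai change-of-measure argument, adapted to the contextual linear structure. For the invertibility claim, suppose toward contradiction that $\bar G_n$ is singular for arbitrarily large $n$. Then along a subsequence there are unit vectors $u_n$ with $u_n^\top \bar G_n u_n = 0$, which forces $\langle X_t, u_n \rangle = 0$ a.s.\ for $t \leq n$; passing to a convergent subsequence yields a unit vector $u$ with $\langle X_t, u \rangle = 0$ a.s.\ for every $t$. Since each $\cA^m$ spans $\mathbb R^d$, some $x \in \cA^m$ has $\langle x, u \rangle \neq 0$. Set $\theta' = \theta + \lambda u$ with $\lambda$ large enough that this $x$ is the unique optimum in $\cA^m$ under $\theta'$. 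Because $\langle X_t, u \rangle = 0$, the reward $\langle X_t, \theta' \rangle + \eta_t$ equals $\langle X_t, \theta \rangle + \eta_t$, so $\mathbb P_\theta = \mathbb P_{\theta'}$; but under $\theta'$ the policy still never plays the optimal arm in context $m$, incurring linear regret and contradicting consistency.

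For the weighted-norm bound, fix $m \in [M]$ and suboptimal $x \in \cA^m$, write $\Delta = \Delta_x^m$, and for each small $\epsilon > 0$ define the alternative parameter
\begin{equation*}
\theta'_n = \theta + \frac{(\Delta + \epsilon)\,\bar G_n^{-1}(x - x_m^*)}{\|x - x_m^*\|_{\bar G_n^{-1}}^2},
\end{equation*}
which satisfies $\langle x, \theta'_n\rangle = \langle x_m^*, \theta'_n\rangle + \epsilon$; hence the best arm of $\cA^m$ under $\theta'_n$, call it $x^{\star}_n$, beats $x_m^*$ by at least $\epsilon$. The Gaussian divergence decomposition yields
\begin{equation*}
\mathrm{KL}(\mathbb P_\theta \,\|\, \mathbb P_{\theta'_n}) \;=\; \tfrac{1}{2}\|\theta - \theta'_n\|_{\bar G_n}^2 \;=\; \frac{(\Delta+\epsilon)^2}{2\,\|x - x_m^*\|_{\bar G_n^{-1}}^2}.
\end{equation*}
Let $N_{m,x'}(n)$ count plays of arm $x'$ in context $m$ and $T_m(n) = \sum_{x'} N_{m,x'}(n)$, and consider the event $A_n = \{N_{m, x_m^*}(n) \geq p(m) n/2\}$. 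Consistency at $\theta$ combined with gap $\Delta_{\min}$ gives $\mathbb E_\theta[T_m(n) - N_{m, x_m^*}(n)] = o(n^\delta)$, and a Chernoff bound on $T_m(n) \sim \mathrm{Bin}(n, p(m))$ yields $\mathbb P_\theta(A_n^c) = o(n^{\delta-1})$. Similarly, consistency at $\theta'_n$ with gap $\geq \epsilon$ gives $\mathbb E_{\theta'_n}[N_{m, x_m^*}(n)] = o(n^\delta)$, so $\mathbb P_{\theta'_n}(A_n) = o(n^{\delta-1})$. The Bretagnolle-Huber inequality then yields
\begin{equation*}
\mathrm{KL}(\mathbb P_\theta \,\|\, \mathbb P_{\theta'_n}) \;\geq\; \log\frac{1}{2(\mathbb P_\theta(A_n^c) + \mathbb P_{\theta'_n}(A_n))} \;\geq\; (1 - \delta)\log n - O(1),
\end{equation*}
so $(\Delta + \epsilon)^2/\|x - x_m^*\|_{\bar G_n^{-1}}^2 \geq 2(1-\delta)\log n - O(1)$. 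Rearranging, taking $\limsup_n$, and sending $\delta, \epsilon \to 0$ gives the claim.

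The principal technical subtlety is that $\theta'_n$ depends on $n$, whereas the consistency hypothesis provides only pointwise guarantees in $\theta$; applying consistency at $\theta'_n$ at time $n$ requires controlling uniformity along the sequence. This is handled by verifying that $\theta'_n \to \theta$ in norm: Step~1 together with the growth of $\lambda_{\min}(\bar G_n)$ implies $\|\bar G_n^{-1}\|_{\mathrm{op}} \to 0$, whence $\|\theta'_n - \theta\|_2 \to 0$, and consequently the gap structure under $\theta'_n$ is eventually an $o(1)$-perturbation of that under $\theta$ apart from the deliberate flip at $(m, x)$. This suffices to bound $\mathbb E_{\theta'_n}[N_{m, x_m^*}(n)]$ by $o(n^\delta)$ uniformly along the sequence; the remainder of the argument is routine bookkeeping.
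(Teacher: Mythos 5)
Your construction differs from the paper's at the one point that actually matters, and the difference opens a genuine gap. The paper's proof uses a \emph{fixed} alternative $\tilde\theta = \theta + \frac{H(x-x_m^*)}{\|x-x_m^*\|_H^2}(\Delta_x^m+\varepsilon)$ built from a fixed positive semidefinite $H$ (ultimately a cluster point of $\bar G_n^{-1}/\|\bar G_n^{-1}\|$), so that consistency — which is a pointwise-in-$\theta$ hypothesis — applies verbatim to $R_{\tilde\theta}^\pi(n)$; the price is that $\tfrac12\|\theta-\tilde\theta\|_{\bar G_n}^2$ equals $\frac{(\Delta_x^m+\varepsilon)^2}{2\|x-x_m^*\|^2_{\bar G_n^{-1}}}$ only up to a correction factor $\rho_n(H)$, and the entire content of the paper's Step Four and its Lemma~\ref{lem:techlemma} is to show $\rho_n(H)\to 1$ for the right $H$. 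You instead take the $n$-dependent alternative $\theta'_n = \theta + (\Delta+\epsilon)\bar G_n^{-1}(x-x_m^*)/\|x-x_m^*\|^2_{\bar G_n^{-1}}$, which makes the KL identity exact and removes the need for $\rho_n$, but then you must bound $\mathbb E_{\theta'_n}[N_{m,x_m^*}(n)]$ by invoking consistency \emph{at $\theta'_n$ at horizon $n$}. Definition~\ref{def:consis_policy} gives $R_{\theta'}^\pi(n)=o(n^\delta)$ for each fixed $\theta'$ with no uniformity over $\theta'$, and $\theta'_n\to\theta$ does not rescue this: a consistent policy can perfectly well satisfy $R_{\theta_n}^\pi(n)=\Omega(\sqrt n)$ along a sequence $\theta_n\to\theta$ (this is exactly how minimax lower bounds are proved — the worst-case instances converge to a degenerate one). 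The fact that the flipped arm's advantage is a fixed $\epsilon$ under $\theta'_n$ converts a regret bound into a bound on $N_{m,x_m^*}(n)$, but it does not supply the regret bound itself. Your closing paragraph asserts that continuity of the gap structure ``suffices''; that is precisely the non-trivial step, and as stated it is not a proof. To close the gap you would either need to strengthen the consistency definition to a locally uniform one, or do what the paper does: freeze $H$, pass to a subsequence, and control $\rho_n(H)$.

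Two smaller points. First, your invertibility argument is fine as far as it goes, but both the paper's Lemma~\ref{lem:techlemma} and your own uniformity discussion need the stronger fact $\liminf_n \lambda_{\min}(\bar G_n)/\log(n)>0$ (so that $\|\bar G_n^{-1}\|\to 0$), which mere eventual invertibility does not give; the paper cites \cite{lattimore2017end} for this, and you would need a quantitative version of your change-of-measure argument to get it. Second, your bookkeeping with the events $A_n$ and the Chernoff bound on the number of visits to context $m$ matches the paper's Step Two (events $\cD$ and $\cB$) essentially exactly, so that part is sound.
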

The proof is deferred to Appendix \ref{sec:lemma_confidence} in the supplementary material. 
Intuitively, the lemma shows that any consistent policy must collect sufficient statistical evidence at confidence level $1 - 1/n$
that suboptimal arms really are suboptimal.
This corresponds to ensuring that the width of an appropriate confidence interval
$\sqrt{2\log (n)}\|x-x^*_m\|_{\bar{G}_n^{-1}}$ is approximately smaller than the sub-optimality gap $\Delta_x^m$.

\begin{theorem}[Asymptotic Lower Bound]\label{thm:lower_bound}
Under the same conditions as \cref{lemma:confidence_width},
\begin{equation}\label{eqn:lower_bound}
    \liminf_{n\to\infty}\frac{R_{\theta}^{\pi}(n)}{\log(n)}\geq \cC(\theta,\cA^1,\ldots, \cA^M) \,,
\end{equation}
where $\cC(\theta, \cA^1,\ldots, \cA^M)$ is defined as the optimal value of the following optimisation problem: 
\begin{equation}\label{eqn:opti_problem}
    \inf_{\alpha_{x,m}\in[0, \infty]}\sum_{m=1}^M\sum_{x\in\cA^m}\alpha_{x,m}\Delta_x^m 
\end{equation}
subject to the constraint that for any context $m$ and suboptimal arm $x\in \cA^{m}$,
\begin{equation}\label{eqn:constraint}
    x^{\top}\left(\sum_{m=1}^M\sum_{x\in\cA^m}\alpha_{x,m}xx^{\top}\right)^{-1}x \leq \frac{(\Delta_x^m)^2}{2} \,.
\end{equation}
\end{theorem}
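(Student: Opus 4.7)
The strategy is to turn the lemma's norm bound into a lower bound for the optimisation problem via the standard occupation-measure trick. Let $T_x^m(n) := \sum_{t=1}^n \mathbf{1}\{c_t = m,\, X_t = x\}$ denote the number of pulls of arm $x$ in context $m$. Linearity of expectation gives the decomposition
\begin{equation*}
R_\theta^\pi(n) \;=\; \sum_{m=1}^M \sum_{x \in \cA^m} \Delta_x^m\, \E[T_x^m(n)],
\end{equation*}
and, after setting $\alpha_{x,m}^{(n)} := \E[T_x^m(n)]/\log n$, both the objective $R_\theta^\pi(n)/\log n$ and the constraint matrix
\begin{equation*}
\bar G_n/\log n \;=\; \sum_{m,x} \alpha_{x,m}^{(n)}\, x x^\top
\end{equation*}
become functions of the same scaled occupation vector. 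The theorem therefore reduces to the statement that every subsequential limit of $\alpha^{(n)}$ is feasible for \eqref{eqn:constraint}, so that the limiting value of the objective is at least $\cC(\theta, \cA^1, \ldots, \cA^M)$.

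Feasibility in the limit follows from \cref{lemma:confidence_width} almost by inspection: multiplying \eqref{eqn:weight_norm} by $\log n$ gives, for every context $m$ and every suboptimal $x \in \cA^m$,
\begin{equation*}
\limsup_{n\to\infty} (x - x_m^*)^{\top} \Bigl(\sum_{m',x'} \alpha_{x',m'}^{(n)} x'(x')^{\top}\Bigr)^{-1}(x - x_m^*) \;\le\; \tfrac{1}{2}(\Delta_x^m)^2.
\end{equation*}
Now pick a subsequence $(n_k)$ along which $R_\theta^\pi(n_k)/\log n_k \to L := \liminf_n R_\theta^\pi(n)/\log n$. Because $\alpha_{x,m}^{(n_k)} \Delta_x^m \le R_\theta^\pi(n_k)/\log n_k$ stays bounded for every suboptimal pair, a diagonal extraction produces a limit $\alpha^{\star}$ with $\alpha_{x,m}^{\star} \in [0,\infty)$ at suboptimal pairs and $\alpha_{x_m^*,m}^{\star} \in [0,\infty]$ at the optimal ones. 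This limit is feasible for the programme, attains objective $L$, and hence $L \ge \cC(\theta, \cA^1, \ldots, \cA^M)$, as required.

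\textbf{Main obstacle.} The technically delicate point is the treatment of the optimal-arm coefficients $\alpha_{x_m^*,m}^{(n)}$, which typically grow like $n/\log n$ and so diverge. Their divergence blows up the Gram matrix along the directions $x_m^*$ and makes the naive limit of $\bar G_n^{-1}$ degenerate. A Sherman--Morrison reduction shows that in the limit $V^{-1} x_m^* \to 0$, so $(x - x_m^*)^{\top} V^{-1}(x - x_m^*)$ collapses to $x^{\top} V^{-1} x$; this is precisely why the constraint \eqref{eqn:constraint} can be written in terms of $x$ rather than $x - x_m^*$ and why interpreting the optimal-arm $\alpha^{\star}$ as a point in the extended set $[0,\infty]$ is consistent. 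Once this reduction is in place, the remaining ingredients---subsequence extraction, lower semicontinuity of the linear objective on the boundary, and invoking \cref{lemma:confidence_width}---are routine.
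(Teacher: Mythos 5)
Your argument is correct and is essentially the proof the paper intends: the paper omits the details and defers to Corollary 2 of \citet{lattimore2017end}, whose argument is exactly your occupation-measure reduction --- write the regret as $\sum_{m,x}\Delta_x^m\E[T_x^m(n)]$, scale by $\log n$, use \cref{lemma:confidence_width} to show every subsequential limit of the scaled counts is feasible for \eqref{eqn:opti_problem}, and handle the diverging optimal-arm coordinates via the $[0,\infty]$ convention so that $(x-x_m^*)^\top V^{-1}(x-x_m^*)$ collapses to $x^\top V^{-1}x$. Your identification of the Sherman--Morrison collapse as the delicate step matches the role of the paper's remark on inverting matrices with infinite entries, so no substantive divergence from the intended proof.
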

Given the result in Lemma \ref{lemma:confidence_width}, the proof of Theorem \ref{thm:lower_bound} follows exactly the same idea of the proof of Corollary 2 in \cite{lattimore2017end} and thus is omitted here. Later on we will prove a matching upper bound in Theorem \ref{thm:upper_bound} and argue that our asymtotical lower bound is sharp.

\begin{remark}
In the above we adopt the convention that $\infty \times 0 = 0$ so that $\alpha_{x,m} \Delta_x^m = 0$ whenever $\Delta_x^m = 0$.
The inverse of a matrix with infinite entries is defined by passing to the limit in the obvious way, and is not technically an inverse.
\end{remark}

\begin{remark}\label{remark:allocation}
Let us denote $\{\alpha_{x,m}^{*}\}_{x\in\cA^{m}, m\in[M]}$ as an optimal solution to the above optimisation problem. 
It serves as the \emph{optimal allocation rule} for each arm such that the cumulative regret is minimized subject to the width
of the confidence interval of each sub-optimal arm is small. Specifically, $\alpha_{x,m}^*\log(n)$ can be interpreted as the 
approximate optimal number of times arm $x$ should be played having observed context $m$.
\end{remark}

\begin{remark}
Our lower bound may also be derived from a more general bound of \citet{ok2018exploration}, since a stochastic contextual bandit can be viewed as a kind of Markov decision process. We use an alternative proof technique and the two lower bound statements have different forms. The proof is included for completeness.
\end{remark}

\begin{example}
When $M=1$ and $\cA^1=\{e_1, \ldots, e_d\}$ is the standard basis vectors, the problem reduces to classical 
multi-armed bandit and $\cC(\theta, \cA^1) = \sum_{x \in \cA^1, \Delta_x > 0} 2/\Delta_x$, which
matches the well-known asymptotic lower bound by \cite{lai1985asymptotically}.
\end{example}

The constant $\cC(\theta, \cA^1, \ldots, \cA^M)$ depends on both the unknown parameter $\theta$ and the action sets $\cA^1,\ldots,\cA^M$, but \textit{not}
the context distribution $p$. In this sense there is a certain discontinuity in the hardness measure $\cC$ as a function of the context distribution.
More precisely, problems where $p(m)$ is arbitrarily close to zero may have different regret asymptotically than the problem obtained by removing context $m$ entirely.
Clearly as $p(m)$ tends to zero the $m$th context is observed with vanishingly small probability in finite time 
and hence the asymptotically optimal regret may not be representative of the finite-time hardness. 

\subsection{Sub-logarithmic regret} \label{subsec:sub-log}
Our matching upper and lower bounds reveal the
interesting phenomenon that if the action sets satisfy certain conditions, then sub-logarithmic regret is possible. 
Consider the scenario that the set of optimal arms $\{x_1^*, \ldots, x_M^*\}$ spans $\mathbb R^d$ \footnote{This condition is both sufficient and necessary. More precisely, sub-logarithmic regret is possible if and only if the optimal arms span the space that is spanned by all the available actions. Since in this paper we assume the action set spans $\mathbb R^d$ for simplicity, the two conditions are equivalent.}. 
Let $\Lambda \in \mathbb R$ be a large constant to be defined subsequently and for each context $m$ and arm $x \in \cA^m$, let $\alpha_{x,m}$ be 0 if $x\neq x_m^*$, and be $\Lambda$ if else.
Then,
\begin{equation}\label{eqn:eqn1}
    \sum_{m=1}^M\sum_{x\in\cA^m}\alpha_{x,m} x x^{\top} = \Lambda \sum_{m=1}^M x_m^*x_m^{*\top} \,.
\end{equation}
Since the set of optimal arms spans $\mathbb R^d$ it holds that for any context $m$ and arm $x\in\cA^m$,
\begin{equation}\label{eqn:eqn2}
   x^{\top}\left(\sum_{m=1}^Mx_m^*x_m^{*\top}\right)^{-1} x < \infty \,.
\end{equation}
Combining \cref{eqn:eqn1} and \cref{eqn:eqn2},
\begin{align*}
&x^{\top}\left(\sum_{m=1}^M\sum_{x\in\cA^m}\alpha_{x,m} x x^{\top}\right)^{-1} x \\
    &\qquad= \Lambda^{-1}x^{\top}\left(\sum_{m=1}^Mx_m^*x_m^{*\top}\right)^{-1}x \,.
\end{align*}
Hence, the constraint in \cref{eqn:constraint} is satisfied for sufficiently large $\Lambda$. 
Since with this choice of $(\alpha_{x,m})$ we have $\sum_{m=1}^M\sum_{x\in\cA^m}\alpha_{x,m}\Delta_x^m =0$, it follows that $\cC(\theta, \cA^1,\ldots, \cA^M) = 0$. 
Therefore our upper bound will show that when the set of optimal actions
$\{x_1^*, \ldots, x_M^*\}$ spans $\mathbb R^d$ our new algorithm satisfies
\begin{equation*}
    \liminf_{n\to\infty}\frac{R_{\theta}^{\pi}(n)}{\log(n)} = 0 \,.
\end{equation*}
\begin{remark}\label{remark:sub_log}
The choice of $\alpha_{x,m}$ above shows that when $\{x_1^*, \ldots, x_M^*\}$ span $\mathbb R^d$, then an asymptotically optimal algorithm only needs to play suboptimal arms
sub-logarithmically often, which means the algorithm is eventually very close to the greedy algorithm. \citet{bastani2017mostly,kannan2018smoothed} also investigate the striking performance of greedy algorithms in contextual bandits. However, \cite{bastani2017mostly} assume the covariate diversity on the context distribution while \cite{kannan2018smoothed} assume the context is artificially perturbed with noise
-- these assumptions make these works brittle. 
In addition, \cite{bastani2017mostly} only provide a rate-optimal algorithm while our algorithm is optimal in constants (see Theorem \ref{thm:upper_bound} for details).
\end{remark}


As claimed in the introduction, we also prove that algorithms based on optimism can enjoy bounded regret when
the set of optimal actions spans the space of all actions. The proof of the following theorem is given in \cref{sec:thm:ucb}.

\begin{theorem}\label{thm:ucb}
Consider the policy $\pi$ that plays optimistically by
\begin{align*}
X_t = \argmax_{x \in \cA^{c_t}} \ip{\hat \theta_{t-1}, x} + \|x\|_{G_t^{-1}} \beta_t^{1/2}\,.
\end{align*}
Suppose that $\theta$ is such that $\{x_1^*, \ldots, x_M^*\}$ spans $\mathbb R^d$. Then, 
for suitable $(\beta_t)_{t=1}^n$ with $\beta_t = O(d \log(t))$, it holds that
$\limsup_{n\to\infty} R_\theta^\pi(n) < \infty$.
\end{theorem}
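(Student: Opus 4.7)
The plan is to identify a random time $\tau$ with $\mathbb{E}[\tau] < \infty$ such that the UCB rule plays the optimal arm on every round $t \geq \tau$. Since per-round regret is bounded by $\Delta_{\max} := \max_{x,m} \Delta_x^m$, this immediately gives $R_\theta^\pi(n) \leq \Delta_{\max}\,\mathbb{E}[\tau]$ uniformly in $n$, which is the required conclusion. The key observation that makes this possible is that, under the spanning hypothesis, the optimal arms alone force $\lambda_{\min}(G_t)$ to grow linearly in $t$, so the UCB confidence radii shrink at rate $t^{-1/2}$ rather than the rate $t^{-1/4}$ one would get from the elliptical potential, and eventually fall strictly below $\Delta_{\min}$.

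The first ingredient is the standard self-normalised deviation bound. With $\beta_t = O(d\log t)$, standard concentration yields a good event $\mathcal{E}$ of probability $1-O(1/t^2)$ on which $\|\hat\theta_{s-1}-\theta\|_{G_s}\leq\beta_s^{1/2}$ for every $s\leq t$. On $\mathcal{E}$ the UCB index is optimistic, giving $\Delta_{X_s}^{c_s}\leq 2\|X_s\|_{G_s^{-1}}\beta_s^{1/2}$ at every round. Combined with $\Delta_{X_s}^{c_s}\geq\Delta_{\min}$ on suboptimal rounds and the elliptical potential bound $\sum_{s=1}^t\|X_s\|_{G_s^{-1}}^2=O(d\log t)$, this yields a coarse logarithmic bound on the number of suboptimal plays:
\begin{align*}
N_{\mathrm{sub}}(t) \;=\; O\!\left(\frac{d^2\log^2 t}{\Delta_{\min}^2}\right).
\end{align*}

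The heart of the argument upgrades this to bounded regret via the spanning hypothesis. Set $\Sigma^* = \sum_{m=1}^M p(m)\,x_m^*x_m^{*\top}$, which is positive definite, with $\lambda_* := \lambda_{\min}(\Sigma^*)>0$. Decompose
\begin{align*}
G_t \;\succeq\; \sum_{s=1}^t x_{c_s}^* x_{c_s}^{*\top} \;-\; \sum_{\substack{s\leq t\\ X_s\neq x_{c_s}^*}} x_{c_s}^* x_{c_s}^{*\top}.
\end{align*}
The first sum is an i.i.d.\ sum of bounded PSD matrices (since $c_s$ is drawn independently of the algorithm) with expectation $t\Sigma^*$, so matrix Chernoff/Bernstein gives $\lambda_{\min}\geq t\lambda_*/2$ for all $t$ past a deterministic threshold, with failure probability exponentially small in $t$. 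The second sum has operator norm at most $N_{\mathrm{sub}}(t)=O(\log^2 t)$ by the previous step. Thus on the intersection of $\mathcal{E}$ with a matrix-concentration event $\mathcal{F}$, $\lambda_{\min}(G_t)\geq t\lambda_*/4$ for all $t\geq t_1$ where $t_1$ depends only on $d,\lambda_*,\Delta_{\min}$. The UCB radius is then bounded by $2\|x\|_{G_t^{-1}}\beta_t^{1/2}=O(\sqrt{d\log t/(t\lambda_*)})$, which drops below $\Delta_{\min}$ for all $t\geq\tau$ with $\tau$ a deterministic threshold; by optimism, UCB plays optimally from time $\tau$ onwards on $\mathcal{E}\cap\mathcal{F}$. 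Summing the per-round failure probabilities $\mathbb{P}(\mathcal{E}^c)+\mathbb{P}(\mathcal{F}^c)$ over $t$ gives $\mathbb{E}[\tau]<\infty$, completing the proof.

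The main obstacle is the apparent circularity between the logarithmic bound on $N_{\mathrm{sub}}(t)$ and the linear growth of $\lambda_{\min}(G_t)$: the former is used to control the correction term in the decomposition of $G_t$, while the latter, once established, improves the former. The resolution is that the crude $O(\log^2 t)$ bound is already dwarfed by the $\Omega(t)$ growth of the i.i.d.\ term, so a single application suffices and no bootstrapping is needed. A secondary technical point is that the matrix Bernstein inequality must be applied uniformly in $t$ (e.g., via a union bound over a geometric grid of times) so that the failure probabilities remain summable in $t$ without introducing horizon dependence.
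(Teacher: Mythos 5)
Your proposal is correct and reaches the conclusion by the same overall strategy as the paper --- a self-normalised good event, optimism giving $\Delta_{X_t}^{c_t} \leq 2\beta_t^{1/2}\|X_t\|_{G_t^{-1}}$, linear growth of $\lambda_{\min}(G_t)$ from the spanning hypothesis, and the observation that once the width falls below $\Delta_{\min}$ the per-round regret must vanish --- but the middle step is executed differently. The paper argues that the standard $O(\sqrt{n}\log n)$ regret bound forces every optimal arm to be played linearly often after the last failure time $\tau$, and from this asserts $\|G_t^{-1}\| = O(1/t)$ without further quantification. You instead use the elliptical potential to bound the \emph{number} of suboptimal plays by $O(d^2\log^2 t/\Delta_{\min}^2)$, write $G_t \succeq \sum_{s\le t} x_{c_s}^* x_{c_s}^{*\top} - \sum_{s:\,X_s \neq x_{c_s}^*} x_{c_s}^* x_{c_s}^{*\top}$, and control the first term by matrix Chernoff applied to the i.i.d.\ sum with mean $t\,\Sigma^*$, where $\Sigma^* = \sum_m p(m)x_m^*x_m^{*\top} \succ 0$ precisely because each $p(m)>0$ and the optimal arms span $\mathbb R^d$. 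This buys you an explicit, uniform-in-$t$ statement of the step the paper leaves informal (the paper's ``played linearly often $\Rightarrow$ $\|G_t^{-1}\|=O(1/t)$'' implicitly also needs concentration of the context counts), at the cost of invoking matrix concentration and a union bound over a grid of times. Your observation that the $O(\log^2 t)$ correction is dwarfed by the $\Omega(t)$ i.i.d.\ term, so no bootstrapping between the two bounds is needed, is exactly the right resolution of the apparent circularity; the summability of the failure probabilities then gives $\E[\tau]<\infty$ just as the paper's choice $\Prob{F_t}\le 1/t^3$ does.
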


Note, the choice of $(\beta_t)$ for which the above theorem holds also guarantees the standard $\tilde O(d \sqrt{n})$ minimax
bound for this algorithm, showing that LinUCB can adapt online to this nice case.

\section{OPTIMAL ALLOCATION MATCHING}\label{sec:algorithm}
The instance-dependent asymptotic lower bound provides an optimal allocation rule.
However, the optimal allocation $\{\alpha_{x,m}^*\}_{x,m}$ depends on the unknown sub-optimality gap. In this section, we present a novel matching 
algorithm that simultaneously estimates the unknown parameter $\theta$ using least squares and updates the allocation rule.

\subsection{Algorithm}\label{sec:alg}
Let $N_x(t) = \sum_{s=1}^t\mI(X_s=x)$ be the number of pulls of arm $x$ after round $t$ 
and $G_t = \sum_{s=1}^tX_sX_s^{\top}$. The least squares estimator is $\hat{\theta}_t = G_t^{-1}\sum_{x=1}^t X_sY_s$.
For each context $m$ the estimated sub-optimality gap of arm $x \in \cA^m$ is $\hat{\Delta}_x^m(t) = \max_{y\in\cA^m}\langle y - x, \hat{\theta}_t\rangle$ and
the estimated optimal arm is $\hat{x}_m^*(t) = \argmax_{x\in\cA^m}\langle x, \hat{\theta}_t\rangle$.
The minimum nonzero estimated gap is 
\begin{align*}
\hat{\Delta}_{\min}(t)=\min_{m\in[M]}\min_{x\in\cA^m, \hat{\Delta}_x^m(t)>0}\hat{\Delta}_x^m(t) \,.
\end{align*}

Next, we define a similar optimisation problem as  in \eqref{eqn:opti_problem} but with a different normalisation.

\begin{definition}\label{def:optimi}
Let $f_{n,\delta}$ be the constant given by
\begin{equation}\label{def:ft}
    f_{n,\delta} = 2(1+1/\log(n))\log(1/\delta) + cd\log(d\log(n))\,,
\end{equation}
where $c$ is an absolute constant. We write $f_n=f_{n, 1/n}$. For any $\tilde{\Delta}\in[0, \infty)^{|\cup_m\cA^m|}$ define $T(\tilde{\Delta})$ as a solution of the following optimisation problem:
\begin{equation}\label{eqn:opti}
    \min_{(T_x^m)_{x,m} \in[0, \infty]}\sum_{m=1}^M\sum_{x\in\cA^m}T_x^m\tilde{\Delta}_x^m,
\end{equation}
subject to
\begin{equation*}
    \|x\|_{H_T^{-1}}^2\leq \frac{\Delta_x^2}{f_n}, \forall x\in\cA^{m}, m\in[M].
\end{equation*}
and that $H_T = \sum_{m=1}^M\sum_{x\in\cA^m}T_x^m xx^{\top}$ is invertible. 
\end{definition}
 
If $\tilde{\Delta}$ is an estimate of $\Delta$, we call the solution $T(\tilde{\Delta})$ an \emph{approximated allocation rule} 
in contrast to the \emph{optimal allocation rule} defined in Remark \ref{remark:allocation}.  Our algorithm alternates 
between exploration and exploitation, depending on whether or not all the arms have satisfied the approximated allocation rule.
We are now ready to describe the algorithm, which starts with a brief initialisation phase.

\paragraph{Initialisation}
In the first $d$ rounds the algorithm chooses any action $X_t$ in the action set such that $X_t$ is not in the span of $\{X_1,\ldots,X_{t-1}\}$. This is always
possible by the assumption that $\cA^m$ spans $\mathbb R^d$ for all contexts $m$.
At the end of the initialisation phase $G_t$ is guaranteed to be invertible.

\paragraph{Main phase}
In each round
after the initialisation phase the algorithm checks if the following criterion holds for any $x \in \cA^{c_t}$: 
\begin{equation}\label{eqn:criterion}
   \|x\|_{G_{t-1}^{-1}}^2\leq \max\Big\{\frac{(\hat{\Delta}_{\min}(t-1))^2}{f_n}, \frac{(\hat{\Delta}_x^{c_t}(t-1))^2}{f_n}\Big\}.
\end{equation}
The algorithm exploits if \cref{eqn:criterion} holds and explores otherwise, as explained below.
 
\paragraph{Exploitation.} 
The algorithm exploits by taking the greedy action:
\begin{equation}
    X_t=\argmax_{x\in\cA^{c_t}}x^{\top}\hat{\theta}_{t-1}.
\end{equation}
\paragraph{Exploration.} 
The algorithm explores when \cref{eqn:criterion} does not hold. This means that some actions have not been explored sufficiently.
There are two cases to consider. First, when there exists an arm $x' \in \cA^{c_t}$ such that 
\begin{align*}
N_{x'}(t-1)< \min(T_{x'}^{c_t}(\hat{\Delta}(t-1)), f_n/\hat{\Delta}_{\min}^2(t-1)) ,
\end{align*}
the algorithm then computes two actions 
\begin{align}
b_1 &= \argmin_{x\in \cA^{c_t}}\frac{N_x(t-1)}{\min(T_x^{c_t}(\hat{\Delta}(t-1)),f_n/\hat{\Delta}_{\min}^2(t-1))} \nonumber \\
b_2 &= \argmin_{x\in\cA^{c_t}}N_x(t-1).
\label{eqn:exploration}
\end{align}
Let $s(t)$ be the number of exploration rounds defined in Algorithm \ref{alg:main}.
If $N_{b_2}(t-1)\leq \varepsilon_t s(t)$ the algorithm plays arm $X_t = b_2$ -- a form of forced exploration. Otherwise the algorithm plays arm $X_t = b_1$.
Finally, rounds where an $x' \in \cA^{c_t}$ with the required property does not exist are called \textit{wasted}. In these rounds
the algorithm acts optimistically as LinUCB \citep{abbasi2011improved}: 
\begin{equation}\label{eqn:linucb}
    X_t=\argmax_{x\in\cA^{c_t}}x^{\top}\hat{\theta}_{t-1} + \sqrt{f_{n, 1/(s(t))^2}}\|x\|_{G_{t-1}^{-1}},
\end{equation}
where $f_{n, 1/(s(t))^2}$ is defined in \cref{def:ft}. The complete algorithm is presented in Algorithm \ref{alg:main}.

\begin{remark}
The naive forced exploration can be improved by calculating a barycentric spanner \citep{awerbuch2008online} for each action set and then playing the least played action in the spanner.
In normal practical setups this makes very little difference, where the forced exploration plays a limited role. For finite-time worst-case analysis, however, it may
be crucial, since otherwise the regret may depend linearly on the number of actions, while using the spanner guarantees the forced exploration is sample efficient.
\end{remark}

\subsection{Asymptotic Upper Bound}

Our main theorem states that \cref{alg:main} is asymptotically optimal under mild assumptions. 
\begin{theorem}\label{thm:upper_bound}
Suppose that $T_x^m(\Delta)$ is uniquely defined and $T_x^m(\cdot)$ is continuous at $\Delta$ for all contexts $m$ and actions $x \in \cA^m$.
Then the policy $\pi_{\text{oam}}$ proposed in Algorithm \ref{alg:main} with $\epsilon_t = 1/\log(\log(t))$ satisfies
\begin{equation}
    \limsup_{n\to \infty}\frac{R_{\theta}^{\pi_{\text{oam}}}(n)}{\log (n)}\leq \cC(\theta, \cA^1,\ldots, \cA^M).
\end{equation}
\end{theorem}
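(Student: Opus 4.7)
The plan is to decompose the expected regret of $\pi_{\text{oam}}$ by the branch of \cref{alg:main} executed at each round -- initialization, exploitation, matching-based exploration (playing $b_1$), forced exploration (playing $b_2$), and ``wasted'' rounds that fall through to the LinUCB rule \cref{eqn:linucb} -- and to show that only matching-based exploration contributes at the leading order $\cC(\theta,\cA^1,\ldots,\cA^M)\log(n)$, while every other branch contributes $o(\log n)$.

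First I would introduce a good event $E_n$ on which the self-normalized bound $\|\hat\theta_t-\theta\|_{G_t}\leq \sqrt{f_n}$ holds for every $t\leq n$, so that $\mathbb P(E_n^c)=O(1/n)$ by the standard argument of \citet{abbasi2011improved}. With the schedule $\epsilon_t = 1/\log(\log(t))$, the forced-exploration branch guarantees that within every context each arm is pulled at a rate that drives $\lambda_{\min}(G_t)\to\infty$ quickly enough to yield $\hat\theta_t\to\theta$ almost surely, yet slowly enough that the total forced-exploration count is $o(\log n)$ (since the overall number $s(n)$ of exploration rounds is $O(\log n)$ by the finiteness of the allocation on $E_n$, and $\epsilon_n s(n)=o(\log n)$). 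Combined with the hypothesis that $T$ is uniquely defined and continuous at $\Delta$, this gives $\hat\Delta(t)\to\Delta$, $\hat x^*_m(t)=x^*_m$ for $t$ large, and $T^m_x(\hat\Delta(t))\to T^m_x(\Delta)$.

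Next I would handle each branch. Initialization costs $O(d)$. On $E_n$, the exploitation criterion \cref{eqn:criterion} together with a self-normalized confidence inequality bounds $|\langle x-x^*_{c_t},\hat\theta_{t-1}-\theta\rangle|$ strictly below $\Delta^{c_t}_x$ for every suboptimal $x$, so the greedy action coincides with $x^*_{c_t}$ and exploitation contributes zero regret on $E_n$ (and $O(1)$ overall via $\mathbb P(E_n^c)$). Matching-based exploration pulls each suboptimal arm $x$ in context $m$ at most $\min\!\bigl(T^m_x(\hat\Delta),\,f_n/\hat\Delta_{\min}^2\bigr)$ times, so summing gap-weighted pulls and using continuity of $T$ together with $f_n=(2+o(1))\log(n)$ gives a contribution of $(1+o(1))\,\cC(\theta,\cA^1,\ldots,\cA^M)\log(n)$, which is the claimed asymptotic. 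The forced-exploration contribution, bounded by $\Delta_{\max}\cdot\sum_{t\le n}\epsilon_t$ over exploration rounds, is then $o(\log n)$ by the previous paragraph.

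The main obstacle is controlling the wasted rounds, where \cref{eqn:criterion} fails yet no arm in $\cA^{c_t}$ is under-pulled relative to its target in \cref{eqn:exploration}. My plan is to argue that on $E_n$ and for $t$ sufficiently large this situation is self-contradictory: failure of \cref{eqn:criterion} means $\|x\|_{G_{t-1}^{-1}}^2$ exceeds the width target for some $x$, while the minimality characterization of $T$, combined with $\hat\Delta(t-1)\to\Delta$ and $N_{\cdot}(t-1)\ge T^{c_t}_\cdot(\hat\Delta(t-1))$ for all arms, would force $\|x\|_{G_{t-1}^{-1}}^2$ to already meet the target, a contradiction. Hence wasted rounds occur only during a finite transient or off $E_n$, and their contribution is absorbed using the standard $\tilde O(d\sqrt{nf_n})$ LinUCB regret together with $\mathbb P(E_n^c)=O(1/n)$. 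The delicate point is that the cap $f_n/\hat\Delta_{\min}^2$ in \cref{eqn:exploration} and the choice $\epsilon_t=1/\log(\log(t))$ are tuned precisely so that the exploration budget at time $t$ never overshoots what the exploitation criterion can certify back, and this is exactly why the continuity hypothesis on $T$ is required in the statement.
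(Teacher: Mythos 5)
Your overall decomposition (initialisation, exploitation, matching-based exploration, forced exploration, wasted/LinUCB rounds), your good-event construction, and your treatment of the exploitation, forced-exploration and matching-based branches all align with the paper's proof. The genuine gap is in your handling of the wasted rounds. You argue that, on the good event and for large $t$, a wasted round is self-contradictory: if every arm in $\cA^{c_t}$ has met its target $T^{c_t}_\cdot(\hat\Delta(t-1))$, then \cref{eqn:criterion} must already hold. This is false, because the constraint in \cref{eqn:opti} is $\|x\|_{H_T^{-1}}^2 \leq \Delta_x^2/f_n$ with $H_T = \sum_{m=1}^M\sum_{x\in\cA^m} T_x^m x x^{\top}$ summed over \emph{all} contexts, and the matrix $G_{t-1}$ appearing in \cref{eqn:criterion} likewise aggregates pulls across all contexts. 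The guarantee that the criterion holds (Lemma \ref{lemma:xH}) requires $N_x(t)\geq \min(T_x(\hat\Delta(t)), f_n/\hat\Delta^2_{\min}(t))$ for every $x\in\cA=\cup_m\cA^m$, not merely for $x\in\cA^{c_t}$. A wasted round is precisely the situation where the under-pulled arm $x'$ lives in a context $m'\neq c_t$ that has not yet arrived: the arms of $\cA^{c_t}$ are all saturated, yet $\|x\|^2_{G_{t-1}^{-1}}$ can still exceed the target for some $x\in\cA^{c_t}$ because the missing mass from $\cA^{m'}$ is needed to shrink $G_{t-1}^{-1}$ in the relevant direction. These rounds genuinely occur and can be numerous (see Scenario Two in \cref{sec:changing}, where a context of probability $0.01$ keeps the algorithm waiting), so no contradiction argument can dispose of them.

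The paper's resolution is probabilistic rather than deterministic: whenever exploration is needed ($\cM_t\neq\emptyset$), the current context lands in $\cM_t$ with probability at least $p_{\min}=\min_m p(m)>0$, so the expected number of wasted rounds satisfies $\mathbb{E}[s'(n)]\leq \mathbb{E}[s(n)]/p_{\min}$, where $s(n)=O(\log n)$ is the unwasted-exploration count. Feeding $O(\log n)$ rounds into the square-root LinUCB bound then yields $O(\sqrt{\log(n)}\,\mathrm{polylog}(n)) = o(\log n)$ for the wasted-exploration regret. You need this (or an equivalent) argument; without it the wasted branch is uncontrolled. Separately, and more minorly, your claim that exploitation contributes \emph{zero} regret on the good event is too strong: the criterion only certifies $|\langle x,\hat\theta_{t-1}-\theta\rangle|\leq\hat\Delta_x^{c_t}(t-1)$, and one still needs the estimated optimal arm to have been pulled often enough that its estimation error drops below $\Delta_{\min}/2$ before the greedy choice is provably correct; the paper pays an $O(\log\log n)=o(\log n)$ term for this via the stopping times $\tau_x$. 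That part of your argument is repairable; the wasted-rounds step is not, as written.
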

Together with the asymptotic lower bound in Theorem \ref{thm:lower_bound}, we can argue that optimal-allocation matching algorithm is asymptotical optimal and the lower bound in \cref{eqn:lower_bound} is sharp. 
\begin{remark}
The assumption that $T_x^m(\cdot)$ is continuous at $\Delta$ is used to ensure the stability of our algorithm.
We prove that the uniqueness assumption actually implies continuity (\cref{lem:cont} in the supplementary material) and thus the continuity assumption could be omitted. There are, however, certain corner cases where uniqueness does not hold. For example when
$\theta = (1,0)^{\top}, \cA = \{(1, 0), (0, 1), (0, -1)\}$. 
\end{remark}
\begin{algorithm}[t]
\SetAlgoLined
\KwIn{
 exploration parameter $\varepsilon_t$, exploration counter $s(d)=0$.
}
{\red \# \emph{initialisation}}

 \For{$t = 1$ \rm{\textbf{to}} $d$}{
 Observe an action set $\cA^{c_t}$, pull arm $X_t$ such that $X_t$ is not in the span of $\{X_1,\ldots, X_{t-1}\}$.
 }

  \For{$t = d+1$ \rm{\textbf{to}} $n$}{
  
  Observe an action set $\cA^{c_t}$ and solve the optimisation problem \eqref{eqn:opti} based on the estimated gap $\hat{\Delta}(t-1)$. 
  
  \eIf{$   \|x\|_{G_{t-1}^{-1}}^2\leq \max\{\frac{\hat{\Delta}_{\min}^2(t-1)}{f_n}, \frac{(\hat{\Delta}_x^{c_t}(t-1))^2}{f_n}\},  \forall x \in\cA^{c_t}$,}{
  
  {\red \# \emph{exploitation}}
  
   Pull arm $X_t=\argmax_{x\in\cA^{c_t}}x^{\top}\hat{\theta}_{t-1}$.
   
   }{
   
   {\red \# \emph{exploration}}
   
   $s(t) = s(t-1)+1$
   
   \eIf{$N_x(t-1)\geq \min(T_x(\hat{\Delta}(t-1)), f_n/(\hat{\Delta}_{\min}(t-1)))^2, \forall x \in\cA^{c_t}$,}{
   
   Pull arm according to LinUCB in \eqref{eqn:linucb}.

   }{
  Calculate $b_1,b_2$ as in \cref{eqn:exploration}.
   
   \eIf{$N_{b_2}(t-1)\leq \varepsilon_t s(t-1)$}{
   
   Pull arm $X_t = b_2$. 
   }{
   Pull arm $X_t = b_1$.
   }
   }
  }
  Update $\hat{\theta}_t, \hat{\Delta}^{c_t}_x(t), \hat{\Delta}_{\min}(t)$.
  }
\caption{Optimal Allocation Matching}
\label{alg:main}
\end{algorithm}

\subsection{Proof Sketch}
The complete proof is deferred to Appendix \ref{subsec:proof_upper_bound} in the supplementary material. 
At a high level the analysis of the optimisation-based approach consists of three parts. (1) Showing that the algorithm's estimate of the true parameter is close to the truth
in finite time. (2) Showing that the algorithm subsequently samples arms approximately according to the unknown optimal allocation and (3) Showing that the greedy action
when arms have been sampled sufficiently according to the optimal allocation is optimal with high probability. Existing optimisation-based algorithms suffer from dominant `lower-order' terms
because they use simple empirical means for Part (1), while here we use the data-efficient least-squares estimator.

We let $\textrm{Explore} = \textrm{F-Explore} \cup \textrm{UW-Explore}\cup \textrm{W-Explore}$ be the set of exploration rounds, decomposed into disjoint sets
of forced exploration $(X_t = b_1)$, unwasted exploration $(X_t = b_2)$ and wasted exploration (LinUCB), and let $\textrm{Exploit}$ be the set of exploitation rounds. 

\paragraph{Regret while exploiting}
The criterion in \cref{eqn:criterion} guarantees that the greedy action is optimal with high probability in exploitation rounds.
To see this, note that if $t$ is an exploitation round, then the sub-optimality gap of greedy action $X_t$ satisfies the following with high probability:
\begin{align*}
\Delta_{X_t}^{c_t} \lesssim \sqrt{\frac{\log(\log(n))}{1 \vee N_{X_t}(t-1)}}< \Delta_{\min}\,.
\end{align*}
Since the instantaneous regret either vanishes or is larger than $\Delta_{\min}$, we have $$
\E\left[\sum_{t \in \textrm{Exploit}} \Delta_t^{c_t}\right] = o(\log(n)).$$

\paragraph{Regret while exploring}
Based on the design of our algorithm,
the regret while exploring is decomposed into three terms, 
\begin{equation*}
\begin{split}
\E&\left[\sum_{t \in \textrm{Explore}} \Delta_{X_t}^{c_t}\right]= \E\left[\sum_{t \in \textrm{F-Explore}} \Delta_{X_t}^{c_t}\right]\\
&+ \E\left[\sum_{t \in \textrm{W-Explore}} \Delta_{X_t}^{c_t}\right]+ \E\left[\sum_{t \in \textrm{UW-Explore}} \Delta_{X_t}^{c_t}\right] \,.
\end{split}
\end{equation*}
Shortly we argue that the regret incurred in $\textrm{W-Explore} \cup \textrm{UW-Explore}$ is at most logarithmic and hence
the regret in rounds associated with forced exploration is sub-logarithmic:
\begin{equation*}
    \E\left[\sum_{t \in \textrm{F-Explore}} \Delta_{X_t}^{c_t}\right] = O(\epsilon_n |\textrm{Explore}|) = o(\log(n)) \,.
\end{equation*}
The regret in W-Explore is also sub-logarithmic. To see this, we first argue that $|\textrm{W-Explore}| = O(|\textrm{UW-Explore}|)$ 
since each context has positive probability. Combining with the fact that $|\textrm{UW-Explore}|$ is logarithmic in $n$ and the 
regret of LinUCB is square root in time horizon,
\begin{equation*}
    \E\left[\sum_{t \in \textrm{W-Explore}} \Delta_t^{c_t}\right]= o(\log(n)) \,.
\end{equation*}
The regret in UW-Explore is logarithmic in $n$ with the asymptotically optimal constant using the definition of the optimal allocation:
\begin{align*}
&\limsup_{n\to\infty}\frac{\E\left[\sum_{t \in \textrm{UW-Explore}}  \Delta_t^{c_t}\right]}{\log(n)}= \cC(\theta, \cA^1,\ldots,\cA^M) \,.
\end{align*}
Of course many details have been hidden here, which are covered in detail in the supplementary material.

\section{EXPERIMENTS}\label{sec:experiments}
In this section, we first compare our proposed algorithm and LinUCB \citep{abbasi2011improved} on some specific problem instances to showcase their strengths and weaknesses. 
We examine OSSB  \citep{combes2017minimal} on instances with large action sets to illustrate its weakness due to not using the linear structure everywhere. 
Since \citet{combes2017minimal} demonstrated that OSSB dominates  the algorithm of \citet{lattimore2017end}, we omit this algorithm from our experiments.  In the end, we include the comparison with LinTS \citep{agrawal2013thompson}. Some additional experiments are deferred to Appendix \ref{sec:add_exp} in the supplementary material.

To save computation, we follow the lazy-update approach, similar to that proposed in Section 5.1 of \citep{abbasi2011improved}: The idea is to resolve the optimisation problem \eqref{eqn:opti} whenever $\text{det}(G_t)$ increases by a
constant factor $(1+\zeta)$ and in all scenarios we choose (the arbitrary value) $\zeta = 0.1$. All codes were written in Python. To solve the convex optimisation problem \eqref{eqn:opti}, we use the CVXPY library \citep{cvxpy}.

\subsection{Fixed Action Set}\label{sec:exper_fixed}

Finite-armed linear bandits with fixed action set are a special case of linear contextual bandits. Let $d=2$ and let the true parameter be $\theta= (1,0)^{\top}$. The action set $\cA = \{x_1,x_2,x_3\}$ is fixed and $x_1 = (1, 0)^{\top}$, $x_2 = (0, 1)^{\top}$, $x_3 = (1-u, 5u)^{\top}$. We consider $u=\{0.1,0.2\}$. By construction,  $x_1$ is the optimal arm.
From Figure \ref{plt_fixed}, we observe that LinUCB suffers significantly more regret than our algorithm. The reason is that if $u$ is very small, then $x_1$ and $x_3$ point in almost the same direction and so choosing only these arms does not provide sufficient
information to quickly learn which of $x_1$ or $x_3$ is optimal. On the other hand, $x_2$
and $x_1$ point in very different directions and so choosing $x_2$ allows a learning
agent to quickly identify that $x_1$ is in fact optimal. LinUCB stops pulling $x_2$ once it is optimistic and thus fails to find the right balance between information and reward. Our algorithm, however, takes this into consideration by tracking the optimal allocation ratios.
	\vspace{-0.15in}
 \begin{figure}[h]
	\centering
		\includegraphics[scale=0.27]{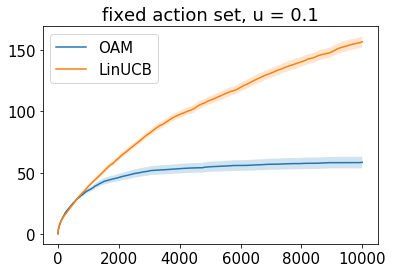}
		\includegraphics[scale=0.27]{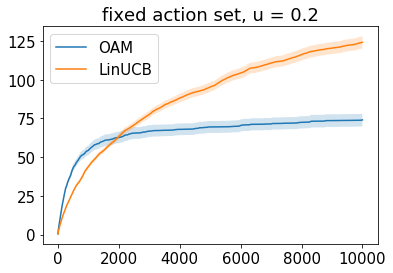}

		\vspace{-0.13in}
\caption{Fixed action set. The results are averaged over 100 realisations. Here and also later, the shaded areas show the standard errors.}\label{plt_fixed}
\end{figure}

\vspace{-0.2in}

\subsection{Changing Action Set}\label{sec:changing}

We consider a simple but representative case when there are only two action sets $\cA^1$ and $\cA^2$ available.  

\textbf{Scenario One.} In each round, $\cA^1$ is drawn with probability 0.3 while $\cA^2$ is drawn with probability 0.7. Set $\cA^1$ contains $x_1^1 = (1, 0, 0)^{\top}$, $x_2^1 = (0, 1, 0)^{\top}$, and $x_3^1 = (0.9, 0.5, 0)^{\top}$, while set $\cA^2$ contains $x_1^2 = (0, 1, 0)^{\top}$, $x_2^2 = (0, 0, 1)^{\top}$, and $x_3^2 = (0, 0.5, 0.9)^{\top}$. The true parameter $\theta$ is $(1, 0, 1)^{\top}$. From the left panel of Figure \ref{plt_changing}, we observe that LinUCB, while starts better, eventually again suffers more regret than our algorithm.
 \begin{figure}[t]
	\centering
		\includegraphics[scale=0.27]{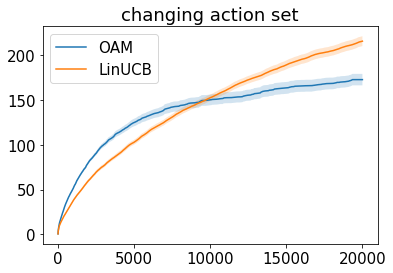}
		\includegraphics[scale=0.27]{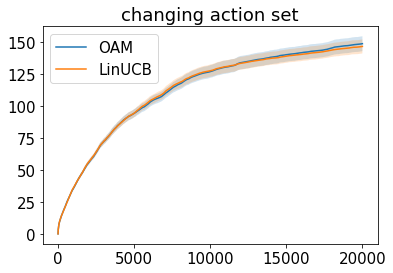}
		\vspace{-0.13in}
\caption{Changing action sets. The left panel is for scenario one and the right panel is for scenario two. The results are averaged over 100 realisations. }\label{plt_changing}
\end{figure}

\textbf{Scenario Two.} In each round, $\cA^1$ is drawn with probability $0.99$, while $\cA^2$ is drawn with probability $0.01$. Set $\cA^1$ contains three actions: $x_1^1 = (1, 0)^{\top}$, $x_2^1 = (0, 1)^{\top}$, $x_3^1 = (0.9, 0.5)^{\top}$, while set $\cA^2$ contains three actions: $x_1^2 = (0, 1)^{\top}$, $x_2^2 = (-1, 0)^{\top}$, $x_3^2 = (-1, 0)$. Apparently, $x_1^1$ and $x_1^2$ are the optimal arms for each action set and they span $\mathbb R^2$. Based on the allocation rule in Section \ref{subsec:sub-log}, the algorithm is advised to pull actions $x_1^1$ and $x_1^2$ very often based on asymptotics. However, since the probability that $\cA^2$ is drawn is extremely small, we are very likely to fall back to wasted exploration and use LinUCB to explore. Thus, in the short term, our algorithm will suffer from the drawback that optimistic algorithms also suffer from and what is described in Section \ref{sec:exper_fixed}. Although, the asymptotics will eventually ``kick in'', it may take extremely long time to see the benefits of this and the algorithm's finite-time performance will be poor.  Indeed, this is seen on the right panel of Figure \ref{plt_changing}, which shows that in this case LinUCB and our algorithm are nearly indistinguishable.

\vspace{-0.1in}

\subsection{Sublinear/Bounded Regret}
Earlier we have argued that when the optimal arms of all action sets span $\mathbb R^d$, our algorithm achieves sub-logarithmic regret. Here, we experimentally study this interesting case.
We consider $M=2$. In each round, $\cA^1$ is drawn with probability 0.8 while $\cA^2$ is drawn with probability 0.2 and the true parameter $\theta$ is $(1, 0)^{\top}$.Set $\cA^1$ contains three actions: $x_1^1 = (1, 0)^{\top}$, $x_2^1 = (0, 1)^{\top}$, $x_3^1 = (0.9, 0.5)^{\top}$, while set $\cA^2$ contains three actions: $x_1^2 = (0, 1)^{\top}$, $x_2^2 = (-1, 0)^{\top}$, $x_3^2 = (-1, 0)$. As discussed before,  $x_1^1$ and $x_1^2$ are the optimal arms for each action set and they span $\mathbb R^2$. The results are shown in the left subpanel of Figure \ref{plt_bounded}. 
The regret of our algorithm appears to have stopped growing after a short period of increase.
In line with \cref{thm:ucb}, LinUCB is seen to achieve bounded regret in this problem.
 \begin{figure}[t]
	\centering
		\includegraphics[scale=0.27]{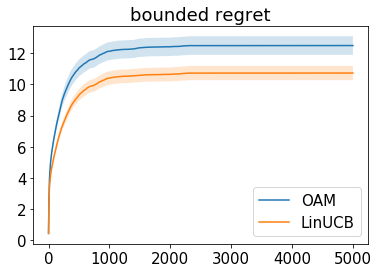}\includegraphics[scale=0.27]{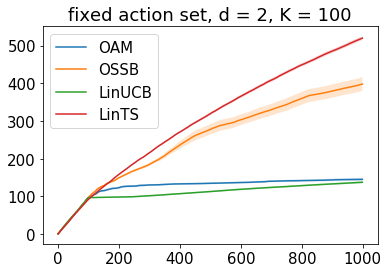}
		\vspace{-0.13in}
\caption{The left panel is for bounded regret and right panel is for large action space. The results are averaged over 100 realisations. }\label{plt_bounded}
\end{figure}
\vspace{-0.13in}
\subsection{Large, Fixed Action Set}\label{sec:large}
We let $d=2$ and $\theta = (1, 0)^{\top}$. 
We generate $100$ uniformly distributed on the $d$-dimensional unit sphere (fixed action set). The results are shown in the right subfigure of Figure \ref{plt_bounded}. When the action space is large, OSSB suffers significantly large regret and becomes unstable due to not using the linear structure everywhere. The regret of (the theoretically justified version of) LinTS is also very large due to the unnecessary variance factor required by its theory.

\section{DISCUSSION}\label{sec:discussion}

We presented a new optimisation-based algorithm for linear contextual bandits that is asymptotically optimal and adapts to both the action sets and unknown parameter. The new algorithm enjoys sub-logarithmic regret when the collection of optimal actions spans $\mathbb R^d$, a property that we also prove for optimism-based approaches. 
There are many open questions. A natural starting point is to prove near-minimax optimality of the new algorithm, possibly with minor modifications. Our work also highlights the dangers of focusing too intensely on asymptotics, which for contextual bandits hide completely the dependence on the context distribution. This motivates the intriguing challenge to understand the finite-time instance-dependent regret. Another open direction is to consider the asymptotics when the context space is continuous, which has not seen any attention.

\subsubsection*{Acknowledgements}
Csaba Szepesv\'ari gratefully  acknowledges  funding  from 
the Canada CIFAR AI Chairs Program, Amii and NSERC.

{
\bibliography{ref}
}

\newpage
\clearpage

\onecolumn

\setlength{\footskip}{55pt}
\setcounter{page}{1}

\appendix
\begin{center}
    \Large Supplement to ``Adaptive Exploration in Linear Contextual Bandit''
\end{center}

In Section \ref{sec:main_proof}, we provide main proofs for asymptotic lower bound and upper bound. In Section \ref{sec:proof:main_lemmas}, we prove several main lemmas. In Section \ref{sec:supporting_lemmas}, some supporting lemmas are presented for the sake of completeness. 

\section{Proofs of Asymptotic Lower and Upper Bounds}\label{sec:main_proof}

First of all, we define the sub-optimal action set as $\cA_{-}^m = \cA^m\setminus \{x:\Delta_x^m=0\}$ and denote $\cA = \cup_{m=1}^M\cA^m$ and $\cA_{-}=\cup_{m=1}^M\cA_{-}^m$.

\subsection{Proof of Lemma \ref{lemma:confidence_width}} \label{sec:lemma_confidence}
The proof idea follows if $\bar{G}_n$ is not sufficiently large in every direction, then some alternative parameters are not sufficiently identifiable. 
\paragraph{Step One.} We fix a consistent policy $\pi$ and fix a context $m\in[M]$ as well as a sub-optimal arm $x\in\cA^m_{-}$. Consider another parameter $\tilde{\theta}\in\mathbb R^d$ such that it is close to $\theta$ but $x_m^*$ is not the optimal arm in bandit $\tilde{\theta}$ for action set $\cA^m$. Specifically, we construct
\begin{equation*}
    \tilde{\theta} = \theta + \frac{H(x-x_m^*)}{\|x-x_m^*\|_H^2}(\Delta_{x}^{m}+\varepsilon),
\end{equation*}
where $H\in\mathbb R^{d\times d}$ is some positive semi-definite matrix and $\varepsilon>0$ is some absolute constant that will be specified later. Since the sub-optimality gap $\tilde{\Delta}_{x_m^*}^m$ satisfies
\begin{equation}
    \langle x-x_m^*, \tilde{\theta}\rangle = \langle x - x_m^*, \theta\rangle + \Delta_{x}^{m} + \varepsilon =\varepsilon>0,
\end{equation}
 it ensures that $x_m^*$ is $\varepsilon$-suboptimal in bandit $\tilde{\theta}$.

We define $T_x(n)=\sum_{t=1}^n\mI(X_t=x)$ and let $\mathbb P$ and $\tilde{\mathbb P}$ be the measures on the sequence
of outcomes $(X_1, Y_1, \ldots, X_n, Y_n)$ induced by the interaction between the policy
and the bandit $\theta$ and $\tilde{\theta}$ respectively. By the definition of $\bar{G}_n$ in \eqref{def:covariance_matrix}, we have 
\begin{eqnarray*}
    \frac{1}{2}\|\theta-\tilde{\theta}\|^2_{\bar{G}_n} &=& \frac{1}{2}(\theta-\tilde{\theta})^{\top}\bar{G}_n(\theta-\tilde{\theta})\\
&=&\frac{1}{2}(\theta-\tilde{\theta})^{\top}\mathbb E\Big[\sum_{x\in\cA}T_x(n)xx^{\top}\Big](\theta-\tilde{\theta})\\
&=&\frac{1}{2}\sum_{x\in\cA}\mathbb E\Big[T_x(n)\Big]\langle x, \theta-\tilde{\theta} \rangle^2.
\end{eqnarray*}
Applying the Bretagnolle-Huber inequality inequality in Lemma \ref{lem:kl} and divergence decomposition lemma in Lemma \ref{lem:inf-processing}, it holds that for any event $\cD$,
\begin{eqnarray}\label{eqn:lower_bound1}
\frac{1}{2}\|\theta-\tilde{\theta}\|^2_{\bar{G}_n}
= \text{KL}(\mathbb P, \tilde{\mathbb P})\geq \log \Big(\frac{1}{2(\mathbb P(\cD)+\tilde{\mathbb P}(\cD^c))}\Big).
\end{eqnarray}

\paragraph{Step Two.} In the following, we start to derive a lower bound of $R_{\theta}^{\pi}(n)$,
\begin{eqnarray*}
R_{\theta}^{\pi}(n) &=&\mathbb E\Big[\sum_{t=1}^n\langle x_{c_t}^*-X_t, \theta\rangle\Big]= \mathbb E\Big[\sum_{m=1}^M \sum_{t:c_t=m}\langle x_m^*-X_t, \theta\rangle\Big]\\
&\geq& \mathbb E\Big[\sum_{t:c_t=m}\langle x_m^*-X_t, \theta\rangle\Big] = \mathbb E\Big[\sum_{t:c_t=m}\Delta_{X_t}^m\Big]\\
&\geq& \Delta_{\min} \mathbb E\Big[\sum_{t:c_t=m}\mathbb I(X_t\neq x_m^*)\Big]= \Delta_{\min} \mathbb E \Big[\sum_{t=1}^n\mathbb I(c_t=m)- \sum_{t=1}^n\mathbb I(c_t=m)\mathbb I(X_t = x_m^*)\Big],
\end{eqnarray*}
where the first inequality comes from the fact that $\langle x_m^*-X_t, \theta\rangle\geq 0$ for all $m\in[M]$. Define the event $\cD$ as follows,
\begin{equation}\label{def:event_D}
    \cD = \Big\{\sum_{t=1}^n \mathbb I(c_t=m)\mathbb I (X_t=x_m^*)\leq \frac{1}{2}\sum_{t=1}^n\mathbb I(c_t=m)\Big\}.
\end{equation}
When event $\cD$ holds, we will only pull at most half of total rounds for the optimal action of action set $m$. Then it holds that
\begin{eqnarray*}
R_{\theta}^{\pi}(n)&\geq& \Delta_{\min} \mathbb E \Big[\Big(\sum_{t=1}^n\mathbb I(c_t=m)- \sum_{t=1}^n\mathbb I(c_t=m)\mathbb I(X_t = x_m^*)\Big)\mathbb I(\cD)\Big]\\
&\geq&\Delta_{\min}  \mathbb E \Big[\frac{1}{2}\sum_{t=1}^n\mathbb I(c_t=m)\mathbb I(\cD)\Big].
\end{eqnarray*}
Define another event $\cB$ as follows,
\begin{equation}\label{def:event_B}
    \cB = \Big\{\frac{1}{2}\sum_{t=1}^n\mathbb I(c_t = m)\geq \frac{np_m}{2}-\delta/2\Big\},
\end{equation}
where $\delta>0$ will be chosen later and $p_m$ is the probability that the environment picks context $m$. From the definition of $c_t$, we have $\mathbb E[\sum_{t=1}^n\mathbb I(c_t=m)] = np_m.$
By the standard Hoeffding's inequality \citep{vershynin2010introduction}, it holds that
\begin{equation*}
    \mathbb P\Big(\frac{1}{2}\sum_{t=1}^n\mathbb I(c_t=m)- \frac{n p_m}{2}\geq-\frac{\delta}{2}\Big)\geq 1-\exp(-\frac{2\delta^2}{n}),
\end{equation*}
which implies
\begin{equation*}
    \mathbb P(\cB^c) \leq \exp(-2\delta^2/n).
\end{equation*}
By the definition of events $\cD,\cB$ in \eqref{def:event_D},\eqref{def:event_B}, we have
\begin{eqnarray*}
R_{\theta}^{\pi}(n)&\geq& \Delta_{\min} \mathbb E\Big[\frac{1}{2}\sum_{t=1}^n\mathbb I(c_t=m)\mathbb I(\cD)\mathbb I(\cB)\Big]\\
&\geq& \Delta_{\min} \mathbb E\Big[(\frac{1}{2}np_m-\frac{\delta}{2})\mathbb I(\cD)\mathbb I(\cB)\Big]\\
&=&\Delta_{\min} (\frac{1}{2}np_m-\frac{\delta}{2})\mathbb P(\cD\cap \cB)\\
&\geq& \Delta_{\min} (\frac{1}{2}np_m-\frac{\delta}{2})(\mathbb P(\cD)-\mathbb P(\cB^c)).
\end{eqnarray*}
Letting $\delta = np_m/2$, we have
\begin{eqnarray}\label{eqn:lower_bound_R}
R_{\theta}^{\pi}(n) \geq \Delta_{\min}\frac{np_m}{4}\Big(\mathbb P(\cD) - \exp(-\frac{np_m^2}{2})\Big).
\end{eqnarray}
On the other hand, we let $\tilde{\mathbb E}$ is taken with respect to probability measures $\tilde{\mathbb P}$. Then $R_{\tilde{\theta}}^{\pi}(n)$ can be lower bounded as follows,
\begin{eqnarray*}
R_{\tilde{\theta}}^{\pi}(n)
&=&\tilde{\mathbb E}\Big[\sum_{m=1}^M\sum_{t=1}^n\mathbb I(c_t=m)\tilde{\Delta}_{X_t}^{m}\Big]\\
&\geq& \tilde{\mathbb E}\Big[\sum_{t=1}^n \mathbb I(c_t=m)\mathbb I(X_t = x_m^*)\Big]\tilde{\Delta}_{x_m^*}^{m}, \\
\end{eqnarray*}
where we throw out all the sub-optimality gap terms except $\tilde{\Delta}_{x_m^*}^{m}$. Using the fact that $\tilde{\Delta}_{x_m^*}^{m}$ is $\varepsilon$-suboptimal, it holds that
\begin{eqnarray}\label{eqn:lower_bound_R'}
R_{\tilde{\theta}}^{\pi}(n)&\geq& \varepsilon \tilde{\mathbb E}\Big[(\sum_{t=1}^n \mathbb I(c_t=m)\mathbb I(X_t = x_m^*))\mathbb I(\cD^c)\Big]\nonumber\\
&>& \varepsilon \tilde{\mathbb E}\Big[\frac{1}{2}\sum_{t=1}^n \mathbb I(c_t=m)\mathbb I(\cD^c)\Big]\nonumber\\
&\geq& \varepsilon \tilde{\mathbb E}\Big[\frac{1}{2}\sum_{t=1}^n \mathbb I(c_t=m)\mathbb I(\cD^c)\mathbb I(\cB)\Big]\nonumber\\
&\geq& \varepsilon(\frac{np_m}{2}-\frac{\delta}{2})\tilde{\mathbb P}(\cD^c\cap \cB)\nonumber\\
&\geq& \varepsilon(\frac{np_m}{2}-\frac{\delta}{2}) (\tilde{\mathbb P}(\cD^c) - \tilde{\mathbb P}(\cB^c))\nonumber\\
&\geq& \varepsilon(\frac{np_m}{2}-\frac{\delta}{2}) (\tilde{\mathbb P}(\cD^c) - \exp(-\frac{2\delta^2}{n}))\nonumber\\
&=& \varepsilon\frac{np_m}{4}\tilde{\mathbb P}(\cD^c) - \varepsilon\frac{np_m}{4}\exp(-\frac{np_m^2}{2}).
\end{eqnarray}
Now we have derived the lower bounds \eqref{eqn:lower_bound_R}\eqref{eqn:lower_bound_R'} for $R_{\theta}^{\pi}(n),R_{\tilde{\theta}}^{\pi}(n)$ respectively.

\paragraph{Step Three.} Combining the lower bounds of $R_{\theta}^{\pi}(n)$ and $R_{\tilde{\theta}}^{\pi}(n)$ together, it holds that
\begin{equation*}
   R_{\theta}^{\pi}(n) +R_{\tilde{\theta}}^{\pi}(n)\geq \frac{np_m}{4}\Big(\mathbb P(\cD)\Delta_{\min}+\tilde{\mathbb P}(\cD^c)\varepsilon\Big)- \frac{np_m}{4}\exp(-\frac{np_m^2}{2})(\varepsilon+\Delta_{\min}).
\end{equation*}
Letting $\varepsilon\leq \Delta_{\min}$, we have
\begin{equation*}
     R_{\theta}^{\pi}(n) +R_{\tilde{\theta}}^{\pi}(n)\geq \varepsilon\frac{np_m}{4}\Big(\mathbb P(\cD)+\tilde{\mathbb P}(\cD^c)\Big)- \frac{np_m}{4}\exp(-\frac{np_m^2}{2})2\Delta_{\min}.
\end{equation*}
This implies 
\begin{equation}\label{eqn:bound_PD}
    \frac{ R_{\theta}^{\pi}(n) +R_{\tilde{\theta}}^{\pi}(n)}{\varepsilon np_m/4} + \frac{1}{\varepsilon}\exp(-\frac{np_m^2}{2})2\Delta_{\min}\geq \mathbb P(\cD) + \tilde{\mathbb P}(\cD^c).
\end{equation}
Plugging \eqref{eqn:bound_PD} into \eqref{eqn:lower_bound1}, we have
\begin{eqnarray*}
\frac{1}{2}\|\theta-\tilde{\theta}\|_{\bar{G}_n}^2&\geq& \log\Big(\frac{1}{2(\mathbb P(\cD)+\tilde{\mathbb P}(\cD^c))}\Big)\\
&\geq& \log \Big(\frac{1}{ \frac{R_{\theta}^{\pi}(n) +R_{\tilde{\theta}}^{\pi}(n)}{\varepsilon np_m/8} + \frac{1}{\varepsilon}\exp(-\frac{np_m^2}{2})4\Delta_{\min}}\Big)\\
&=& \log \Big(\frac{n}{ \frac{R_{\theta}^{\pi}(n) +R_{\tilde{\theta}}^{\pi}(n)}{\varepsilon p_m/8} + \frac{n}{\varepsilon}\exp(-\frac{np_m^2}{2})4\Delta_{\min}}\Big)\\
& = &\log(n) - \log \Big(\frac{R_{\theta}^{\pi}(n) +R_{\tilde{\theta}}^{\pi}(n)}{\varepsilon p_m/8} + \frac{4n}{\varepsilon}\exp(-\frac{np_m^2}{2})\Delta_{\min}\Big).
\end{eqnarray*}
Dividing by $\log (n)$ for both sides, we reach
\begin{eqnarray*}
\frac{\|\theta-\tilde{\theta}\|_{\bar{G}_n}^2}{2\log(n)} \geq 1- \frac{\log \Big(\frac{R_{\theta}^{\pi}(n) +R_{\tilde{\theta}}^{\pi}(n)}{\varepsilon p_m/8} + \frac{4n}{\varepsilon}\exp(-\frac{np_m^2}{2})\Delta_{\min}\Big)}{\log(n)}.
\end{eqnarray*}
From the definition of consistent policies \eqref{def:consis_policy}, it holds that
\begin{eqnarray*}
\limsup_{n\to \infty} \frac{\log(R_{\theta}^{\pi}(n) +R_{\tilde{\theta}}^{\pi}(n))}{\log(n)} \leq 0.
\end{eqnarray*}
In addition, by using the fact that $\lim_{n\to \infty}n\exp(-n)=0$, it follows that
\begin{equation}\label{eqn:liminf}
    \liminf_{n\to \infty} \frac{\|\theta-\tilde{\theta}\|_{\bar{G}_n}^2}{2\log(n)} \geq 1.
\end{equation}
\paragraph{Step Four.} Let's denote 
\begin{equation*}
    \rho_n(H)= \frac{\|x-x^*_m\|^2_{\bar{G}_n^{-1}}\|x-x^*_m\|_{H\bar{G}_n H}^2}{\|x-x^*_m\|_H^4}.
\end{equation*}
Then we can rewrite
\begin{eqnarray*}
\frac{1}{2}\|\theta-\tilde{\theta}\|_{\bar{G}_n}^2 = \frac{(\Delta_{x}^{m} + \varepsilon)^2}{2\|x-x^*_m\|^2_{\bar{G}_n^{-1}}}\rho_n(H).
\end{eqnarray*}
Plugging this into \eqref{eqn:liminf} and letting $\varepsilon$ to zero, we see that
\begin{equation}
    \liminf_{n\to\infty} \frac{\rho_n(H)}{\|x-x_m^*\|^2_{\bar{G}^{-1}_n}\log(n)}\geq \frac{2}{(\Delta_{x}^{m})^2}\,.
    \label{eq:lbc}
\end{equation}
Now, we consider the following lemma, extracted from the proof of Theorem~25.1 of the book by  \cite{lattimore2018bandit}. The detailed proof is deferred to Section \ref{proof:techlemma}.

\begin{lemma}\label{lem:techlemma}
Let $\{G_n\}_{n\ge 0}$ be a sequence of $d\times d$ positive definite matrices, $s\in \mR^d$.
For $H$ positive semi-definite $d\times d$ matrix such that $\|s\|_H>0$ and $n\ge 0$, let
$\rho_n(H)=\frac{\|s\|^2_{G_n^{-1}}\|s\|_{H G_n H}^2}{\|s\|_H^4}$.
Assume that 
$\liminf_{n\to\infty} \frac{\lambda_{\min}(G_n)}{\log(n)}>0$
and that
for some $c>0$,
\begin{align}
\liminf_{n\to\infty} \frac{\rho_n(H)}{\|s\|^2_{G_n^{-1}}\log(n)}\ge c\,.
\label{eq:lem:techc1}
\end{align}
Then, $\limsup_{n\to\infty} \log(n) \|s\|_{G_n^{-1}}^2 \le 1/c$.
\end{lemma}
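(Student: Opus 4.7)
The plan is to argue by contradiction via subsequential compactness after normalising $\bar G_n := G_n/\log n$. A direct substitution gives
\begin{equation*}
\frac{\rho_n(H)}{\|s\|^2_{G_n^{-1}}\log n} \;=\; \frac{\|s\|^2_{H\bar G_n H}}{\|s\|^4_H},
\end{equation*}
so the hypothesis becomes: for every positive semi-definite $H$ with $\|s\|_H > 0$, $\liminf_{n\to\infty} \|s\|^2_{H\bar G_n H}/\|s\|^4_H \geq c$, while the goal becomes $\limsup_n \|s\|^2_{\bar G_n^{-1}} \leq 1/c$. The first assumption forces $\lambda_{\min}(\bar G_n) \geq \alpha$ eventually for some $\alpha > 0$, hence $\bar G_n^{-1}$ is uniformly bounded in operator norm and $\|s\|^2_{\bar G_n^{-1}}$ is bounded.

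I would then suppose for contradiction that $L := \limsup_n \|s\|^2_{\bar G_n^{-1}} > 1/c$ and extract a subsequence $(n_k)$ along which $\|s\|^2_{\bar G_{n_k}^{-1}} \to L$. Since $\bar G_{n_k}^{-1}$ lies in a compact set of positive semi-definite matrices, by passing to a sub-subsequence I may assume $\bar G_{n_k}^{-1} \to H_\infty$ for some positive semi-definite $H_\infty$ with $\|s\|^2_{H_\infty} = L > 0$. Applying the hypothesis with this $H_\infty$ yields $\liminf_k (H_\infty s)^\top \bar G_{n_k}(H_\infty s) \geq cL^2$. Setting $v = H_\infty s$ and $w_k = \bar G_{n_k}^{-1} s$ (so $w_k \to v$) and expanding $v = w_k + (v - w_k)$ gives
\begin{equation*}
v^\top \bar G_{n_k} v \;=\; s^\top \bar G_{n_k}^{-1} s + 2\, s^\top(v - w_k) + (v - w_k)^\top \bar G_{n_k}(v - w_k),
\end{equation*}
whose first two terms tend to $L$ and to $0$ respectively, since $s^\top v = \|s\|^2_{H_\infty} = L$ and $s^\top w_k \to L$. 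If the non-negative residual likewise vanishes, then $v^\top \bar G_{n_k} v \to L$, and the hypothesis yields $L \geq c L^2$, i.e.\ $L \leq 1/c$, contradicting $L > 1/c$.

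The hard part is controlling the residual $(v - w_k)^\top \bar G_{n_k}(v - w_k)$ when $H_\infty$ is singular, since $\bar G_n$ may then grow unboundedly along the kernel $K = \ker H_\infty$ and could amplify the small perturbation $v - w_k$. The remedy exploits the block structure of $\bar G_n$ with respect to the decomposition $\mR^d = \mathrm{range}(H_\infty) \oplus K$: because $v \in \mathrm{range}(H_\infty) = K^\perp$, the difference $v - w_k$ is essentially the $K$-component of $w_k$, whose norm scales inversely as the growth of $\bar G_{n_k}$ on $K$ (via the Schur complement form of the block inverse), so that the product $\|v - w_k\|^2\cdot\lambda_{\max}(\bar G_{n_k}|_K)$ tends to zero. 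This compactness-and-block-decomposition step, which is the delicate part of the argument, is carried out in detail in the proof of Theorem~25.1 of \citep{lattimore2018bandit}, from which the lemma is extracted.
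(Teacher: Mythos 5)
Your argument is correct and is essentially the paper's own proof in different clothing: both pass to a subsequence along which the (suitably normalised) inverse $G_n^{-1}$ converges to a possibly singular limit, apply the hypothesis with that limit as $H$, and reduce everything to showing that a residual quadratic form vanishes despite $G_n$ blowing up on $\ker H$ --- which is exactly the content of the paper's Claim~\ref{claim:mxclaim} ($H H_n^{-1} H \to H$), proved there by the same Schur-complement block decomposition you sketch. The only substantive difference is that you defer that delicate step to Theorem~25.1 of \citet{lattimore2018bandit}, whereas the paper writes it out in full.
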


The proof of $\liminf_{n\to\infty} \frac{\lambda_{\min}(G_n)}{\log(n)}>0$ could refer Appendix C in \cite{lattimore2017end}. Clearly, this lemma with $G_n=\bar{G}_n$, $c = 2/(\Delta_x^m)^2$, $H = \lim_{n\to\infty}\bar{G}_n^{-1}/\|\bar{G}_n^{-1}\|$ and $s=x-x^*_m$ gives the desired statement.

\hfill $\blacksquare$\\

\subsection{Proof of Theorem \ref{thm:upper_bound}: Asymptotic Upper Bound}\label{subsec:proof_upper_bound}

 We write $\Delta_{\max}=\max_{x, m}\Delta_x^m$ and abbreviate $R(n) = R_{\theta}^{\pi}(n)$. From the design of the initialisation, $G_t$ is guaranteed to be invertible since each $\cA^m$ is assumed to span $\mathbb R^d$. The regret during the initialisation is at most $d\Delta_{\max}\approx o(\log(n))$ and thus we ignore the regret during initialisation in the following.

First, we introduce a refined concentration inequality for the least square estimator constructed by adaptive data. The proof could refer to the proof of Theorem 8 in \cite{lattimore2017end}.
\begin{lemma}\label{lemma:concentr}
Suppose for $t\geq d$, $G_t$ is invertible. For any $\delta\in(0, 1)$, we have
\begin{equation*}
    \mathbb P\Big(\exists t\geq d, \exists x\in\cA, \text{such that} \ \big|\langle x,\hat{\theta}_t\rangle - \langle x,\theta\rangle\big|\geq \|x\|_{G_t^{-1}}f_{n,\delta}^{1/2}\Big)\leq \delta,
\end{equation*}
and
\begin{equation}\label{eqn:f_n_delta}
    f_{n,\delta} = 2\Big(1+\frac{1}{\log (n)}\Big)\log (1/\delta) + cd\log(d\log(n)),
\end{equation}
where $c>0$ is some universal constant. We write $f_{n} = f_{n,1/n}$ for short.
\end{lemma}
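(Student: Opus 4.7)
The plan is to reduce the pointwise accuracy of $\langle x,\hat{\theta}_t\rangle$ to a uniform-in-time self-normalized concentration statement for the noise martingale $S_t := \sum_{s=1}^t X_s \eta_s$, and to handle the horizon dependence by a peeling argument that replaces a naive $\log n$ horizon factor with $\log\log n$. The argument closely follows the proof of Theorem~8 of \cite{lattimore2017end}.

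\textbf{Step 1 (Cauchy--Schwarz reduction).} Because $G_t$ is invertible for $t\geq d$, the least-squares identity gives $\hat{\theta}_t - \theta = G_t^{-1} S_t$. Writing $\langle x, G_t^{-1} S_t\rangle = (G_t^{-1/2} x)^{\top}(G_t^{-1/2} S_t)$ and applying Cauchy--Schwarz yields
\begin{equation*}
|\langle x, \hat{\theta}_t - \theta\rangle| \leq \|x\|_{G_t^{-1}} \cdot \|S_t\|_{G_t^{-1}}.
\end{equation*}
It therefore suffices to show
\begin{equation*}
\mathbb P\bigl(\exists t\in\{d,\ldots,n\}: \|S_t\|_{G_t^{-1}}^2 \geq f_{n,\delta}\bigr) \leq \delta.
\end{equation*}

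\textbf{Step 2 (Regularized self-normalized bound).} Fix $\lambda>0$ and set $V_t = \lambda I + G_t$. Since $\eta_s$ is standard Gaussian conditioned on $X_s$ and the past, the self-normalized vector-martingale inequality of Abbasi-Yadkori--P\'al--Szepesv\'ari gives, with probability at least $1-\delta$, simultaneously for all $t\geq 1$,
\begin{equation*}
\|S_t\|_{V_t^{-1}}^2 \leq 2\log(1/\delta) + \log\det(V_t/\lambda).
\end{equation*}
The assumption $\|X_s\|_2\leq 1$ yields $\mathrm{tr}\, V_t \leq d\lambda + t$, whence by AM--GM $\log\det(V_t/\lambda) \leq d\log(1 + t/(d\lambda))$.

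\textbf{Step 3 (Removing the regularization).} Joint diagonalization of $G_t$ and $V_t$ shows
\begin{equation*}
\|S_t\|_{G_t^{-1}}^2 \leq \bigl(1+\lambda/\lambda_{\min}(G_t)\bigr)\|S_t\|_{V_t^{-1}}^2.
\end{equation*}
The spanning initialization phase furnishes a deterministic positive lower bound on $\lambda_{\min}(G_t)$ for every $t\geq d$. Choosing $\lambda$ of order $1/\log n$ then makes the inflation factor at most $1+1/\log n$, which is exactly the prefactor on the $\log(1/\delta)$ term in $f_{n,\delta}$.

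\textbf{Step 4 (Iterated-logarithm refinement of the residual).} A naive use of Step 2 leaves a residual of order $d\log(n\log n)$, which is too large. To reduce it to $cd\log(d\log n)$ we peel over dyadic windows $[2^k, 2^{k+1})$ for $k\leq \log_2 n$, applying the anytime self-normalized bound on each window with failure probability $\delta/\log_2 n$ and using that within each window $\det(V_t)$ can grow by only a bounded factor. The union-bound cost over windows contributes $\log\log n$ rather than $\log n$, and reassembling the contributions of Steps 3 and 4 produces the form of $f_{n,\delta}$ asserted in the lemma.

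The main obstacle is the peeling of Step~4: one must delicately balance the cost of the union bound over time windows against the determinant growth within each window to get a $\log\log n$ rather than $\log n$ residual. Steps 1--3 are essentially mechanical, and the precise constants for Step 4 are worked out in \cite{lattimore2017end}, which we invoke rather than reproduce in full.
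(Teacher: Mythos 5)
The paper itself offers no proof of this lemma beyond the pointer to Theorem 8 of \citet{lattimore2017end}, so your proposal has to be measured against that proof. Your Steps 1--3 are correct and are the intended route: the Cauchy--Schwarz reduction to $\|S_t\|_{G_t^{-1}}$ is essential, since it makes the bound uniform over all directions in $\mathbb R^d$ at once and explains why $f_{n,\delta}$ carries no dependence on $|\cA|$; the regularized self-normalized inequality and the eigenvalue comparison $G_t^{-1} \preceq (1+\lambda/\lambda_{\min}(G_t))\,V_t^{-1}$ are both standard and correctly deployed, and taking $\lambda$ of order $\lambda_{\min}(G_d)/\log n$ does produce the $(1+1/\log n)$ prefactor.

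The gap is in Step 4, which is the only genuinely nontrivial step. The stated mechanism --- dyadic peeling over time windows $[2^k,2^{k+1})$, ``using that within each window $\det(V_t)$ can grow by only a bounded factor'' --- does not work: if the algorithm spends the window $[2^k,2^{k+1})$ playing a direction whose eigenvalue in $G_{2^k}$ is still of constant order (which the initialisation alone cannot preclude), then $\det(V_{2^{k+1}})/\det(V_{2^k})$ can be of order $(1+2^k)^{\,}$ or worse, not $O(1)$. Consequently, restarting the self-normalized bound on each time window (normalised by $V_{2^k}$) still leaves a residual of order $d\log n$ in the worst window, not $cd\log(d\log n)$. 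The peeling that actually delivers the iterated-logarithm residual is over the scale of the statistic itself --- the value of $\det G_t$ (equivalently, of the elliptical norms $\|\cdot\|^2_{G_t^{-1}}$), which traverses only $O(d\log n)$ dyadic levels between $\det G_d$ and roughly $n^d$; the union bound over levels then costs an additive $\log(d\log n)$, while the determinant ratio within each level is bounded by construction. Since you, like the paper, ultimately invoke \citet{lattimore2017end} for this step, the lemma is still carried by the citation; but the mechanism you describe for the key step would not yield the claimed $cd\log(d\log n)$ term if executed as written.
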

Let us define the event $\cB_t$ as follows
\begin{eqnarray}\label{eqn:events}
\cB_t = \Big\{ \exists t\geq d, \exists x\in\cA, \text{such that} \ |x^{\top}\hat{\theta}_t-x^{\top}\theta|\geq \|x\|_{G_t^{-1}}f_{n}^{1/2} \Big\}.
\end{eqnarray}
From Lemma \ref{lemma:concentr}, we have $\mathbb P(\cB_t)\leq 1/n$ by choosing $\delta = 1/n$. We decompose the cumulative regret with respect to event $\cB_t$ as follows,
\begin{eqnarray}\label{eqn:regret_decom1}
R(n) &=& \mathbb E\Big[\sum_{t=1}^n\sum_{x\in\cA_{-}^{c_t}}\Delta_x^{c_t}\mI(X_t=x)\Big]\nonumber\\
& = &\mathbb E\Big[\sum_{t=1}^n\sum_{x\in\cA_{-}^{c_t}}\Delta_x^{c_t}\mI(X_t=x, \cB_t)\Big]+ \mathbb E\Big[\sum_{t=1}^n\sum_{x\in\cA_{-}^{c_t}}\Delta_x^{c_t}\mI(X_t=x, \cB_t^c)\Big].
\end{eqnarray}

To bound the first term in \eqref{eqn:regret_decom1}, we observe that
\begin{eqnarray}\label{eqn:asy_bound}
&&\limsup_{n\to \infty}\frac{\mathbb E\Big[\sum_{t=1}^n\sum_{x\in\cA_{-}^{c_t}}\Delta_{x}^{c_t}\mI(X_t=x, \cB_t)\Big]}{\log (n)} \nonumber\\
&=& \limsup_{n\to \infty}\frac{\mathbb E\Big[\sum_{t=1}^n\Delta_{X_t}^{c_t}\mI(\cB_t)\Big]}{\log(n)}\leq \limsup_{n\to\infty}\frac{\Delta_{\max}\sum_{t=1}^n\mathbb P(\cB_t)}{\log(n)} = \limsup_{n\to\infty}\frac{\Delta_{\max}\sum_{t=1}^n\frac{1}{n}}{\log(n)}\nonumber\\ &=&\limsup_{n\to\infty}\frac{\Delta_{\max}}{\log(n)}=0.
\end{eqnarray}
To bound the second term in \eqref{eqn:regret_decom1}, we define the event $\cD_{t,c_t}$ as follows,
\begin{equation}\label{def:Dt}
    \cD_{t,c_t} = \left\{\forall x\in\cA^{c_t}, \|x\|_{G_t^{-1}}^2\leq \max\Big\{\frac{(\hat{\Delta}_{\min}(t))^2}{f_n}, \frac{(\Delta_x^{c_t}(t))^2}{f_n}\Big\}\right\}.
\end{equation}
When $\cD_{t,c_t}$ occurs, the algorithm exploits at round $t$. Otherwise, the algorithm explores at round $t$. We decompose the second term in \eqref{eqn:regret_decom1} as the exploitation regret and exploration regret:
\begin{eqnarray}\label{eqn:regret_decom2}
&&\mathbb E\Big[\sum_{t=1}^n\sum_{x\in\cA_{-}^{c_t}}\Delta_x^{c_t}\mI(X_t=x, \cB_t^c)\Big]\nonumber \\
&=& \mathbb E\Big[\sum_{t=1}^n\sum_{x\in\cA_{-}^{c_t}}\Delta_x^{c_t}\mI(X_t=x, \cB_t^c, \cD_{t,c_t})\Big]+ \mathbb E\Big[\sum_{t=1}^n\sum_{x\in\cA_{-}^{c_t}}\Delta_x^{c_t}\mI(X_t=x, \cB_t^c, \cD_{t,c_t}^c)\Big].
\end{eqnarray}
We bound those two terms in Lemmas \ref{lemma:exploitation_regret}-\ref{lemma:exploration_regret} respectively.

\begin{lemma}\label{lemma:exploitation_regret}
The exploitation regret satisfies
\begin{equation}\label{eqn:exploitation_regret}
    \limsup_{n\to\infty}\frac{\mathbb E\Big[\sum_{t=1}^n\sum_{x\in\cA_{-}^{c_t}}\Delta_x\mI(X_t=x, \cB_t^c, \cD_{t,c_t})\Big]}{\log(n)}=0
\end{equation}
\end{lemma}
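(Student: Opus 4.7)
The plan is to split the left-hand side of \eqref{eqn:exploitation_regret} across the bad concentration event $\cB_t$ and its complement. The $\cB_t$ piece is handled exactly as in \eqref{eqn:asy_bound}: Lemma \ref{lemma:concentr} gives $\mathbb P(\cB_t)\le 1/n$, so the contribution is at most $\Delta_{\max}\sum_{t=1}^n \mathbb P(\cB_t) \le \Delta_{\max} = o(\log n)$. The substantive work lies on $\cB_t^c \cap \cD_{t,c_t}$, where I aim to argue something strictly stronger than what the lemma demands: for all sufficiently large $t$, the greedy action coincides with the true optimum, and thus contributes \emph{zero} regret.

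On $\cB_t^c$ the concentration inequality $|\langle x, \hat\theta_{t-1}-\theta\rangle|\le \|x\|_{G_{t-1}^{-1}}f_n^{1/2}$ holds for every $x\in\cA$; combined with the greediness of $X_t$ it yields
\begin{equation*}
    \Delta_{X_t}^{c_t} \le \bigl(\|X_t\|_{G_{t-1}^{-1}} + \|x^*_{c_t}\|_{G_{t-1}^{-1}}\bigr) f_n^{1/2}.
\end{equation*}
Since $\hat\Delta_{X_t}^{c_t}(t-1)=0$ by definition of the greedy action, the exploitation criterion in \eqref{eqn:criterion} applied to $X_t$ gives $\|X_t\|_{G_{t-1}^{-1}}f_n^{1/2}\le \hat\Delta_{\min}(t-1)$. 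For the optimal-arm term $\|x^*_{c_t}\|_{G_{t-1}^{-1}}$ I apply \eqref{eqn:criterion} to $x^*_{c_t}$ and then invoke concentration once more to control $\hat\Delta_{x^*_{c_t}}^{c_t}(t-1)$, which must be close to its true value of zero. Chasing these bounds together should produce an inequality of the form $\Delta_{X_t}^{c_t} \lesssim \hat\Delta_{\min}(t-1) + f_n^{1/2}/\sqrt{\lambda_{\min}(G_{t-1})}$.

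Because every positive true gap is at least $\Delta_{\min}$, once the above upper bound drops strictly below $\Delta_{\min}$ we are forced to conclude $\Delta_{X_t}^{c_t}=0$. The forced-exploration branch of Algorithm \ref{alg:main} (picking $b_2$ a fraction $\varepsilon_t=1/\log\log(t)$ of the explore rounds) guarantees that $\lambda_{\min}(G_{t-1})\to\infty$ at a rate outpacing $f_n\asymp \log(n)$, so there is an almost surely finite random time $t_0$ beyond which every exploitation round on $\cB_t^c$ incurs zero regret. The $t\le t_0$ rounds contribute at most $\mathbb E[t_0]\Delta_{\max}$, which I intend to bound by $O(1)$ via a concentration argument on the forced-exploration counts. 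The main obstacle will be establishing the lower bound on $\lambda_{\min}(G_{t-1})$ without circularly invoking the exploration-regret bound of Lemma~\ref{lemma:exploration_regret}; fortunately, only the qualitative statement $\mathbb E[t_0]<\infty$ is needed here, which is considerably weaker than the sharp constant proved later in the appendix and can be extracted independently from the structure of the $b_2$ rule.
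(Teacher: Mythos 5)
Your high-level plan -- show that on $\cB_t^c \cap \cD_{t,c_t}$ the greedy action is eventually exactly optimal, so that exploitation rounds contribute zero regret after a controllable random time -- is the same idea the paper uses. (The initial split over $\cB_t$ is vacuous here: the indicator in the lemma already contains $\cB_t^c$.) But the specific chain of inequalities you propose does not close. Applying the exploitation criterion \eqref{eqn:criterion} to the greedy arm gives $\|X_t\|_{G_{t-1}^{-1}} f_n^{1/2} \leq \hat{\Delta}_{\min}(t-1)$, and $\hat{\Delta}_{\min}(t-1)$ is the minimum \emph{nonzero} estimated gap: as $\hat\theta_t \to \theta$ it converges to $\Delta_{\min}$, not to zero. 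So your bound reads $\Delta_{X_t}^{c_t} \lesssim \hat{\Delta}_{\min}(t-1) + (\text{small}) \approx \Delta_{\min} + o(1)$, which is perfectly consistent with $\Delta_{X_t}^{c_t} = \Delta_{\min} > 0$; the quantization argument ("upper bound drops strictly below $\Delta_{\min}$, hence the gap is zero") never triggers. The secondary ingredient is also shaky: forced exploration only guarantees $\min_x N_x(t) \gtrsim \varepsilon_t s(t)$ with $s(t) = O(\log n)$ and $\varepsilon_t = 1/\log\log t$, so it yields $\lambda_{\min}(G_t) = O(\log n / \log\log n)$, which does \emph{not} outpace $f_n \asymp \log n$; the ratio $f_n/\lambda_{\min}(G_{t-1})$ need not vanish, and establishing that it does in general would indeed entangle you with the exploration analysis.

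The paper's proof avoids both problems with a cancellation you are missing. Writing $m = c_t$ and assuming $\hat{x}_m^*(t) \neq x_m^*$ (otherwise there is no regret), the true optimal arm has $\hat{\Delta}_{x_m^*}^m(t) > 0$, so the criterion applied to $x_m^*$ gives $\|x_m^*\|_{G_t^{-1}} f_n^{1/2} \leq \hat{\Delta}_{x_m^*}^m(t)$, i.e.\ $|\ip{\hat\theta_t - \theta, x_m^*}| \leq \hat{\Delta}_{x_m^*}^m(t)$ on $\cB_t^c$. In the decomposition
\begin{equation*}
\ip{x_m^* - \hat{x}_m^*(t), \theta} = \ip{x_m^*, \theta - \hat\theta_t} - \hat{\Delta}_{x_m^*}^m(t) + \ip{\hat{x}_m^*(t), \hat\theta_t - \theta}
\end{equation*}
the first two terms then cancel to something nonpositive, leaving $\Delta_{\hat{x}_m^*(t)}^m \leq \ip{\hat{x}_m^*(t), \hat\theta_t - \theta} \leq \|\hat{x}_m^*(t)\|_{G_t^{-1}} f_n^{1/2} \leq \sqrt{f_n / N_{\hat{x}_m^*(t)}(t)}$ -- controlled by the pull count of the \emph{played} arm rather than by $\hat{\Delta}_{\min}$ or $\lambda_{\min}(G_t)$. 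This drops below $\Delta_{\min}$ once $N_{\hat{x}_m^*(t)}(t) \geq \tau_{\hat{x}_m^*(t)}$ for a random threshold with $\mathbb E[\tau_x] = O(d\log(d\log n)/\Delta_{\min}^2)$, and since every exploitation round increments the count of the arm it plays, the number of offending rounds is at most $\sum_{x \in \cA} \tau_x$ -- no lower bound on $\lambda_{\min}(G_t)$ and no $\mathbb E[t_0] = O(1)$ claim is needed (nor is the latter what the paper obtains; the bad-round count is $o(\log n)$, not $O(1)$). You would need to incorporate this cancellation and counting step for your argument to go through.
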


\begin{lemma}\label{lemma:exploration_regret}
The exploration regret satisfies
\begin{equation}\label{eqn:exploration_regret}
    \limsup_{n\to\infty}\frac{\mathbb E\Big[\sum_{t=1}^n\sum_{x\in\cA_{-}^{c_t}}\Delta_x\mI(X_t=x, \cB_t^c, \cD_{t,c_t}^c)\Big]}{\log(n)}\leq \cC(\theta, \cA^1,\ldots, \cA^M),
\end{equation}
where $\cC(\theta, \cA^1,\ldots, \cA^M)$ is defined in Theorem \ref{thm:lower_bound}.
\end{lemma}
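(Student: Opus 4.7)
}
The plan is to follow the decomposition sketched in Section~\ref{sec:algorithm}, splitting the exploration rounds on event $\cB_t^c$ into the disjoint sets $\textrm{F-Explore}$, $\textrm{UW-Explore}$, and $\textrm{W-Explore}$ corresponding respectively to forced exploration ($X_t=b_2$), ordinary unwasted exploration ($X_t=b_1$), and wasted exploration (the LinUCB branch in \eqref{eqn:linucb}). On $\cB_t^c$ the least-squares estimate $\hat\theta_{t-1}$ is accurate enough (via Lemma~\ref{lemma:concentr}) that $\hat{\Delta}(t-1)\to\Delta$ almost surely on the subsequence of exploration rounds, so the continuity assumption on $T_x^m(\cdot)$ allows us to replace the plug-in allocation $T(\hat{\Delta}(t-1))$ with $T(\Delta)$ up to a vanishing error at the cost of multiplying by $(1+o(1))$. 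This estimator-continuity step is the engine that drives the bound and is the first block I would prove formally.

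Next I would control $\textrm{UW-Explore}$, which is where the asymptotic constant appears. In the unwasted branch, the algorithm plays the action $b_1$ that is farthest from meeting its allocation target $\min(T_x^{c_t}(\hat{\Delta}(t-1)), f_n/\hat{\Delta}_{\min}^2(t-1))$. A tracking argument standard in the pure-exploration literature (cf.\ \citet{GK16, combes2017minimal}) shows that on $\cB_t^c$ the counts satisfy $N_x(t) \le (1+o(1))\, T_x^m(\Delta)\, f_n$ whenever context $m$ has appeared at least $p(m)\cdot t/2$ times, which happens with high probability once $t$ is large by Hoeffding. Summing $\Delta_x^m \cdot T_x^m(\Delta) f_n$ over $x,m$ and using $f_n = 2\log(n)(1+o(1))$ together with the defining optimisation \eqref{eqn:opti}--\eqref{eqn:constraint} gives
\begin{equation*}
\limsup_{n\to\infty}\frac{\E\bigl[\sum_{t\in\textrm{UW-Explore}}\Delta_{X_t}^{c_t}\bigr]}{\log(n)} \le \cC(\theta,\cA^1,\dots,\cA^M),
\end{equation*}
matching the desired constant.

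For $\textrm{F-Explore}$, each round fires only when $N_{b_2}(t-1)\le \varepsilon_t s(t)$, so the total count of forced rounds is at most $\varepsilon_n |\textrm{Explore}| + O(1)$. Since we will have already shown $|\textrm{Explore}| = O(\log n)$ from the unwasted and wasted bounds, and $\varepsilon_n = 1/\log\log n \to 0$, the forced contribution is $o(\log n)$. For $\textrm{W-Explore}$ I would argue that wasted rounds occur only when every $x\in\cA^{c_t}$ has already satisfied its cap $\min(T_x^{c_t}(\hat{\Delta}(t-1)), f_n/\hat{\Delta}_{\min}^2(t-1))$; by the positivity $p(m)>0$ for each context, the ratio of rounds with $c_t=m$ to total unwasted-plus-wasted rounds concentrates, so $|\textrm{W-Explore}|$ can exceed $|\textrm{UW-Explore}|$ only by a constant factor on the high-probability event. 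Coupled with the $\tilde O(d\sqrt{s(t)})$ pseudo-regret of LinUCB on $s(t) = O(\log n)$ wasted rounds, this yields $\E[\sum_{t\in\textrm{W-Explore}}\Delta_{X_t}^{c_t}] = \tilde O(\sqrt{\log n}) = o(\log n)$.

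The principal obstacle, I expect, is making the estimator-to-allocation continuity transfer quantitative and uniform: we need $T(\hat{\Delta}(t-1)) \to T(\Delta)$ jointly with $N_x(t)/f_n \to T_x^m(\Delta)$, while the lazy-update and the self-normalised concentration on $\cB_t^c$ interact. The argument requires verifying a stability lemma for the convex program \eqref{eqn:opti} under small perturbations of the constraint vector $\tilde\Delta$, which is exactly where the uniqueness/continuity hypothesis of Theorem~\ref{thm:upper_bound} is consumed. Once that stability is established, the three-way regret decomposition above assembles into the claimed $\limsup$ bound.
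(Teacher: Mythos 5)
Your proposal follows essentially the same route as the paper's proof: the same three-way split into forced, unwasted, and wasted exploration, the same use of the continuity/stability of the allocation map $T(\cdot)$ at $\Delta$ to transfer from the plug-in allocation to the true one, the bound $|\textrm{W-Explore}| \lesssim |\textrm{UW-Explore}|/p_{\min}$ from positivity of the context probabilities combined with the elliptical-potential bound for the LinUCB branch, and the $\varepsilon_n |\textrm{Explore}| = o(\log n)$ bound for forced rounds. The only point worth tightening is that forced exploration is not merely a cost term: the lower bound $N_x(t) \geq \varepsilon_t s(t)/2$ it guarantees is precisely what makes $\hat{\Delta}(t) \to \Delta$ on exploration rounds, which your continuity step silently presupposes.
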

Combining Lemmas \ref{lemma:exploitation_regret}-\ref{lemma:exploration_regret} together, we reach our conclusion. \hfill $\blacksquare$\\

\section{Proofs of Several lemmas}\label{sec:proof:main_lemmas}

\subsection{Proof of Lemma \ref{lemma:exploitation_regret}: Exploitation Regret}\label{proof:exploitation_regret}

When $\cB_t^c$ defined in \eqref{eqn:events} occurs, we have
\begin{equation}\label{eqn:confid}
    \max_{x\in\cA}\big|\langle \hat{\theta}_t-\theta, x\rangle\big|\leq \|x\|_{G_t^{-1}}f_{n}^{1/2}.
\end{equation}
 When $\cD_{t,m}$ defined in \eqref{def:Dt} occurs, we have 
\begin{equation}\label{eqn:weighted_norm}
    \|x\|_{G_t^{-1}}^2\leq \max\Big\{\frac{\hat{\Delta}_{\min}^2(t)}{f_n}, \frac{(\hat{\Delta}_{x}^m(t))^2}{f_n}\Big\}=\frac{(\hat{\Delta}_{x}^m(t))^2}{f_n},
\end{equation}
holds for any action $x\in\cA^m$ and $\hat{\Delta}_{x}^m(t)>0$. If $x_m^* = \hat{x}_m^*(t)$, there is no regret occurred. Otherwise, putting \eqref{eqn:confid} and \eqref{eqn:weighted_norm} together with the optimal action $x_m^*$, it holds that
\begin{equation}\label{eqn:estimation_error}
    |\langle \hat{\theta}_t-\theta, x_m^*\rangle|\leq \|x_m^*\|_{G_t^{-1}}f_n^{1/2}\leq \hat{\Delta}_{x_{m}^*}^m(t).
\end{equation}
 We decompose the sub-optimality gap of $\hat{x}_m^*(t)$ as follows,
\begin{eqnarray}\label{eqn:E1}
&&\langle x_m^*,\theta\rangle - \langle \hat{x}_{m}^*(t), \theta\rangle\nonumber\\
&=&\langle x_m^*, \theta-\hat{\theta}_t\rangle + \langle x_{m}^*, \hat{\theta}_t\rangle -  \langle \hat{x}_{m}^*(t), \theta - \hat{\theta}_t\rangle - \langle \hat{x}_{m}^*(t), \hat{\theta}_t\rangle\nonumber\\
&=& \langle x_m^*, \theta-\hat{\theta}_t\rangle - \hat{\Delta}_{x_{m}^*}^m(t) + \langle \hat{x}_{m}^*(t), \hat{\theta}_t-\theta\rangle\nonumber\\
&\leq& \langle \hat{x}_{m}^*(t), \hat{\theta}_t-\theta\rangle.
\end{eqnarray}

 For each $x\in\cA$, we define 
\begin{equation}\label{def:tau_x}
\begin{split}
    \tau_x = \min\Big\{&N:\forall t\geq d, \cD_{t,c_t} \ \text{occurs}, N_x(t)\geq N, \text{implies} \ |\langle \hat{\theta}_t-\theta, x\rangle|\leq \frac{\Delta_{\min}}{2}\Big\}.
    \end{split}
\end{equation}
When $N_{\hat{x}_{m}^*(t)}(t)\geq \tau_{\hat{x}_{m}^*(t)}$, it holds that
\begin{equation*}
    |\langle \hat{\theta}_t-\theta, \hat{x}_{m}^*(t)\rangle|\leq \frac{\Delta_{\min}}{2}.
\end{equation*}
Together with \eqref{eqn:E1}, we have
\begin{equation*}
    \langle x_m^*,\theta\rangle - \langle \hat{x}_{m}^*(t), \theta\rangle \leq \frac{\Delta_{\min}}{2}.
\end{equation*}
Combining this with the fact that the instantaneous regret either vanishes or is larger than $\Delta_{\min}$, it indicates $x_{m}^* = \hat{x}_{m}^*(t)$. Therefore, we can decompose the exploitation regret with respect to event $\{N_{\hat{x}_{m}^*(t)}(t)\geq \tau_{\hat{x}_{m}^*(t)}\}$ as follows,
\begin{eqnarray}\label{eqn:R1n}
&&\mathbb E\Big[\sum_{t=1}^n\sum_{x\in\cA_{-}^{c_t}}\Delta_x^{c_t}\mI(X_t=x, \cB_t^c, \cD_{t,c_t})\Big]\nonumber\\
&\leq& \mathbb E\Big[\sum_{m=1}^M\sum_{t=1}^n\sum_{x\in\cA_{-}^m}\Delta_x^m\mI\Big(X_t=x, \cB_t^c,\cD_{t,m}, N_{\hat{x}_{m}^*(t)}(t)\geq \tau_{\hat{x}_{m}^*(t)}\Big)\Big]\nonumber\\
&+&\mathbb E\Big[\sum_{m=1}^M\sum_{t=1}^n\sum_{x\in\cA_{-}^m}\Delta_x^m\mI\Big(X_t=x, \cB_t^c,\cD_{t,m}, N_{\hat{x}_{m}^*(t)}(t)< \tau_{\hat{x}_{m}^*(t)}\Big)\Big].
\end{eqnarray}
 During exploiting the algorithm always executes the greedy action. When $x_m^* = \hat{x}_{m}^*(t)$ the first term in \eqref{eqn:R1n} results in no regret. For the second term in \eqref{eqn:R1n}, we have 
\begin{eqnarray}\label{eqn:second_term}
&&\mathbb E\Big[\sum_{m=1}^M\sum_{t=1}^n\sum_{x\in\cA_{-}^m}\Delta_x^m\mI\Big(X_t=x, \cB_t^c,\cD_{t,m}, N_{\hat{x}_{m}^*(t)}< \tau_{\hat{x}_{m}^*(t)}\Big)\Big]\nonumber\\
&\leq&  \mathbb E\Big[\sum_{m=1}^M\sum_{t=1}^n\mI\Big( \cB_t^c,\cD_{t,m},  N_{\hat{x}_{m}^*(t)}(t)< \tau_{\hat{x}_{m}^*(t)}\Big)\Big]\Delta_{\max}\nonumber\\
&\leq& \sum_{m=1}^M\sum_{x\in\cA}\mathbb E(\tau_x)\Delta_{\max}\leq \sum_{x\in\cA}\mathbb E[\tau_x]\Delta_{\max}.
\end{eqnarray}

It remains to bound $\mathbb E[\tau_x]$ for any $x\in\cA$. Let 
\begin{equation*}
    \Lambda = \min\Big\{\lambda\geq 1:\forall t\geq d, |\langle \hat{\theta}_t-\theta, x\rangle|\leq \|x\|_{G_t^{-1}}f_{n, 1/\lambda}^{1/2}\Big\}.
\end{equation*}
From the definition of $\tau_x$ in \eqref{def:tau_x}, we have
\begin{equation*}
    \tau_x\leq \max\Big\{N: (f_{n, 1/\lambda}/N)^{1/2}\geq \frac{\Delta_{\min}}{2}\Big\}, 
\end{equation*}
which implies $\tau_x \leq 4f_{n, 1/\Lambda}/\Delta_{\min}^2$.
From Lemma \ref{lemma:concentr}, we know that $\mathbb P(\Lambda\geq \lambda) \leq 1/\lambda$, which implies $\mathbb E[\log \Lambda]\leq 1$. Overall, 
\begin{equation}\label{eqn:bound_tau}
    \mathbb E[\tau_x]\leq \frac{4\mathbb E[f_{\Lambda}]}{\Delta_{\min}^2}\leq  \frac{8(1+1/\log(n)) + 4cd\log (d\log(n))}{\Delta_{\min}^2}.
\end{equation}
Combining \eqref{eqn:R1n}-\eqref{eqn:bound_tau} together, we reach 
\begin{equation}\label{eqn:eqn11}
\begin{split}
    &\limsup_{n\to\infty}\frac{\mathbb E\Big[\sum_{t=1}^n\sum_{x\in\cA_{-}^{c_t}}\Delta_x\mI(x_t=x, \cB_t^c, \cD_{t,c_t})\Big]}{\log(n)}\\
    \leq& \limsup_{n\to\infty} \frac{|\cA|\Delta_{\max}\big(8(1+1/\log(n)) + 4cd\log (d\log(n))\big)}{\Delta_{\min}^2 \log(n)} = 0.
\end{split}
\end{equation}
This ends the proof. \hfill $\blacksquare$\\

\subsection{Proof of Lemma \ref{lemma:exploration_regret}: Exploration Regret}\label{proof:exploration_regret}

If all the actions $x\in\cA$ satisfy 
\begin{equation}\label{eqn:N_bound}
N_x(t)\geq \min\Big\{f_n/\hat{\Delta}^2_{\min}(t), T_x(\hat{\Delta}(t))\Big\},
\end{equation}
the following holds using Lemma \ref{lemma:xH},
\begin{equation*}
    \|x\|_{G_t^{-1}}^2\leq \max \Big\{\frac{\hat{\Delta}_{\min}^2(t)}{f_n}, \frac{(\hat{\Delta}_x^{c_t}(t))^2}{f_n}\Big\}, \ \text{for any} \ x\in\cA.
\end{equation*}
 In other words, this implies if there exists an action $x$ such that \eqref{eqn:N_bound} does not hold, e.g. $\cD_{t,c_t}^c$ occurs, there must exist an action $x'\in\cA$ ($x$ and $x'$ may not be the identical) satisfying
$$
N_{x'}(t)\leq \min\Big\{f_t/\hat{\Delta}^2_{\min}(t),T_{x'}(\hat{\Delta}(t))\Big\}.
$$
Based on the criterion in Algorithm \ref{alg:main}, we should explore. However, if $x'$ does not belong to $\cA^{c_t}$ and all the actions within $\cA^{c_t}$ have been explored sufficiently according to the approximation optimal allocation, this exploration is interpreted as ``wasted''. To alleviate the regret of the wasted exploration, the algorithm acts optimistically as LinUCB.

Let's define a set that records the index of action sets that has not been fully explored until round $t$,
\begin{equation}
    \cM_t = \Big\{m:\exists x\in\cA^m, N_x(t)\leq \min\{f_n/\hat{\Delta}^2_{\min}(t),T_{x}(\hat{\Delta}(t)) \}\Big\}.
\end{equation}
When $\cD^c_{t,c_t}$ occurs, it means that $\cM_t\neq \emptyset$. If $\cD^c_{t,c_t}$ occurs but $c_t$ does not belong to $\cM_t$, the algorithm suffers a wasted exploration. We decompose the exploration regret according to the fact if $c_t $ belongs to $\cM_t$,
\begin{eqnarray}\label{eqn:regret_decom}
   &&\mathbb E\Big[\sum_{t=1}^n\sum_{x\in\cA_{-}^{c_t}}\Delta_x\mI(X_t=x, \cB_t^c, \cD_{t,c_t}^c\Big] \nonumber\\
   &=&  \underbrace{\mathbb E\Big[\sum_{t=1}^n\sum_{x\in\cA_{-}^{c_t}}\Delta_x\mI(X_t=x, \cB_t^c, \cD_{t,c_t}^c, c_t\in\cM_t)\Big]}_{R_{\text{ue}}:\text{unwasted exploration}}\nonumber\\
   &&+\underbrace{\mathbb E\Big[\sum_{t=1}^n\sum_{x\in\cA_{-}^{c_t}}\Delta_x\mI(X_t=x, \cB_t^c, \cD_{t,c_t}^c, c_t\notin\cM_t)\Big]}_{R_{\text{we}}:\text{wasted exploration}}.
\end{eqnarray}

We will bound the unwasted exploration regret and wasted exploration regret in the following two lemmas respectively.

\begin{lemma}\label{lemma:unwasted_regret}
The regret during the unwasted explorations satistifies
\begin{equation}\label{eqn:unwasted_regret}
    \limsup_{n\to\infty} \frac{R_{\text{ue}}}{\log(n)} \leq \cC(\theta, \cA_1,\ldots, \cA_M).
\end{equation}
\end{lemma}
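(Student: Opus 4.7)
The plan is to show that $N_x^{\mathrm{ue},m}(n)$, the number of unwasted-exploration rounds where context $m$ is observed and arm $x$ is pulled, asymptotically matches the approximated allocation $T_x^m(\Delta)$, and then to rescale via $\alpha_{x,m}=2T_x^m(\Delta)/f_n$ to identify the resulting coefficient with $\cC(\theta,\cA^1,\ldots,\cA^M)$.

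\textbf{Gap-consistency.} First I would establish that, on the high-probability event $\cB_t^c$, the forced-exploration branch (the $b_2$ rule with $\epsilon_t=1/\log\log t$) combined with the self-normalized concentration of \cref{lemma:concentr} forces $\min_x N_x(t)\to\infty$ on $\{t\in\mathrm{Explore}\}$, and hence $\hat\theta_t\to\theta$ and $\hat\Delta(t)\to\Delta$ in probability. The continuity hypothesis on $T(\cdot)$ at $\Delta$ then yields, for any fixed $\epsilon>0$, a stopping time $\tau_\epsilon$ with $\mathbb E[\tau_\epsilon]<\infty$ beyond which $T_x^m(\hat\Delta(t-1))\le (1+\epsilon)T_x^m(\Delta)$ and $\hat\Delta_{\min}(t-1)\ge (1-\epsilon)\Delta_{\min}$ uniformly in $(m,x)$. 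The regret before $\tau_\epsilon$ contributes at most $\Delta_{\max}\,\mathbb E[\tau_\epsilon]=O(1)=o(\log(n))$.

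\textbf{Tracking and rescaling.} On the $b_1$ branch, $b_1=\arg\min_{x\in\cA^{c_t}}N_x(t-1)/\min(T_x^{c_t}(\hat\Delta(t-1)),f_n/\hat\Delta_{\min}^2(t-1))$, so a standard tracking inequality (adapted to the context-dependent action sets) gives, for $n$ large and each pair $(m,x)$,
\begin{equation*}
N_x^{\mathrm{ue},m}(n)\le (1+\epsilon)\,T_x^m(\Delta) + O(1) + \epsilon_n\,|\mathrm{Explore}|.
\end{equation*}
The cap $f_n/\hat\Delta_{\min}^2$ keeps every per-pair target of order $O(\log(n))$, and combining with the (separately established) bound $|\mathrm{Explore}|=O(\log(n))$ shows the forced-exploration correction is $o(\log(n))$ since $\epsilon_n\to 0$. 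Summing with weights $\Delta_x^m$,
\begin{equation*}
R_{\mathrm{ue}}\le (1+\epsilon)\sum_{m,x}T_x^m(\Delta)\Delta_x^m + o(\log(n)).
\end{equation*}
The substitution $\alpha_{x,m}=2T_x^m(\Delta)/f_n$ gives $H_\alpha=(2/f_n)H_T$, so $\|x\|_{H_\alpha^{-1}}^2=(f_n/2)\|x\|_{H_T^{-1}}^2\le \Delta_x^2/2$, mapping the feasible set of \eqref{eqn:opti} bijectively onto that of \eqref{eqn:opti_problem}; comparing objectives yields $\sum_{m,x}T_x^m(\Delta)\Delta_x^m=(f_n/2)\,\cC(\theta,\cA^1,\ldots,\cA^M)$. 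Since $f_n/\log(n)\to 2$, dividing by $\log(n)$, taking $\limsup$, and letting $\epsilon\to 0$ gives the claim.

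\textbf{Main obstacle.} The tracking step is the delicate point: the target $T_x^m(\hat\Delta(t-1))$ is both random and time-varying, arm $x$ can only be pulled when the right context $c_t=m$ arrives (and different contexts are coupled through the shared Gram matrix $G_t$), and the cap $f_n/\hat\Delta_{\min}^2$ must absorb early-round blow-ups of $T(\hat\Delta)$ before $\hat\Delta_{\min}$ stabilises. Extending the single-target pure-exploration tracking inequality to this random-access, moving-target contextual setting --- and verifying that both the forced-exploration correction and the pre-$\tau_\epsilon$ contributions stay within $o(\log(n))$ --- is where the main bookkeeping lies.
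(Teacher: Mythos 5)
Your proposal follows essentially the same route as the paper's proof: forced exploration guarantees $\min_x N_x(t)\geq \epsilon_t s(t)/2$, concentration plus continuity of $T(\cdot)$ at $\Delta$ make the random targets track $T(\Delta)$ after a prefix of sub-logarithmically many exploration rounds, each per-arm pull count is bounded by its target plus a forced-exploration correction of order $\epsilon_n s(n)=o(\log (n))$, and the rescaling $\alpha=2T/f_n$ together with $f_n\sim 2\log(n)$ identifies the resulting constant with $\cC(\theta,\cA^1,\ldots,\cA^M)$. The one quantitative overclaim is that the pre-$\tau_\epsilon$ contribution is $O(1)$: since the confidence radius $f_{n,\cdot}$ and the forced-exploration rate $\epsilon_n=1/\log\log(n)$ both degrade with $n$, the paper only obtains $\E[s(\tau_\delta)]\leq 8 f_{n,1/\Lambda}\,\epsilon_n^{-1}\delta^{-2}$, which is of order $(\log\log (n))^2$ for fixed $\delta$ rather than $O(1)$ --- still $o(\log(n))$, which is all the argument requires.
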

The detailed proof is deferred to Section \ref{proof:unwasted_regret}.

\begin{lemma}\label{lemma:wasted_regret}
The regret during the wasted explorations satisfies
\begin{equation}\label{eqn:wasted_regret}
    \limsup_{n\to\infty} \frac{R_{\text{we}}}{\log(n)} = 0.
\end{equation}
\end{lemma}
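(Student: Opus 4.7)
My plan is to bound $R_{\text{we}}$ by combining three ingredients: a per-wasted-round optimism bound from LinUCB, a bound on the number of wasted rounds $|W|:=|\textrm{W-Explore}|$, and an elliptic-potential argument tailored to the wasted subsequence. The defining indicator in $R_{\text{we}}$ already restricts to the good event $\cB_t^c$, and there the LinUCB rule \eqref{eqn:linucb} with bonus $\sqrt{f_{n,1/s(t)^2}}$, together with the standard optimism argument (as in the proof of \cref{thm:ucb}), yields
\[
\Delta_{X_t}^{c_t} \;\leq\; 2\sqrt{\beta_t}\,\|X_t\|_{G_{t-1}^{-1}}, \qquad \beta_t := f_{n,1/s(t)^2},
\]
for every wasted round $t$.

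To bound $|W|$, the key observation is that $\cM_t$ can only lose a context during an unwasted exploration round, since only such rounds pull an under-sampled arm of some context in $\cM_t$; on $\cB_t^c$, the continuity of $T_x(\cdot)$ at $\Delta$ assumed in \cref{thm:upper_bound} prevents fluctuations of $\hat\Delta(t-1)$ from spuriously changing $\cM_t$. While $\cM_t\neq\emptyset$, the i.i.d.\ sampling of $c_t\sim p$ satisfies $\mathbb P(c_t\in\cM_t\mid\mathcal F_{t-1})\geq p_{\min}:=\min_m p(m)>0$. A geometric/Wald-style argument then gives $\mathbb E|W|\leq \mathbb E[U(n)]/p_{\min}$, where $U(n):=|\textrm{UW-Explore}|$. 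Since unwasted rounds pull each pair $(x,m)$ at most $\min(T_x(\hat\Delta(t)),f_n/\hat\Delta_{\min}^2(t))=O(\log n)$ times and there are only finitely many pairs, $U(n)=O(\log n)$ (see \cref{lemma:unwasted_regret} and its proof), so $\mathbb E|W|=O(\log n)$.

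Combining via Cauchy--Schwarz, $\sum_{t\in W}\|X_t\|_{G_{t-1}^{-1}} \leq \sqrt{|W|\sum_{t\in W}\|X_t\|^2_{G_{t-1}^{-1}}}$. The crucial refinement is to apply elliptic potential to the wasted subsequence only: with $\lambda_0:=\lambda_{\min}(G_d)>0$ from the initialisation and $G^W_{t-1}:=\sum_{s\in W,\,s<t}X_sX_s^\top$, one has $G_{t-1}\succeq \lambda_0 I + G^W_{t-1}$, hence
\[
\sum_{t\in W}\|X_t\|^2_{G_{t-1}^{-1}} \;\leq\; \sum_{t\in W}\|X_t\|^2_{(\lambda_0 I+G^W_{t-1})^{-1}} \;\leq\; 2d\log\!\left(1+\tfrac{|W|}{d\lambda_0}\right) \;=\; O(d\log\log n).
\]
Since $s(t)\leq U(n)+W(n)=O(\log n)$ gives $\beta_n=O(d\log\log n)$, assembling everything yields
\[
R_{\text{we}} \;\leq\; 2\sqrt{\beta_n}\,\sum_{t\in W}\|X_t\|_{G_{t-1}^{-1}} \;=\; O\!\left(d\log\log n\cdot\sqrt{\log n}\right) \;=\; o(\log n).
\]

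The main obstacle is the bound on $|W|$. Because $\cM_t$ depends on the history through $\hat\Delta(t-1)$, which changes every round, formalising ``$\cM_t$ is essentially frozen between consecutive unwasted rounds'' is delicate and requires working carefully on $\cB_t^c$ and invoking the continuity of $T_x$ to rule out spurious additions to $\cM_t$. A secondary subtlety is that using the global elliptic potential bound $O(d\log n)$ instead of its subsequential $O(d\log\log n)$ refinement would only yield $R_{\text{we}}=O(\log n)$, right at the border of the target sub-logarithmic rate, so exploiting that $|W|$ itself is of only logarithmic size is essential.
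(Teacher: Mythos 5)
Your overall architecture coincides with the paper's: bound each wasted round by the optimism inequality $2\sqrt{\beta_t}\,\|X_t\|_{G_{t-1}^{-1}}$, show $\E[|\textrm{W-Explore}|]\leq \E[|\textrm{UW-Explore}|]/p_{\min}=O(\log n)$ using that contexts are drawn independently of the history, and finish with Cauchy--Schwarz plus an elliptic-potential bound over the wasted subsequence --- your ``crucial refinement'' is exactly what \cref{lemma:sum_Xt} delivers in the paper, with the logarithm depending on $s'(n)$ rather than $n$. Two remarks. First, one step fails as written: you claim optimism already holds on $\cB_t^c$. But $\cB_t^c$ only controls $|\ip{x, \hat\theta_t - \theta}|$ at confidence width $\|x\|_{G_t^{-1}}f_n^{1/2}$ with $f_n = f_{n,1/n}$, whereas the LinUCB bonus used in wasted rounds is $f_{n,1/s(t)^2}^{1/2}$; since $s(t)=O(\log n)$ one has $f_{n,1/s(t)^2} < f_n$, so $\cB_t^c$ does \emph{not} imply that the index of $x^*_{c_t}$ upper bounds $\ip{x^*_{c_t},\theta}$, and the per-round optimism inequality is unjustified. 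The paper patches this by introducing the separate events $\cF_s$ with $\Prob{\cF_s}\leq 1/s^2$ and paying $\sum_{s=1}^n s^{-2}\Delta_{\max}=O(1)$ additional regret for their failure; you need the same device (the fix is routine, but without it the first display of your argument does not hold). Second, your stated ``main obstacle'' is not one: the bound $\E[s'(n)]\leq \E[s(n)]/p_{\min}$ requires neither that $\cM_t$ be frozen between unwasted rounds nor any appeal to the continuity of $T_x$. Since $\cM_t$ is determined by the history before $c_t$ is drawn and $c_t\sim p$ independently, one has $\Prob{c_t\notin\cM_t\mid \cD_{t,c_t}^c}\leq 1\leq p_{\min}^{-1}\Prob{c_t\in\cM_t\mid \cD_{t,c_t}^c}$ round by round, and summing over $t$ gives the claim directly, exactly as in \eqref{bound_sn}.
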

The detailed proof is deferred to Section \ref{proof:wasted_regret}.

Putting \eqref{eqn:regret_decom}-\eqref{eqn:wasted_regret} together, we reach \begin{equation*}
      \limsup_{n\to\infty}\frac{\mathbb E\Big[\sum_{t=1}^n\sum_{x\in\cA_{-}^{c_t}}\Delta_x^{c_t}\mI(X_t=x, \cB_t^c, \cD_{t,c_t}^c)\Big]}{\log(n)}\leq \cC(\theta, \cA^1,\ldots, \cA^M),
\end{equation*}
which ends the proof.

\hfill $\blacksquare$\\

\subsection{Proof of Lemma \ref{lemma:unwasted_regret}: Unwasted Exploration}\label{proof:unwasted_regret}

First, we derive a lower bound for each $N_x(t)$ during the unwasted exploration. Denote $s(t)$ as the number of rounds for unwasted explorations until round $t$. Indeed, forced exploration can guarantee a lower bound for $N_x(t)$: $\min_{x\in\cA} N_x(t)\geq \varepsilon_t s(t)/2$. We prove this by the contradiction argument. Assume this is not true. There may exist $s(t)/2$ rounds $\{t_1, \ldots, t_{s(t)/2}\}\subset \{1,\ldots, t\}$ such that $\min_{x\in\cA}N_x(t)\leq \varepsilon_ts(t)$. After $|\cA|$ such rounds, we have $\min_x N_x(t)$ is incremented by at least 1 which implies $\min_x N_x(t)\geq s(t)/(2|\cA|)$. If $\varepsilon_t\leq 1/|\cA|$, it leads to the contradiction. This is satisfied when $t$ is large since $\varepsilon_t = 1/\log(\log t)$. 

Second, we set $\beta_n = 1/\log(\log(n))$ and define 
\begin{equation}\label{def:tau}
    \zeta = \min \Big\{s:\forall t\geq s, \forall x\in\cA, \text{such that} \ |\langle x, \hat{\theta}_t\rangle - \langle x, \theta \rangle|\leq \beta_n\Big\}.
\end{equation}
Then we decompose the regret during unwasted explorations with respect to event $\{s(t)\geq \zeta\}$ as follows,
\begin{eqnarray}\label{eqn:R_ue_decom}
R_{\text{ue}}&=&\mathbb E\Big[\sum_{t=1}^n\sum_{x\in\cA^{c_t}_{-}}\Delta_x\mI(X_t=x, \cB_t^c, \cD_{t,c_t}^c, c_t\in\cM_t)\Big]\nonumber\\
&=& \underbrace{\mathbb E\Big[\sum_{t=1}^n\sum_{x\in\cA^{c_t}_{-}}\Delta_x\mI(X_t=x, \cB_t^c, \cD_{t,c_t}^c, s(t)\geq \zeta, c_t\in\cM_t)\Big]}_{I_1}\nonumber\\
&&+ \underbrace{ \mathbb E\Big[\sum_{t=1}^n\sum_{x\in\cA^{c_t}_{-}}\Delta_x\mI(X_t=x, \cB_t^c, \cD_{t,c_t}^c, s(t)< \zeta, c_t\in\cM_t)\Big] }_{I_2}.
\end{eqnarray}
To bound $I_2$, we have 
\begin{eqnarray*}
    I_2 = \mathbb E\Big[\sum_{t=1}^n \Delta_{X_t}\mI(\cB_t^c, \cD_{t,c_t}^c,c_t\in\cM_t, s(t)<\zeta)\Big]\leq \Delta_{\max}\mathbb E\Big[\sum_{t=1}^n\mI(s(t)<\zeta, c_t\in\cM_t, \cD_{t,c_t}^c)\Big]\leq \Delta_{\max}\mathbb E[\zeta].
\end{eqnarray*}
It remains to bound $\mathbb E[\zeta]$.  Let's define
\begin{equation*}
    \Lambda = \min\Big\{\lambda:\forall t: \cD_{t,c_t}^c, \forall x\in\cA, s(t)\geq s,\text{such that} \ |\langle x, \hat{\theta}_t\rangle - \langle x, \theta \rangle|\leq \Big(\frac{2}{\varepsilon_t s(t)}f_{n, 1/\lambda}\Big)^{1/2}\Big\}.
\end{equation*}
From the definition of $\zeta$ in \eqref{def:tau}, we have
\begin{equation*}
    \zeta \leq \max\Big\{s:\Big(\frac{f_{n, 1/\lambda}}{\varepsilon_ts}\Big)^{1/2}\geq \beta_n\Big\},
\end{equation*}
which implies
\begin{equation}\label{eqn:upper_tau}
    \zeta \leq \frac{2f_{n, 1/\Lambda}}{\varepsilon_t\beta_n^2}.
\end{equation}
In addition, we define
\begin{equation*}
\Lambda' = \min\Big\{\lambda:\forall t\geq d, \forall x\in\cA, \text{such that} \ |\langle x, \hat{\theta}_t\rangle - \langle x, \theta \rangle|\leq \|x\|_{G_t^{-1}}f_{n, 1/\lambda}^{1/2}\Big\}.
\end{equation*}
Using the lower bound of $N_x(t)$, it holds that 
$$
\|x\|^2_{G^{-1}_t}\leq \frac{1}{N_x(t)}\leq \frac{2}{\varepsilon_t s(t)}.
$$
By Lemma \ref{lemma:concentr}, we have
\begin{equation*}
    \mathbb P\Big(\Lambda\geq \frac{1}{\delta}\Big)\leq \mathbb P\Big(\Lambda'\geq \frac{1}{\delta}\Big)\leq \delta,
\end{equation*}
which implies that $\mathbb E[\log \Lambda]\leq 1$. From \eqref{eqn:upper_tau},
\begin{equation}\label{eqn:exp_zeta}
    \mathbb E[\zeta] \leq \frac{2(1+1/\log(n)) + cd\log(\log (d\log(n)))}{\varepsilon_n\beta_n^2}.
\end{equation}

From \eqref{eqn:exp_zeta}, we have
\begin{equation}\label{eqn:bound_I2}
    \limsup_{n\to\infty}\frac{I_2}{\log(n)}\leq \limsup_{n\to\infty}\frac{\Delta_{\max}\mathbb E[\zeta]}{\log(n)}=0,
\end{equation}
since $\beta_n$ and $\varepsilon_n$ are both sub-logarithmic.
It remains to bound $I_1$. 
 When $s(t)\geq \zeta$, from the definition of $\zeta$ in \eqref{def:tau} we have
\begin{equation*}
    \langle x,\hat{\theta}_t\rangle - \langle x, \theta\rangle \leq \beta_n,
\end{equation*}
holds for any $x\in\cA$. For each $m\in[M]$, we have 
\begin{eqnarray*}
\hat{\Delta}_{x_m^*}(t) &=& \langle \hat{\theta}_t, \hat{x}_{m}^*(t)\rangle - \langle \hat{\theta}_t,x_m^*\rangle\\
&=& \langle \hat{\theta}_t, \hat{x}_{m}^*(t)\rangle- \langle \theta, \hat{x}_{m}^*(t) \rangle- \langle \hat{\theta}_t, x_m^*\rangle + \langle \theta, x_{m}^*\rangle -\langle \theta, x_m^*\rangle + \langle \theta, \hat{x}_{m}^*(t)\rangle\\
&\leq& 2\beta_t - \Delta_{\min}.
\end{eqnarray*}
When $n$ is sufficiently large, it holds that $\beta_n\leq \Delta_{\min}/2$. This implies $\hat{\Delta}_{x_m^*}(t) = 0$ such that $x_m^* = \hat{x}_{m}^*(t)$ for all $t:s(t)>\zeta$. For notation simplicity, we denote $\cE_t = \cB_t^c\cap \cD_{t,c_t}^c\cap \{s(t)\geq \zeta\}\cap \{c_t\in\cM_t\}$. When $\cE_t$ occurs, the algorithm is in the unwasted exploration stage and $x_m^* = \hat{x}_{m}^*(n)$. 

 When $\cD_{t,c_t}^c$ occurs and $c_t\in\cM_t$, there exists $x'\in\cA^{c_t}$ such that $N_{x'}(t)\leq \min(f_n/\hat{\Delta}_{\min}^2(t), T_{x'}(\hat{\Delta}(t)))$. From the design of Algorithm \ref{alg:main}, it holds that 
\begin{itemize}
\item If $x=b_1$, then $N_x(t)\leq \min(f_n/\hat{\Delta}_{\min}^2(t), T_x(\hat{\Delta}(t)))$.
    \item If $x=b_2$, then $N_x(t)=\min_{x\in\cA^{c_t}}N_x(t)\leq \min(f_n/\hat{\Delta}_{\min}^2(t), T_{x'}(\hat{\Delta}(t)))$.
\end{itemize}

Since the algorithm either pulls $b_1$ or $b_2$ in the unwasted exploration, it implies an upper bound for $s(t)$:
\begin{equation}\label{eqn:upper_s}
 s(t)\leq \sum_{x\in\cA^{c_t}}N_x(t)\leq |\cA|\max_{x}\min(f_n/\hat{\Delta}_{\min}^2(t), T_x(\hat{\Delta}(t))).  
\end{equation}

Let $\Lambda$ be the random variable given by
\begin{align*}
\Lambda = \min\left\{\lambda : \max_{x \in \cA} |\ip{x, \hat \theta_t - \theta}| \leq \|x\|_{G_t^{-1}} f_{n,1/\lambda}^{1/2} \text{ for all } t \in [n]\right\},
\end{align*}
where $f_{n,1/\lambda}$ is defined in \cref{eqn:f_n_delta}.
By the concentration inequality Lemma \ref{lemma:concentr}, for any $\lambda \geq 1$, 
\begin{align}
\Prob{\Lambda \geq \lambda} \leq 1/\lambda\,.
\end{align}
Hence the event $F = \{\Lambda \geq n\}$ satisfies $\Prob{F} \leq 1/n$. Denote $\alpha_{x}^m(\Delta) = T_x^m(\Delta)/f_n$ where $T_x^m(\Delta)$ 
is the solution of optimisation problem in Definition \ref{def:optimi} with true $\Delta$. 
Given $\upsilon > 0$ let 
\begin{align*}
\upsilon(\delta)
= \sup\left\{\|\alpha(\Delta) - \alpha(\tilde \Delta)\|_{\infty} : \|\tilde \Delta - \Delta\|_\infty \leq \delta\right\},
\end{align*}
where $\alpha(\Delta) = \{\alpha_x^m(\Delta)\}_{x\in\cA^m, m\in[M]}$.
By continuity assumption of $\alpha$ at $\Delta$ we have $\lim_{\delta\to 0} \upsilon(\delta) = 0$. Moreover, let's define
\begin{align*}
\tau_\delta = \min\left\{t : \max_{x \in \cA} |\ip{x, \hat \theta_s - \theta}| \leq \delta/2 \text{ for all } x \in \cA \text{ and } s \geq t\right\}\,. 
\end{align*}
Since $N_x(t) \geq \epsilon_n s(t) / 2$,
\begin{align*}
\max_{x \in \cA} |\ip{x, \hat \theta_t - \theta}| \leq \sqrt{\frac{2 f_{n,\Lambda}}{\epsilon_n s(t)}}\,.
\end{align*}
Therefore the number of exploration steps at time $\tau_\delta$ is bounded by $s(\tau_\delta) \leq 8 f_{n,1/\Lambda} \epsilon_n^{-1} \delta^{-2}$.

Let $(\delta_n)_{n=1}^\infty$ be a sequence with $\lim_{n\to\infty} \delta_n = 0$ and $\log(\log(n)) / \delta_n^2 = o(\log(n))$.
$I_{11}$ decomposed as
\begin{eqnarray}
I_{11} &=& \mathbb E\Big[\sum_{t=1}^n\sum_{x\in \cA^{c_t}_{-}}\Delta_x\mI(X_t =x, \cE_t)\Big]\nonumber\\
&\leq& \E\left[s(\tau_{\delta_n})\right] + \mathbb E\Big[\sum_{t=\tau_{\delta_n}}^n\sum_{x\in \cA^{c_t}_{-}}\Delta_x\mI(X_t =x, \cE_t)\Big].
\label{eq:explore1}
\end{eqnarray}
The first term in \eqref{eq:explore1} is bounded by
\begin{align*}
\E\left[s(\tau_{\delta_n})\right] \leq \frac{8}{\epsilon_n \delta_n^2} \E[f_{n,1/\Lambda}] = o(\log(n))\,,
\end{align*}
where we used the assumption on $(\delta_n)$ and the fact that $\E[f_{n,1/\Lambda}] = O(\log \log(n))$. By the continuity assumption, the following statement holds
\begin{eqnarray}\label{eqn:upper_N}
   \sum_{t=\tau_{\delta_n}+1}^n\mI(X_t = x, \cE_t)&\leq& \varepsilon_ns(n)+ f_n\min\Big(1/\hat{\Delta}_{\min}^2(n), \alpha_x^{c_t}(\hat{\Delta}(n))/2\Big)\nonumber\\
   &\leq& \varepsilon_ns(n)+ f_n\min\Big(\frac{1}{\hat{\Delta}_{\min}^2(n)}, (\alpha^{c_t}_x(\Delta) + \upsilon(\delta_n))/2\Big).
\end{eqnarray}

The second term in \eqref{eq:explore1} is bounded by
\begin{eqnarray*}
&&\mathbb E\Big[\sum_{t=\tau_{\delta_n}}^n\sum_{x\in \cA^{c_t}_{-}}\Delta_x\mI(X_t =x, \cE_t)\Big]\\
 &\leq& \mathbb E\Big[\sum_{m=1}^M\sum_{x\in \cA^{m}_{-}}\Delta_x\sum_{t=1}^n\mI(X_t =x, \cE_t)\Big]\\
&\leq&\mathbb E\Big[\sum_{m=1}^M\sum_{x\in \cA^{m}_{-}}\Delta_x\varepsilon_ns(n)\mI(\cE_n)\Big]+ \mathbb E\Big[\sum_{m=1}^M\sum_{x\in \cA^{m}_{-}}\Delta_x f_n (\alpha^{m}_x(\Delta) + \upsilon(\delta_n))/2\mI(\cE_n)\Big].
\end{eqnarray*}
To bound the second term, we take the limit as $n$ tends to infinity and the fact that $\lim_{n\to\infty} \upsilon(\delta_n) = 0$ and $f_n \sim 2 \log(n)$ shows that
\begin{eqnarray}
    \limsup_{n\to\infty}\frac{1}{\log(n)}\mathbb E\Big[\sum_{m=1}^M\sum_{x\in \cA^{m}_{-}}\Delta_x f_n (\alpha^{m}_x(\Delta) + \upsilon(\delta_n))/2\mI(\cE_n)\Big]\leq \cC(\theta, \cA^1, \ldots, \cA^M).
\end{eqnarray}

 We bound the first term in the following lemma. The detailed proofs are deferred to Section \ref{proof:forced_exploration}.
\begin{lemma}\label{lemma:forced_exploration}
The regret contributed by the forced exploration satisfies
\begin{equation*}
    \limsup_{n\to\infty} \frac{\mathbb E\Big[\sum_{x\in\cA^{c_t}_{-}}\Delta_x\varepsilon_ns(n)\mI(\cE_n)\Big]}{\log(n)} = 0.
\end{equation*}
\end{lemma}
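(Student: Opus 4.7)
The plan is to exploit the fact that $s(n)$ admits an explicit $O(\log n)$ upper bound from the algorithm's own bookkeeping, while the forced-exploration rate $\varepsilon_n = 1/\log\log n$ is $o(1)$. Multiplying these together gives $\varepsilon_n s(n) = O(\log n / \log\log n) = o(\log n)$, which after summing over the finitely many suboptimal arms yields the desired sub-logarithmic bound.

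More concretely, I would argue as follows. On the event $\cE_n$ the algorithm is in an unwasted exploration round, so the bound \eqref{eqn:upper_s} applies, giving
\begin{equation*}
s(n) \leq |\cA|\,\max_{x \in \cA}\min\bigl(f_n/\hat{\Delta}_{\min}^2(n),\, T_x(\hat{\Delta}(n))\bigr).
\end{equation*}
On $\cE_n$ we also have $s(n) \geq \zeta$, which as shown in the proof of \cref{lemma:unwasted_regret} implies $\hat{x}_m^*(n) = x_m^*$ for every $m$ and $\|\hat{\Delta}(n) - \Delta\|_\infty \leq 2\beta_n \to 0$. Continuity of $T_x(\cdot)$ at $\Delta$ (the hypothesis of \cref{thm:upper_bound}) together with $\hat{\Delta}_{\min}(n) \to \Delta_{\min} > 0$ then shows that on $\cE_n$, for $n$ large enough,
\begin{equation*}
s(n) \leq |\cA|\,\bigl(\max_{x \in \cA} T_x(\Delta) + o(1)\bigr)\, f_n = O(f_n) = O(\log n),
\end{equation*}
since $T_x(\Delta)$ scales like a constant times $f_n$ by the normalisation in \cref{def:optimi}, and $f_n \sim 2\log n$.

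Combining this with $\varepsilon_n = 1/\log\log n$, I obtain on $\cE_n$ the pointwise bound $\varepsilon_n s(n) = O(\log n / \log\log n)$. Since the sum over $x \in \cA^{c_t}_-$ contributes only a factor of $|\cA_-| \, \Delta_{\max}$, which is an instance-dependent constant, it follows that
\begin{equation*}
\mathbb{E}\Bigl[\sum_{x\in\cA^{c_t}_{-}}\Delta_x\,\varepsilon_n s(n)\,\mI(\cE_n)\Bigr] \leq |\cA_-|\,\Delta_{\max}\,\varepsilon_n\,\mathbb{E}[s(n)\mI(\cE_n)] = O\!\left(\frac{\log n}{\log\log n}\right),
\end{equation*}
and dividing by $\log n$ and taking $n \to \infty$ gives zero.

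The main obstacle, and the only step that is not a routine calculation, is justifying that on $\cE_n$ (asymptotically) the bound on $s(n)$ really is $O(\log n)$ with a constant independent of the noise realisation. This requires the continuity of $T_x(\cdot)$ at $\Delta$ to translate the closeness of $\hat{\Delta}(n)$ to $\Delta$ into closeness of the allocation values, and care that $T_x(\Delta)$ itself is finite (guaranteed by the assumption that $T(\Delta)$ is uniquely defined and the feasibility arguments used earlier in \cref{thm:upper_bound}). Everything else is arithmetic.
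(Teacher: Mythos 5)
Your overall strategy is exactly the paper's: start from the bound \eqref{eqn:upper_s} on the unwasted-exploration counter, show that on $\cE_n$ it gives $s(n)=O(f_n)=O(\log n)$, and then let $\varepsilon_n = 1/\log\log n$ kill the logarithm. One step as written does not go through, though. You drop the minimum in $\min\bigl(f_n/\hat{\Delta}_{\min}^2(n),\,T_x(\hat{\Delta}(n))\bigr)$ in favour of the $T_x$ branch and bound $s(n) \leq |\cA|\bigl(\max_{x\in\cA}T_x(\Delta)+o(1)\bigr)f_n$; but $T^m_{x^*_m}(\Delta)=\infty$ for every optimal arm (this is explicit in the proof of \cref{lem:cont}), so $\max_{x\in\cA}T_x(\Delta)=\infty$ and the bound is vacuous. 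The paper instead keeps the \emph{other} branch, $\min(\cdot,\cdot)\leq f_n/\hat{\Delta}_{\min}^2(n)$, and uses only the lower bound $\hat{\Delta}_{\min}(n)\geq \Delta_{\min}/(1+16\beta_n/\Delta_{\min})$ valid on $\cE_n$ to conclude $s(n)\mI(\cE_n)=O(f_n)$ deterministically; with that route the continuity of $T_x(\cdot)$ is not needed for this lemma at all. Your fix is one line (either retain the min, or restrict the $T_x$ branch to suboptimal arms and use the $\hat{\Delta}_{\min}$ branch for the rest), and you already have the needed ingredient since you invoke $\hat{\Delta}_{\min}(n)\to\Delta_{\min}>0$. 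A second, purely notational slip: $T_x(\Delta)$ as defined in \cref{def:optimi} already scales with $f_n$, so your display multiplies by $f_n$ twice; you presumably mean the normalised quantities $\alpha_x = T_x/f_n$ there. With these repairs the argument is correct and coincides with the paper's.
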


This ends the proof. \hfill $\blacksquare$\\

\subsection{Proof of Lemma \ref{lemma:forced_exploration}: Forced Exploration Regret}\label{proof:forced_exploration}

By the upper bound of unwasted exploration counter $s(n)$ in \eqref{eqn:upper_s}, it holds that
\begin{eqnarray*}
\sum_{m=1}^M\sum_{x\in\cA^m_{-}}\Delta_x^m\varepsilon_ns(n) \mI(\cE_n)&\leq&\sum_{m=1}^M\sum_{x\in\cA^m_{-}}\Delta_x^m\varepsilon_n|\cA|\max_{x}\min(f_n/\hat{\Delta}_{\min}^2(n), T_x(\hat{\Delta}(n)))\mI(\cE_n)\\
&\leq & \varepsilon_n|\cA|\sum_{m=1}^M\sum_{x\in\cA^m_{-}}\Delta_x^mf_n/\hat{\Delta}_{\min}(n)\mI(\cE_n).
\end{eqnarray*}
When event $\cE_n$ occurs, 
\begin{eqnarray*}
    \max_{x\neq \hat{x}_{m}^*(n)}\frac{(\Delta_x^{m})^2}{(\hat{\Delta}_{x}(n))^2}&\leq& \max_{x\neq \hat{x}_{m}^*(n)} \frac{(\Delta_x^{m})^2}{(\Delta_x^m - 2\beta_n)^2}\nonumber\\
    &=& \max_{x\neq \hat{x}_{m}^*(n)}\Big(1+\frac{4(\Delta_x^m-\beta_n)\beta_n}{(\Delta_x^m-2\beta_n)^2}\Big)\leq 1+\frac{16\beta_n}{\Delta_{\min}},
\end{eqnarray*}
For any $x\in\cA^m$,
\begin{equation}\label{eqn:lower_bound_Delta_min}
    \hat{\Delta}_{\min}(n)\geq\frac{1}{1+16\beta_n/\Delta_{\min}}\Delta_{\min}.
\end{equation}
Since $\varepsilon_n=1/(\log \log(n))$, we have 
\begin{equation}\label{eqn:bound_I111}
    \limsup_{n\to\infty}\frac{\sum_{x\in\cA_{-}}\Delta_x\varepsilon_ns(n) \mI(\cE)}{\log(n)} = 0.
\end{equation}

This ends the proof. \hfill $\blacksquare$\\

\subsection{Proof of Lemma \ref{lemma:wasted_regret}: Wasted Exploration}\label{proof:wasted_regret}

First, we define 
\begin{equation}
    \cF_s = \Big\{\exists t\geq d, \exists x: \langle x, \hat{\theta}_t\rangle - \langle x,\theta \rangle \geq \|x\|_{G_t^{-1}}f_{n, 1/s^2}^{1/2}\Big\},
\end{equation}
where $f_{n, 1/s^2}$ is defined in Lemma \ref{lemma:concentr}. From Lemma \ref{lemma:concentr}, we also have $\mathbb P(\cF_s)\leq 1/s^2$. Let $s'(t), s(t)$ be the number of rounds for wasted explorations, unwasted explorations until round $t$ accordingly, and $x_t^*$ is the optimal arm at round $t$. We decompose the regret as follows
\begin{equation}
    R_{\text{we}} \leq \underbrace{\mathbb E\Big[\sum_{t\in\text{wasted}} \mI(\cF_{s'(t)})\Delta_{\max}\Big]}_{I_1}+ \underbrace{\mathbb E\Big[\sum_{t\in\text{unwasted}} \mI(\cF_{s'(t)}^c)\langle x_t^*-X_t, \theta\rangle\Big]}_{I_2}.
\end{equation}
To bound $I_1$, we have
\begin{equation}\label{bound_I1}
    I_1\leq \sum_{s=1}^n \mathbb P(\cF_s)\Delta_{\max} \leq \sum_{s=1}^n\frac{1}{s^2}\Delta_{\max} = (2-\frac{1}{n})\Delta_{\max}.
\end{equation}
To bound $I_2$, let's denote $\tilde{\theta}_t$ as the optimistic estimator. Following the standard one step regret decomposition (See the proof of Theorem 19.2 in \cite{lattimore2018bandit} for details), it holds that 
\begin{eqnarray*}
    \langle x_t^*- X_t, \theta\rangle 
    &=& \langle x_t^*, \theta\rangle - \langle X_t, \theta\rangle\\
    &\leq& \langle X_t, \tilde{\theta}_t\rangle - \langle X_t,\theta\rangle\\
    &=& \langle X_t, \hat{\theta}_t-\theta\rangle + \langle  X_t,\tilde{\theta}_t -\hat{\theta}_t\rangle.
\end{eqnarray*}
When $\cF_{s'(t)}^c$ occurs, we have
\begin{eqnarray*}
    \langle X_t, \hat{\theta}_t-\theta\rangle\leq \|X_t\|_{G_t^{-1}} f_{n, 1/(s'(t)^2)}, \langle X_t, \tilde{\theta}_t-\hat{\theta}_t\rangle\leq \|X_t\|_{G_t^{-1}} f_{n, 1/(s'(t)^2)}.
\end{eqnarray*}
Putting the above results together, we have 
\begin{equation*}
    \langle x_t^*- X_t, \theta\rangle \leq 2\|X_t\|_{G_t^{-1}}f^{1/2}_{n,1/(s'(t)^2)}.
\end{equation*}
Applying Lemma \ref{lemma:sum_Xt}, we can bound $I_2$ as follows
\begin{eqnarray}\label{bound_I2}
    I_2&\leq& \mathbb E\Big[2f_{n,1/(s'(t)^2)}^{1/2}\sum_{t\in\text{wasted}}\|X_t\|_{G_t^{-1}}\Big]\nonumber\\
    &\leq& \mathbb E\Big[2f_{n,1/(s'(t)^2)}^{1/2}\sqrt{2s'(n)d\log \Big(\frac{s'(n)+d}{d}\Big)}\Big].
\end{eqnarray}

Recall that $p_{\min} = \min_{m}p_m$ be the minimum probability that each action set arrives. It is easy to see $\mathbb P (c_t\in\cM_t|\cD_{t,c_t}^c) =\mathbb P(c_t\in\cM_t|\cM_t\neq \emptyset)= \sum_{m\in\cM_t} p_m\geq p_{\min}$.
We bound $s'(n)$ by $s(n)$ as follows
\begin{eqnarray}\label{bound_sn}
    &&\mathbb E[s'(n)] = \mathbb E\Big[\sum_{t=1}^n\mI\Big(\cD_{t,c_t}^c, c_t\notin \cM_t\Big)\Big] = \sum_{t=1}^n \mathbb P(\cD_{t,c_t}^c)\mathbb P(c_t\notin\cM_t|\cD_{t,c_t}^c)\nonumber\\
    &\leq& \frac{1}{p_{\min}}\sum_{t=1}^n \mathbb P(\cD_{t,c_t}^c)\mathbb P(c_t\in\cM_t|\cD_{t,c_t}^c)\nonumber\\
    &=&\frac{1}{p_{\min}} \mathbb E \Big[\sum_{t=1}^n\mI\Big(\cD_{t,c_t}^c,c_t\in\cM_t\Big)\Big] = \frac{1}{p_{\min}}\mathbb E[s(n)].
\end{eqnarray}
Putting \eqref{bound_I1}-\eqref{bound_sn} together, The regret in the wasted exploration can be upper bounded by
\begin{equation}\label{eqn:R_we_bound}
    R_{\text{we}} \leq (2-\frac{1}{n})\Delta_{\max} + \frac{2}{p_{\min}}\sqrt{2d\log \Big(\frac{s(n)/p_{\min}+d}{d}\Big)f_{n,(p_{\min}/s(n))^2}s(n)/p_{\min}},
\end{equation}
where $f_{n,(p_{\min}/s(n))^2}$ is defined in \eqref{lemma:concentr}.

Next, we recall the upper bound \eqref{eqn:upper_s} for the number of pulls in unwasted exploration,
\begin{eqnarray*}
    s(n)&\leq& |\cA|\max_x\min\Big\{f_n/\hat{\Delta}_{\min}(n), T_x(\hat{\Delta}(n))\Big\}\\
    &\leq& |\cA|f_n/\hat{\Delta}_{\min}(n).
\end{eqnarray*}
From \eqref{eqn:lower_bound_Delta_min}, we have
\begin{eqnarray*}
    \hat{\Delta}_{\min}(n)\geq\frac{1}{1+\delta_n}\Delta_{\min}\geq \frac{\Delta_{\min}^2}{\Delta_{\min}+16\beta_n},
\end{eqnarray*}
where $\beta_n = 1/\log (\log(n))$. Overall, we see $s(n)\leq \cO(\log(n))$. Plugging this into \eqref{eqn:R_we_bound}, we reach
\begin{equation*}
    \limsup_{n\to\infty} \frac{R_{\text{we}}}{\log(n)} = 0.
\end{equation*}
This ends the proof.  \hfill $\blacksquare$\\

\subsection{Proof of Lemma \ref{lem:techlemma}}\label{proof:techlemma}

First, we start by the following claim:
\begin{claim}\label{claim:mxclaim}
Assume $H_n$ is a sequence of $d\times d$ positive definite matrices such that $H_n \to H$ and $H$ is positive semidefinite. Then, $H H_n^{-1} H \to H$ as $n\to\infty$.
\end{claim}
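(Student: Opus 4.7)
My plan is to prove the claim by spectrally diagonalizing $H$, reducing the problem to a block decomposition of $H_n$, and then computing the relevant entries of $H_n^{-1}$ via the Schur complement. First, since $H$ is PSD, write $H = U \, \mathrm{diag}(\Lambda_+, 0) \, U^\top$ with $U$ orthogonal, $r = \mathrm{rank}(H)$, and $\Lambda_+ \in \mathbb R^{r\times r}$ diagonal and strictly positive. Conjugation by $U$ preserves positive definiteness and commutes with inversion, so after replacing $H_n$ by $U^\top H_n U$ I may assume $H = \mathrm{diag}(\Lambda_+, 0)$.

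Next, I partition $H_n$ conformally as $H_n = \begin{pmatrix} A_n & B_n \\ B_n^\top & C_n \end{pmatrix}$, so that convergence $H_n \to H$ forces $A_n \to \Lambda_+$, $B_n \to 0$, and $C_n \to 0$. Positive definiteness of $H_n$ guarantees $C_n \succ 0$ for all sufficiently large $n$, so the Schur complement $S_n := A_n - B_n C_n^{-1} B_n^\top$ exists and is PD, and the block-inverse formula gives $[H_n^{-1}]_{11} = S_n^{-1}$. Because $H$ annihilates the last $d - r$ rows and columns of whatever it multiplies, a direct computation yields
\begin{equation*}
H H_n^{-1} H = \begin{pmatrix} \Lambda_+ \, S_n^{-1} \, \Lambda_+ & 0 \\ 0 & 0 \end{pmatrix}.
\end{equation*}
The claim thus reduces to $S_n \to \Lambda_+$, and since $A_n \to \Lambda_+$ it suffices to show $B_n C_n^{-1} B_n^\top \to 0$.

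The hard part will be precisely this last step: taken separately, $B_n \to 0$ while $C_n^{-1}$ diverges, so one needs joint control of the two. I plan to exploit positive definiteness of $H_n$: from $S_n \succ 0$ one obtains the a priori operator bound $B_n C_n^{-1} B_n^\top \preceq A_n$, which provides uniform boundedness, and then the combination of $B_n \to 0$ with the fact that $H_n$ is PD with limit $H$ of rank exactly $r$ is used to rule out any nonzero subsequential limit of $B_n C_n^{-1} B_n^\top$. Once $S_n \to \Lambda_+$ is established, continuity of matrix inversion on the open PD cone gives $S_n^{-1} \to \Lambda_+^{-1}$, and multiplying by $\Lambda_+$ on both sides closes the argument, yielding $H H_n^{-1} H \to H$.
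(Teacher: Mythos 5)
Your reduction is essentially the paper's: put $H$ in the block form $\mathrm{diag}(\Lambda_+,0)$, partition $H_n$ conformally, apply the block-inverse formula, and observe that everything hinges on the single limit $B_n C_n^{-1} B_n^\top \to 0$ (the paper uses the complementary Schur complement and needs the equivalent limit $A_n^{-1}B_nS_n^{-1}B_n^\top A_n^{-1}\to 0$). The genuine gap is that you only announce a plan for this last step, and no plan can succeed, because the limit is false under the stated hypotheses. Take $d=2$ and
\[
H_n = \begin{pmatrix} 1 & 1/\sqrt{2n} \\ 1/\sqrt{2n} & 1/n \end{pmatrix},
\]
which is positive definite (determinant $1/(2n)>0$) and converges to $H=\mathrm{diag}(1,0)$. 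Here $B_nC_n^{-1}B_n^\top = (1/(2n))\cdot n = 1/2$ for every $n$, the Schur complement is $S_n \equiv 1/2$, and a direct computation gives $HH_n^{-1}H=\mathrm{diag}(2,0)\not\to H$. Your a priori bound $B_nC_n^{-1}B_n^\top \preceq A_n$ is correct but yields only boundedness ($1/2\leq 1$ here); positive definiteness of $H_n$ together with $\mathrm{rank}(H)=r$ cannot rule out a nonzero cluster point, as the example shows.

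You are in good company: the paper's own proof makes the same unjustified leap, asserting $A_n^{-1}B_nS_n^{-1}B_n^\top A_n^{-1}\to 0$ ``because $S_n\to 0$,'' which is a non sequitur since $S_n^{-1}$ diverges at exactly the rate needed to cancel $B_n\to 0$ (in the example above this product equals $1$ for all $n$). As stated, the claim is false; it becomes true under an additional hypothesis coupling the off-diagonal block to the degenerating block, e.g.\ $\|B_n\|^2/\lambda_{\min}(C_n)\to 0$, or under structural properties of the specific matrices $H_n=\bar G_n^{-1}/\|\bar G_n^{-1}\|$ arising in \cref{lem:techlemma}, which would have to be established separately.
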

\begin{proof}
Without loss of generality, we can assume that $H$ is given in the block matrix form
\begin{align*}
H = \begin{pmatrix}
A & 0 \\
0  & 0
\end{pmatrix}
\end{align*}
where $A$ is a nonsingular $m\times m$ matrix with $m>0$.
(If $m=0$, $H$ is the all zero matrix and the claim trivially holds.)
Consider the same block partitioning of $H_n$:
\begin{align*}
H_n = \begin{pmatrix}
A_n &  B_n \\
B_n^\top & D_n 
\end{pmatrix}\,,
\end{align*}
where $A_n$ is thus also an $m\times m$ matrix. 
Clearly, $A = \lim_{n\to\infty} A_n$ and $A_n$ is nonsingular (or $H_n$ would be singular),
while $B_n \to B$ and $D_n \to D$ where all entries in $B$ and $D$ are zero.
Then, as is well known,
\begin{align*}
H_n^{-1} = \begin{pmatrix}
A_n^{-1}+A_n^{-1} B_n S_n^{-1} B_n^\top A_n^{-1} & - A_n^{-1} B_n S_n^{-1} \\
- S_n^{-1} B_n^\top A_n^{-1} & S_n^{-1} 
\end{pmatrix}\,.
\end{align*}
where $S_n = D_n - B_n^\top A_n^{-1} B_n$ is the Schur-complement of block $D_n$ of matrix $H_n$.
Note that 
\begin{align*}
H H_n^{-1} H = 
\begin{pmatrix}
A(A_n^{-1}+A_n^{-1} B_n S_n^{-1} B_n^\top A_n^{-1})A & 0 \\
0 & 0
\end{pmatrix}\,.
\end{align*}
Since the matrix inverse is continuous if the limit is nonsingular, $A_n^{-1} \to A^{-1}$.
Clearly, it suffices to show that $A_n^{-1}+A_n^{-1} B_n S_n^{-1} B_n^\top A_n^{-1}  \to A^{-1}$.
Hence, it remains to check that $A_n^{-1} B_n S_n^{-1} B_n^\top A_n^{-1} \to 0$.
This follows because $B_n \to B$ and $D_n\to D$ and $S_n \to D - B^\top A^{-1} B=0$ where $D=0$ and $B=0$.
\end{proof}
\begin{proof}[Proof of \cref{lem:techlemma}]
Let  $L=\limsup_{n\to\infty} \log(n) \|s\|_{G_n^{-1}}^2$. We need to prove that $L\le 1/c$. 
Without loss of generality, assume that $L>0$ (otherwise there is nothing to be proven)
and that for some $H$ positive semidefinite matrix, $\zeta\in \RR$ and $\kappa\in \RR \cup \{\infty\}$,
{\em (i)}   $\log(n) \|s\|_{G_n^{-1}}^2 \to L$;
{\em (ii)}  $H_n=G_n^{-1}/\norm{G_n^{-1}} \to H$;
{\em (iii)} $\lambda_{\min}(G_n)/\log(n) \to \zeta>0$ and
{\em (iv)} $\frac{\rho_n(H)}{\log(n) \|s\|_{G_n^{-1}}^2} \to \kappa \ge c$.
We claim that $\|s\|_H>0$, hence $\rho_n(H)$ is well-defined  and in particular $\rho_n(H) \to 1$ as $n\to\infty$.
If this was true, then the proof was ready since
\begin{align*}
L = \lim_{n\to\infty} \frac{\log(n) \|s\|_{G_n^{-1}}^2}{\rho_n(H)} 
= \frac{1}{ \lim_{n\to\infty} \frac{\rho_n(H)}{\log(n) \|s\|_{G_n^{-1}}^2} } =1/\kappa \le 1/c\,.
\end{align*}

Hence, it remains to show the said claim.
We start by showing that $\|s\|_H>0$. For this note that $\norm{G_n^{-1}} = 1/\lambda_{\min}(G_n)$ and hence
\begin{align*}
\|s\|^2_{\frac{G_n^{-1}}{\norm{G_n^{-1}}}}
=
\frac{\lambda_{\min}(G_n)}{\log(n)}\,  \|s\|^2_{G_n^{-1}} \log(n)\,.
\end{align*}
Taking the limit of both sides, we get $\|s\|_H^2  \to \zeta L >0$.
Now,
\begin{align*}
\rho_n(H) 
= \frac{\|s\|^2_{G_n^{-1}}\|s\|^2_{H G_n H}}{\|s\|_H^4}
= \frac{\|s\|^2_{H_n}\|s\|^2_{H H_n^{-1} H}}{\|s\|_H^4}
\stackrel{n\to\infty}{\to}
\frac{\|s\|^2_{H}\|s\|_{H}^2}{\|s\|_H^4} = 1\,,
\end{align*}
where we used \cref{claim:mxclaim}.
\end{proof}

\subsection{Proof of \cref{thm:ucb}} \label{sec:thm:ucb}

Suppose that $\{x_m^* : m \in [M]\}$ spans $\mathbb R^d$.
Recall that LinUCB chooses 
\begin{align*}
X_t = \argmax_{x \in \cA^{c_t}} \ip{x, \hat \theta_{t-1}} + \norm{x}_{G_{t-1}^{-1}} \beta_t^{1/2}\,,
\end{align*}
where $\beta_t = O(d \log(t))$ is chosen so that
\begin{align*}
\Prob{\norm{\hat \theta_t - \theta}_{G_t} \geq \beta_t} \leq 1/t^3\,,
\end{align*}
which is known to be possible \citep[\S20]{lattimore2018bandit}.
Define $F_t$ to be the event that $\norm{\hat \theta_t - \theta}_{G_t} \geq \beta_t$.
Then the instantaneous pseudo-regret of LinUCB is bounded by
\begin{align*}
\Delta_t \leq \one_{F_t} + \ip{x^*_t - X_t, \theta} \leq \one_{F_t} + 2 \beta_t^{1/2} \norm{X_t}_{G_t^{-1}} \leq \one_{F_t} + 2 \sqrt{\beta_t \norm{G_t^{-1}}}\,,
\end{align*}
where the matrix norm is the operator name (in this case, maximum eigenvalue).
Let $\tau = 1 + \max\{t : F_t \text{ holds} \}$, which satisfies $\E[\tau] = O(1)$.
The cumulative regret after $\tau$ is bounded almost surely by
\begin{align*}
\sum_{t=\tau}^n \ip{x^*_t - X_t, \theta} = O\left(\sqrt{n} \log(n)\right)\,,
\end{align*}
where the Big-Oh hides constants that only depend on the dimension. Hence all optimal arms are played linearly often after $\tau$, which by the assumption that
$\{x_m^* : m \in [M]\}$ spans $\mathbb R^d$ implies that $\norm{G_t^{-1}} = O(1/t)$.
Hence the instantaneous regret for times $t \geq \tau$ satisfies
\begin{align*}
\Delta_t = O\left(\sqrt{\frac{\beta_t}{t}}\right)\,.
\end{align*}
Since $\Delta_t \in \{0\} \cup [\Delta_{\min}, 1]$, it follows that the regret vanishes once $\Delta_t < \Delta_{\min}$.
But by the previous argument and the assumption on $\beta_t$ we have for $t \geq \tau$ that 
\begin{align*}
\Delta_t \leq 2\sqrt{\beta_t \norm{G_t^{-1}}} = O\left(\sqrt{\frac{\log(t)}{t}}\right)\,.
\end{align*}
Hence for sufficiently large $t$ (independent of $n$) the regret vanishes, which completes the proof.

\section{Supporting Lemmas}\label{sec:supporting_lemmas}

\begin{lemma}[Bretagnolle-Huber Inequality]\label{lem:kl}
Let $\mathbb P$ and $\tilde{\mathbb P}$ be two probability measures on the same measurable space $(\Omega,\cF)$. Then for any event $\cD\in \cF$,
\begin{equation}\label{eqn:kl}
\mathbb P(\cD) + \tilde{\mathbb P}(\cD^c) \geq \frac{1}{2} \exp\left(-\text{KL}(\mathbb P, \tilde{\mathbb P})\right)\,,
\end{equation}
where $\cD^c$ is the complement event of $\cD$ ($\cD^c = \Omega\setminus \cD$) and $\text{KL}(\mathbb P, \tilde{\mathbb P})$ is the KL-divergence between $\PP$ and $\tilde{\mathbb P}$, which is defined as $+\infty$, if $\PP$ is not absolutely continuous with respect to $\tilde{\mathbb P}$, and is $\int_\Omega d\PP(\omega) \log \frac{d\PP}{d\tilde{\mathbb P}}(\omega)$ otherwise.
\end{lemma}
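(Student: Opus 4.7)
The plan is to prove this inequality by reducing it to a purely measure-theoretic statement about the ``overlap'' of the two measures, which no longer involves $\cD$, and then bound that overlap using Cauchy--Schwarz and Jensen's inequality. First I would dispose of the trivial case $\text{KL}(\mathbb P, \tilde{\mathbb P}) = \infty$, for which the right-hand side is zero. Otherwise I pass to a common dominating measure $\mu$ (e.g.\ $\mu = (\mathbb P + \tilde{\mathbb P})/2$) with densities $p = d\mathbb P/d\mu$ and $\tilde p = d\tilde{\mathbb P}/d\mu$; finiteness of KL forces $\mathbb P \ll \tilde{\mathbb P}$, so ratios of the form $\tilde p/p$ are well-defined $\mathbb P$-a.s.

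The first step is to observe that for any event $\cD$,
\begin{equation*}
\mathbb P(\cD) + \tilde{\mathbb P}(\cD^c) = \int_{\cD} p\, d\mu + \int_{\cD^c} \tilde p\, d\mu \;\geq\; \int \min(p, \tilde p)\, d\mu,
\end{equation*}
since $p \geq \min(p,\tilde p)$ on $\cD$ and $\tilde p \geq \min(p,\tilde p)$ on $\cD^c$. This reduces the problem to lower-bounding $\int \min(p,\tilde p)\, d\mu$ by $\tfrac{1}{2}\exp(-\text{KL})$ independently of $\cD$. For this I would apply Cauchy--Schwarz to the factorization $\sqrt{p\tilde p} = \sqrt{\min(p,\tilde p)}\cdot \sqrt{\max(p,\tilde p)}$:
\begin{equation*}
\Big(\int \sqrt{p\tilde p}\, d\mu\Big)^2 \leq \int \min(p,\tilde p)\, d\mu \cdot \int \max(p,\tilde p)\, d\mu,
\end{equation*}
and then use $\max(p,\tilde p) = p + \tilde p - \min(p,\tilde p)$ to bound the second factor by $2$ (since $p$ and $\tilde p$ each integrate to $1$). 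This gives $\int \min(p,\tilde p)\,d\mu \geq \tfrac{1}{2} (\int \sqrt{p\tilde p}\,d\mu)^2$, converting the task into a lower bound on a Bhattacharyya-type integral.

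Finally, I would apply Jensen's inequality to the convex function $-\log$ under the probability measure $\mathbb P$:
\begin{equation*}
\int \sqrt{p\tilde p}\, d\mu = \mathbb E_{\mathbb P}\!\big[\sqrt{\tilde p/p}\big] \geq \exp\!\Big(\tfrac{1}{2}\,\mathbb E_{\mathbb P}[\log(\tilde p/p)]\Big) = \exp\!\big(-\tfrac{1}{2}\,\text{KL}(\mathbb P, \tilde{\mathbb P})\big),
\end{equation*}
squaring which and chaining the three inequalities yields the claim. None of the steps is really an obstacle; the only small trick worth remembering is the Cauchy--Schwarz step with the $\min\cdot\max$ factorization, together with the bound $\int \max(p,\tilde p)\, d\mu \leq 2$, which is exactly what produces the constant $\tfrac{1}{2}$ on the right-hand side of \eqref{eqn:kl}.
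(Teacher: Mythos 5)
Your proposal is correct, and it is essentially the proof the paper relies on: the paper does not prove \cref{lem:kl} itself but defers to the cited book of Tsybakov, where the argument is exactly the one you give — reduce $\mathbb P(\cD)+\tilde{\mathbb P}(\cD^c)$ to $\int \min(p,\tilde p)\,d\mu$, apply Cauchy--Schwarz via the $\min\cdot\max$ factorization together with $\int\max(p,\tilde p)\,d\mu\le 2$, and finish with Jensen's inequality to lower-bound the Hellinger-affinity term by $\exp(-\text{KL}(\mathbb P,\tilde{\mathbb P}))$. No gaps; the handling of the infinite-KL case and of the set where $p=0$ is fine.
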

The proof can be found in the book of \citet{Tsybakov:2008:INE:1522486}. When $\text{KL}(\mathbb P, \tilde{\mathbb P})$ is small, we may expect the probability measure $\mathbb P$ is close to the probability measure $\tilde{\mathbb P}$. Note that $\mathbb P(\cD) + \mathbb P(\cD^c)=1$. If $\tilde{\mathbb P}$ is close to $\mathbb P$, we may expect $\mathbb P(\cD)+\tilde{\mathbb P}(\cD^c)$ to be large.

\begin{lemma}[Divergence Decomposition]\label{lem:inf-processing}
Let $\mathbb P$ and $\tilde{\mathbb P}$ 
be two probability measures on the sequence  $(A_1, Y_1,\ldots,A_n,Y_n)$ for a fixed
bandit policy $\pi$ interacting with a linear contextual bandit with standard Gaussian noise and parameters $\theta$ and $\tilde{\theta}$ respectively. Then the KL divergence of $\mathbb P$ and $\tilde{\mathbb P}$ can be computed  exactly and is given by
\begin{equation}\label{eqn:inf-processing}
\text{KL}(\mathbb P, \tilde{\mathbb P}) = \frac12 \sum_{x \in \mathcal A} \mathbb E[T_x(n)]\, \langle x, \theta - \tilde{\theta}\rangle^2\,,
\end{equation}
where $\mathbb E$ is the expectation operator induced by $\PP$. 
\end{lemma}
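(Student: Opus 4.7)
The plan is to invoke the chain rule for KL divergence across the $n$ rounds, exploiting that both measures share the same context distribution $p$ and the same policy $\pi$, so that the only contribution to the divergence comes from the Gaussian reward model. Let $\mathcal H_t = (c_1, X_1, Y_1, \ldots, c_t, X_t, Y_t)$ denote the history through round $t$. The joint density of $\mathcal H_n$ under $\mathbb P$ factors as
\begin{equation*}
\prod_{t=1}^n p(c_t)\, \pi(X_t \mid c_t, \mathcal H_{t-1})\, \phi\big(Y_t - \langle X_t, \theta\rangle\big)\,,
\end{equation*}
where $\phi$ is the standard Gaussian density, and analogously under $\tilde{\mathbb P}$ with $\theta$ replaced by $\tilde\theta$. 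The context and policy factors are identical under the two measures and therefore cancel in the log-likelihood ratio.

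Next, I would take the expectation of the log-likelihood ratio. After cancellation, only the reward terms remain, and the conditional KL between the one-step reward laws $\mathcal N(\langle X_t, \theta\rangle, 1)$ and $\mathcal N(\langle X_t, \tilde\theta\rangle, 1)$ given $X_t$ equals $\tfrac{1}{2}\langle X_t, \theta - \tilde\theta\rangle^2$ by the standard two-Gaussian KL formula. Summing round by round via the tower property yields
\begin{equation*}
\text{KL}(\mathbb P, \tilde{\mathbb P}) = \frac{1}{2} \sum_{t=1}^n \mathbb E\big[\langle X_t, \theta - \tilde\theta\rangle^2\big]\,.
\end{equation*}

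Finally, I would rewrite the inner expectation as a sum over arms using $\langle X_t, \theta - \tilde\theta\rangle^2 = \sum_{x \in \cA} \mI(X_t = x) \langle x, \theta - \tilde\theta\rangle^2$ and interchange the two summations, giving
\begin{equation*}
\text{KL}(\mathbb P, \tilde{\mathbb P}) = \frac{1}{2}\sum_{x \in \cA} \mathbb E\big[T_x(n)\big]\, \langle x, \theta - \tilde\theta\rangle^2\,,
\end{equation*}
which is exactly the claim, since $T_x(n) = \sum_{t=1}^n \mI(X_t = x)$. The only delicate point is justifying the cancellation of the action-selection kernel in the log-ratio: since $X_t$ is measurable with respect to $(\mathcal H_{t-1}, c_t)$ and the same (possibly randomised) policy is used under both measures, this is immediate from the chain rule for KL divergence and is the standard bandit KL-decomposition argument.
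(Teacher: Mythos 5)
Your proof is correct and is essentially the standard argument that the paper defers to (it cites Lemma 15.1 of \citet{lattimore2018bandit} rather than reproducing the proof): chain rule for KL, cancellation of the shared context and policy kernels, the two-Gaussian KL formula, and the tower property to collect terms into $\mathbb E[T_x(n)]$. No gaps.
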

This lemma appeared as Lemma 15.1 in the book of \citet{lattimore2018bandit}, where the reader can also find the proof.

\begin{lemma}\label{lemma:sum_Xt}
Let $\{X_t\}_{t=1}^{\infty}$ be a sequence in $\mathbb R^d$ satisfying $\|X_t\|_2\leq 1$ and $G_t =\sum_{s=1}^tX_tX_t^{\top}$. Suppose that $\lambda_{\min}(G_d)\geq c$ for some strictly positive $c$. For all $n>0$, it holds that 
\begin{equation*}
    \sum_{t=d+1}^n\|X_t\|_{G_t^{-1}}\leq \sqrt{2nd\log(\frac{d+n}{d})}.
\end{equation*}
\end{lemma}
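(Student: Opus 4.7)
The plan is to combine Cauchy-Schwarz with a standard elliptical potential argument. I would first apply Cauchy-Schwarz to reduce the sum to the $\ell_2$-norm of the weighted norms:
\begin{equation*}
\sum_{t=d+1}^n \|X_t\|_{G_t^{-1}} \;\leq\; \sqrt{(n-d)\sum_{t=d+1}^n \|X_t\|_{G_t^{-1}}^2} \;\leq\; \sqrt{n\sum_{t=d+1}^n \|X_t\|_{G_t^{-1}}^2}.
\end{equation*}
The heart of the proof is then to show $\sum_{t=d+1}^n \|X_t\|_{G_t^{-1}}^2 \leq 2d\log\!\big((d+n)/d\big)$.

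For that sum, I would use the Sherman-Morrison identity to relate each term to a log-determinant increment. Since $G_t = G_{t-1} + X_tX_t^\top$ is invertible for $t \geq d$ (by the initialisation and the assumption $\lambda_{\min}(G_d)\geq c$), one has
\begin{equation*}
\|X_t\|_{G_t^{-1}}^2 \;=\; \frac{\|X_t\|_{G_{t-1}^{-1}}^2}{1+\|X_t\|_{G_{t-1}^{-1}}^2},
\qquad
\frac{\det(G_t)}{\det(G_{t-1})} \;=\; 1+\|X_t\|_{G_{t-1}^{-1}}^2.
\end{equation*}
In particular $\|X_t\|_{G_t^{-1}}^2 \leq \min\{1,\|X_t\|_{G_{t-1}^{-1}}^2\}$, and combining with the elementary inequality $\min\{1,u\}\leq 2\log(1+u)$ valid for $u\geq 0$ yields the telescoping bound
\begin{equation*}
\sum_{t=d+1}^n \|X_t\|_{G_t^{-1}}^2 \;\leq\; 2\sum_{t=d+1}^n \log\!\frac{\det(G_t)}{\det(G_{t-1})} \;=\; 2\log\!\frac{\det(G_n)}{\det(G_d)}.
\end{equation*}

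To finish, I would bound $\det(G_n)/\det(G_d)$ via AM-GM on the eigenvalues of $G_n$. Since $\|X_s\|_2\leq 1$, $\mathrm{tr}(G_n)\leq n$, so $\det(G_n)\leq (\mathrm{tr}(G_n)/d)^d \leq ((n+d)/d)^d$; after using $\det(G_d)\geq 1$ (absorbed into the stated constant, or carried as a harmless $c$-dependent term) this gives $\log(\det(G_n)/\det(G_d)) \leq d\log((n+d)/d)$. Plugging back into the Cauchy-Schwarz bound yields $\sum_{t=d+1}^n \|X_t\|_{G_t^{-1}} \leq \sqrt{2nd\log((n+d)/d)}$.

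The only delicate point is the final determinantal step, where getting the exact form $((n+d)/d)^d$ rather than $(n/d)^d$ depends on how one absorbs the contribution of $\det(G_d)$; the cleanest way is to regularise by noting $\mathrm{tr}(G_n)+d\leq n+d$ after comparing with $V_0=I$, or simply to subsume the $c$-dependence into the implicit constants. Everything else is standard manipulation of the elliptical potential and poses no real obstacle.
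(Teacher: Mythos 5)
Your argument is the standard elliptical potential computation, and the paper itself offers no proof of this lemma (it is imported as a known fact), so there is no "paper route" to compare against; up to the telescoping step your derivation is correct and even slightly wasteful (since $\|X_t\|_{G_t^{-1}}^2 = u/(1+u) = 1 - \det(G_{t-1})/\det(G_t) \le \log\bigl(\det(G_t)/\det(G_{t-1})\bigr)$, you could drop the factor $2$). The genuine gap is exactly the point you flag as "delicate" and then wave away: $\det(G_d) \ge 1$ is \emph{not} implied by the hypotheses --- by Hadamard's inequality $\det(G_d) = \det([X_1\cdots X_d])^2 \le \prod_{s\le d}\|X_s\|_2^2 \le 1$ always, with equality only for orthonormal initial vectors, and $\lambda_{\min}(G_d)\ge c$ only gives $\det(G_d)\ge c^d$. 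What your telescoping actually proves is $\sum_{t=d+1}^n\|X_t\|^2_{G_t^{-1}} \le \log\bigl(\det(G_n)/\det(G_d)\bigr) \le d\log\bigl(\tfrac{n}{d}\bigr) + d\log\bigl(\tfrac1c\bigr)$, and the extra $d\log(1/c)$ cannot be "absorbed into the stated constant" because the stated bound contains no quantity depending on $c$. Your fallback of regularising against $V_0=I$ does not rescue this either: $G_t^{-1}\succeq (G_t+I)^{-1}$ goes the wrong way, and converting back costs a factor $1+1/\lambda_{\min}(G_t)$, which reintroduces $c$.

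This is not a cosmetic issue: the $c$-free inequality in the lemma is actually false. Take $d=1$, $X_1=\epsilon$ with $\epsilon^2 = 2^{-(n-1)}$, and $X_t^2 = \epsilon^2 2^{t-2}$ for $t\ge 2$ (all $X_t^2\le 1/2$). Then $G_t=\epsilon^2 2^{t-1}$, so $\|X_t\|^2_{G_t^{-1}} = 1/2$ for every $t\ge 2$ and $\sum_{t=2}^n \|X_t\|_{G_t^{-1}} = (n-1)/\sqrt{2}$, which is linear in $n$ and exceeds $\sqrt{2n\log(n+1)}$ already for moderate $n$, while the hypothesis holds with $c=2^{-(n-1)}>0$. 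The correct statement either carries the $\det(G_d)$ (equivalently a $+\,d\log(1/c)$ inside the logarithmic factor), or is phrased for the regularised Gram matrix $I + G_t$ as in Lemma 19.4 of Lattimore and Szepesv\'ari, where $\det(V_0)=1$ makes your final step legitimate. For the paper's purposes the discrepancy is immaterial --- the lemma is only used to show a $\limsup$ of a ratio with $\log(n)$ vanishes, and an additive $d\log(1/c)$ under the square root does not affect that --- but your proof as written establishes the corrected bound, not the one stated.
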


\begin{lemma}\label{lemma:xH}
Let $\varepsilon>0$ and denote $T(\hat{\Delta}(n))\in\mathbb R^{|\cA|}$ as the solution of the optimisation problem defined in  Definition \ref{def:optimi}. Then we define 
\begin{equation*}
    S_{\varepsilon}(\hat{\Delta}(n)) = \min\Big\{ \varepsilon f_n, T(\hat{\Delta}(n))\Big\}.
\end{equation*}
Then for all $x\in\cA$,
\begin{equation*}
    \|x\|^2_{H_{S_{\varepsilon}(\hat{\Delta}(n))}^{-1}}\leq \max \Big\{\frac{\varepsilon^2}{f_n}, \frac{\hat{\Delta}_x^2(n)}{f_n}\Big\}.
\end{equation*}
\end{lemma}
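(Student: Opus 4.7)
The lemma asserts that after capping the LP‑optimal allocation $T$ componentwise at level $\varepsilon f_n$ to get $S_\varepsilon=\min\{\varepsilon f_n,T\}$, the resulting information matrix $H_{S_\varepsilon}$ still yields weighted norms bounded by the max of the original confidence‑width bound $\hat{\Delta}_x^2/f_n$ and a coarser term $\varepsilon^2/f_n$. My plan is to introduce an auxiliary allocation whose information matrix is guaranteed to be dominated by $H_{S_\varepsilon}$ in the Loewner order, but for which the desired weighted‑norm bound is essentially built in.

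Concretely, I would define a truncated‑gap vector $\tilde\Delta_y:=\max\{\hat\Delta_y,\varepsilon\}$ and let $\tilde T$ be the optimal solution of the LP in Definition~\ref{def:optimi} with these relaxed gaps. By feasibility of $\tilde T$,
\begin{equation*}
    \|x\|_{H_{\tilde T}^{-1}}^2\leq \frac{\tilde\Delta_x^2}{f_n}=\max\Bigl\{\frac{\varepsilon^2}{f_n},\frac{\hat\Delta_x^2}{f_n}\Bigr\},
\end{equation*}
which is already the target inequality with $H_{\tilde T}$ in place of $H_{S_\varepsilon}$. The task then reduces to showing $H_{S_\varepsilon}\succeq H_{\tilde T}$, or equivalently $(S_\varepsilon)_y\ge \tilde T_y$ componentwise, after which the monotonicity of $H\mapsto H^{-1}$ on positive‑definite matrices closes the argument. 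The LP optimality of $\tilde T$ — whose constraints are uniformly looser than those defining $T$ — should yield $\tilde T_y\le T_y$ whenever the $y$‑th constraint is not saturated by $T$, and the truncation at level $\varepsilon f_n$ in $S_\varepsilon$ is calibrated precisely to the looser constraint that defines $\tilde T$.

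The main obstacle is this comparison step: one must rule out pathological optimisers where $T$ allocates more than $\varepsilon f_n$ on some arm but $\tilde T$ allocates even more. I would attack this via a case split on each arm $y$: if $T_y\le\varepsilon f_n$ then $(S_\varepsilon)_y=T_y$ and one needs only the cheap inequality $T_y\ge\tilde T_y$, which follows from the fact that $\tilde T$ is an optimiser of a problem with a strictly larger feasible set and a pointwise‑smaller objective; if $T_y>\varepsilon f_n$ then $(S_\varepsilon)_y=\varepsilon f_n$ and one must argue $\tilde T_y\le\varepsilon f_n$, which can be read off the KKT/complementary‑slackness conditions of the truncated LP combined with the observation that constraints for arms with $\hat\Delta_y<\varepsilon$ are relaxed by the truncation and hence never force $\tilde T_y$ past the $\varepsilon f_n$ threshold. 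A clean alternative, which sidesteps complementary slackness, is to use the variational identity $x^\top H^{-1}x=\max_z\{2z^\top x-z^\top Hz\}$ and directly bound $x^\top H_{S_\varepsilon}^{-1}x$ in terms of $x^\top H_T^{-1}x$ plus a correction controlled by the capping threshold $\varepsilon f_n$; this yields the dichotomy $\varepsilon^2/f_n$ vs.\ $\hat\Delta_x^2/f_n$ more transparently but at the cost of some additional matrix algebra.
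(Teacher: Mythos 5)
The paper offers no proof of this lemma at all: it simply defers to Lemma~17 of \citet{lattimore2017end}. So there is no internal argument to match your sketch against; the only question is whether your plan would close the gap, and it would not. The reduction in your first step is sound as far as it goes: if you could show $(S_\varepsilon)_y \ge \tilde T_y$ for every $y$, then $H_{S_\varepsilon}\succeq H_{\tilde T}$ and feasibility of $\tilde T$ for the truncated gaps finishes the proof. But that componentwise domination is the entire content of the lemma, and the justification you offer for it is not valid. The fact that $\tilde T$ optimises a program with a larger feasible set (and, incidentally, with pointwise \emph{larger} objective coefficients, since $\max\{\hat\Delta_y,\varepsilon\}\ge\hat\Delta_y$) controls only the optimal \emph{value}, not the individual coordinates of the optimiser. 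Optimal allocations of the two programs can distribute mass across arms in completely different ways, so neither the ``cheap inequality'' $T_y\ge\tilde T_y$ nor the complementary-slackness case delivers the ordering you need.

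Here is the concrete failure mode. Take an optimal arm $y$ with $\hat\Delta_y=0$, so $T_y=\infty$ and $(S_\varepsilon)_y$ equals the cap. In the original program the constraint of a suboptimal arm $x$ strongly correlated with $y$ is satisfied essentially for free through the $y$ direction; in the truncated program, mass on $y$ now costs $\varepsilon$ per unit, $\tilde T_y$ stays near the cap, and the constraint for $x$ must instead be met by loading far more mass onto a third arm $z$ than $T_z$ ever carried, giving $\tilde T_z > T_z = (S_\varepsilon)_z$ and destroying the Loewner comparison. (In $\mathbb R^2$ with arms $e_1$, $e_2$ and a third arm close to $e_1$ this already happens.) Your fallback via the variational identity $x^\top H^{-1}x=\max_z\{2\langle z,x\rangle-\|z\|_H^2\}$ is the right instinct --- it immediately yields $\|x\|^2_{H_{S_\varepsilon}^{-1}}\le 1/(S_\varepsilon)_x$, which disposes of every arm whose own allocation is capped --- but the uncapped arms are where all the work lies, and the sketch does not do it. Finally, note that the statement as printed is not self-consistent: the cap $\varepsilon f_n$ has to be read as $f_n/\varepsilon^2$ for the conclusion $\varepsilon^2/f_n$ to be attainable and for the lemma to be usable as invoked in the proof of Lemma~\ref{lemma:exploration_regret}; any proof must begin by fixing the statement.
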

This is Lemma 17 in the book of \citet{lattimore2017end}, where the reader can also find the proof.

\begin{lemma}\label{lem:cont}
Suppose that $T^m_x(\cdot)$ is uniquely defined at $\Delta$. Then it is continuous at $\Delta$.
\end{lemma}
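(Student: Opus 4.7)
\textbf{Proof plan for \cref{lem:cont}.} The statement is a standard continuity-of-argmin result from parametric optimisation and I would prove it by the argument underlying Berge's maximum theorem, tailored to our setting. Write the optimisation data as follows: let $F(\tilde{\Delta})\subseteq [0,\infty]^{|\cA|}$ be the feasible set consisting of tuples $T=(T_x^m)$ for which $H_T=\sum_{m,x}T_x^m xx^{\top}$ is invertible and $\|x\|_{H_T^{-1}}^2\leq (\tilde{\Delta}_x^m)^2/f_n$ for every $x\in\cA^m$, $m\in[M]$; and let $C(\tilde{\Delta},T)=\sum_{m,x}T_x^m\tilde{\Delta}_x^m$ with value function $J(\tilde{\Delta})=\inf_{T\in F(\tilde{\Delta})}C(\tilde{\Delta},T)$. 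Assume $T(\Delta)$ is the unique minimiser at $\Delta$; I must show that any sequence $\tilde{\Delta}_n\to\Delta$ with optimal $T(\tilde{\Delta}_n)\in F(\tilde{\Delta}_n)$ satisfies $T(\tilde{\Delta}_n)\to T(\Delta)$.

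The plan is to first establish continuity of the value function and then deduce continuity of the argmin via uniqueness and compactness. For upper semicontinuity $\limsup_n J(\tilde{\Delta}_n)\leq J(\Delta)$, I would start from the optimal $T(\Delta)$ and scale it by a factor $(1+\epsilon_n)$ with $\epsilon_n\downarrow 0$ chosen so that the perturbed $T$ is feasible for $\tilde{\Delta}_n$ for all sufficiently large $n$; this uses that the map $T\mapsto\|x\|_{H_T^{-1}}^2$ is continuous where $H_T$ is invertible and that $\tilde{\Delta}_n\to\Delta$ makes the right-hand sides converge. The cost of this perturbed feasible point tends to $C(\Delta,T(\Delta))=J(\Delta)$. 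For lower semicontinuity $\liminf_n J(\tilde{\Delta}_n)\geq J(\Delta)$, I would take any subsequential accumulation point $T^\infty$ of $T(\tilde{\Delta}_n)$, note by continuity of the constraint that $T^\infty\in F(\Delta)$, and deduce $J(\Delta)\leq C(\Delta,T^\infty)=\lim_n C(\tilde{\Delta}_n,T(\tilde{\Delta}_n))=\liminf_n J(\tilde{\Delta}_n)$.

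Once $J$ is shown continuous at $\Delta$, I would argue that every subsequence of $T(\tilde{\Delta}_n)$ has a further subsequence converging in the product topology on $[0,\infty]^{|\cA|}$. Boundedness on each coordinate $(m,x)$ with $\Delta_x^m>0$ follows from $T(\tilde{\Delta}_n)_x^m\tilde{\Delta}_{n,x}^m\leq J(\tilde{\Delta}_n)$ and the fact that $\tilde{\Delta}_{n,x}^m$ is eventually bounded away from zero; coordinates with $\Delta_x^m=0$ (optimal arms) do not enter the cost, so one can reduce to a canonical (e.g.\ minimal) choice bounded by what the constraints require, invoking the spanning assumption on $\cA^m$. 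Any limit point $T^\infty$ is then feasible for $\Delta$ and attains cost $J(\Delta)$ by the value continuity step, so by uniqueness $T^\infty=T(\Delta)$. Since every subsequence has a further subsequence converging to the same limit $T(\Delta)$, the full sequence converges, giving continuity.

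The main obstacle I anticipate is the bookkeeping near the boundary of $[0,\infty]^{|\cA|}$: (i) dealing with coordinates of $T$ that can be infinite because the corresponding arm is optimal at $\Delta$, which requires either working in the two-point compactification with the convention $0\cdot\infty=0$ or canonicalising those coordinates away; and (ii) making the inflation argument for upper semicontinuity robust when some $\tilde{\Delta}_{n,x}^m$ approach $\Delta_x^m=0$, since the relevant constraint then has a vanishing right-hand side. The uniqueness hypothesis is used in a crucial way here: without it, different subsequential limits of $T(\tilde{\Delta}_n)$ might all be optimal but distinct, and only the correspondence (not the function) would be continuous.
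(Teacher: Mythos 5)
Your plan is correct and follows essentially the same route as the paper's proof: continuity of the optimal value, compactness to extract cluster points of $T(\tilde{\Delta}_n)$, feasibility of any cluster point by continuity of the constraints, and uniqueness to force every cluster point to equal $T(\Delta)$, with the infinite coordinates on optimal arms handled by noting that the optimal actions for $\tilde{\Delta}_n$ eventually coincide with those for $\Delta$. The only difference is presentational — the paper argues by contradiction and asserts value continuity as "easy to check," whereas you spell out the inflation argument for upper semicontinuity — so your write-up would, if anything, be slightly more complete.
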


\begin{proof}
Suppose it is not continuous. Then there exists a sequence $(\Delta_n)_{n=1}^\infty$ 
with $\lim_{n\to\infty} \norm{\Delta_n - \Delta} = 0$ and for which $\lim_{n\to\infty} T^m_x(\Delta_n) \neq T^m_x(\Delta)$ for some $m$ and $x \in \cA^m$.
Since $\Delta_n \to \Delta$ it follows that for sufficiently large $n$ the optimal actions with respect to $\Delta_n$ are the same as $\Delta$.
Hence, for sufficiently large $n$, by the definition of the optimisation problem,
\begin{align*}
T^m_{x^*_m}(\Delta_n) = \infty = T^m_{x^*_m}(\Delta) \,.
\end{align*}
Therefore there exists a context $m$ and suboptimal action $x \neq x^*_m$ such that $\lim_{n\to\infty} T^m_x(\Delta_n) \neq T^m_x(\Delta)$.
It is easy to check that the value of the optimisation problem is continuous. Specifically, that
\begin{align*}
\lim_{n\to\infty} \sum_{m=1}^M \sum_{x \in \cA^m} T^m_x(\Delta_n) = \sum_{m=1}^M \sum_{x \in \cA^m} T^m_x(\Delta)\,.
\end{align*}
Hence $\limsup_{n\to\infty} T^m_x(\Delta_n) < \infty$ for $x \neq x^*_m$. 
Therefore a compactness argument shows there exists a cluster point $S$ of the allocation $(T(\Delta_n))_{n=1}^\infty$ with $S^x_m \neq T^x_m(\Delta)$ for some
$m$ and $x \neq x^*_m$. And yet by the previous display
\begin{align*}
\sum_{m=1}^M \sum_{x \in \cA^m} S^m_x = \sum_{m=1}^M \sum_{x \in \cA^m} T^m_x(\Delta)\,.
\end{align*}
Since the constraints of the optimisation problem are continuous it follows that $S$ also satisfies the constraints in the optimisation problem and so
$S \neq T(\Delta)$ is another optimal allocation, contradicting uniqueness.
Therefore $T^m_x(\cdot)$ is continuous at $\Delta$.
\end{proof}

\section{Additional Experiments}\label{sec:add_exp}

In this section, we consider two more experiment settings in Figure \ref{plt_addition}.

\textbf{1. Small size action set.} We conduct the experiments with the number of action set equal to 5. Comparing with large size action set (Section \ref{sec:large}), we found that OAM still outperforms OSSB but the improvement is smaller, as one might expect.

\textbf{2. Randomly generated $\theta$.} For each replication, $\theta$ is randomly generated from multivariate normal with variance 10 and we normalise $\theta$ such that its $\ell_2$ norm is 1. OAM still outperforms OSSB for randomly generated $\theta$. In addition, we compare with the heuristic LinTS (remove all the variance blowup factors and use a Gaussian prior). We find that the heuristic LinTS enjoys the best performance by a modest margin. Analysing heuristic LinTS, however, remains a fascinating open problem. As far as we are aware, it is not known whether or not it even achieves sublinear regret in the worst case.

 \begin{figure}
	\centering
		\includegraphics[scale=0.35]{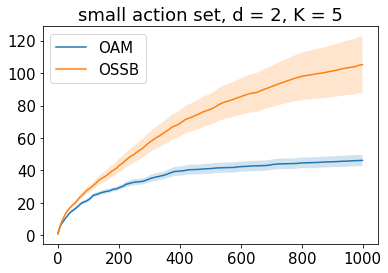}
		\includegraphics[scale=0.35]{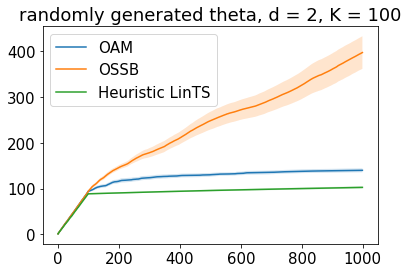}
		\vspace{-0.13in}
\caption{The left panel is for small size action set and the right panel is for randomly generated $\theta$. The results are averaged over $100$ realisations. }\label{plt_addition}
\end{figure}

\end{document}